\renewcommand{\P}{\mathbb{P}}   
\newcommand{\trans}{\mathcal{P}}
\newcommand{\E}{\mathbb{E}}     
\renewcommand{\S}{\mathcal{S}}  
\newcommand{\A}{\mathcal{A}}    
\newcommand{\Reals}{\mathbb{R}} 
\renewcommand{\r}{r}            
\newcommand{\influence}{\mathcal{I}}
\newcommand{\truereward}{\r^*}                
\newcommand \defn {\mathrel{\triangleq}}
\newcommand{\feasible}{\mathcal{R}} 
\newcommand{\optPi}{\Pi^{\texttt{opt}}}
\newcommand{\loss}{\mathscr{L}} 
\DeclareMathOperator*{\argmax}{arg\,max}
\DeclareMathOperator*{\spn}{span}
\DeclareMathOperator{\Aff}{Aff}
\newcommand{\acta}{a}
\newcommand{\actb}{b}
\newcommand{\minset}{\Aff} 
\providecommand{\customgenericname}{}
\newcommand{\newcustomtheorem}[2]{%
  \newenvironment{#1}[1]
  {%
   \renewcommand\customgenericname{#2}%
   \renewcommand\theinnercustomgeneric{##1}%
   \innercustomgeneric
  }
  {\endinnercustomgeneric}
}
\theoremstyle{plain}
\newtheorem{remark}{Remark}
\newtheorem{theorem}{Theorem}
\newtheorem{proposition}{Proposition}
\newtheorem{lemma}{Lemma}
\newtheorem{corollary}{Corollary}
\theoremstyle{definition}
\newtheorem{definition}{Definition}
\icmltitlerunning{Interactive Inverse Reinforcement Learning for Cooperative Games}
\begin{document}

\twocolumn[
\icmltitle{Interactive Inverse Reinforcement Learning 
            for Cooperative Games}




\begin{icmlauthorlist}
\icmlauthor{Thomas Kleine Büning}{yyy}
\icmlauthor{Anne-Marie George}{yyy}
\icmlauthor{Christos Dimitrakakis}{yyy,zzz,xxx}
\end{icmlauthorlist}

\icmlaffiliation{yyy}{Department of Informatics, University of Oslo, Oslo, Norway}
\icmlaffiliation{zzz}{Department of Computer Science, University of Neuchatel, Neuchatel, Switzerland}
\icmlaffiliation{xxx}{Department of Computer Science and Engineering, Chalmers University of Technology, Gothenburg, Sweden}

\icmlcorrespondingauthor{Thomas Kleine Büning}{thomkl@ifi.uio.no}

\icmlkeywords{Machine Learning, Reinforcement Learning, Inverse Reinforcement Learning, Preference Learning, Cooperative, Multi-Agent, ICML}

\vskip 0.3in
]



\printAffiliationsAndNotice{}  

\newif\iflong
\longtrue   
\longfalse

\begin{abstract}
We study the problem of designing autonomous agents that can learn to cooperate effectively with a potentially suboptimal partner while having no access to the joint reward function. This problem is modeled as a cooperative episodic two-agent Markov decision process. We assume control over only the first of the two agents in a Stackelberg formulation of the game, where the second agent is acting so as to maximise expected utility given the first agent's policy. 
How should the first agent act in order to learn the joint reward function as quickly as possible and so that the joint policy is as close to optimal as possible?
We analyse how knowledge about the reward function can be gained in this interactive two-agent scenario. 
We show that when the learning agent's policies have a significant effect on the transition function, the reward function can be learned efficiently. 


\end{abstract}

\section{Introduction}
Recent applications of autonomous systems in our daily lives show that autonomous agents are no longer deployed in isolation only, but in situations where they are in close interaction with humans. To facilitate successful and safe cooperation between autonomous systems and humans, we need to design agents that can learn about human preferences as well as adapt to suboptimal human behaviour. We focus on the situation where the autonomous agent and the human simultaneously act in the same environment. As a result, observed human behaviour, which could be used to infer preferences, depends on the learning agent's actions. This leads to the problem of learning preferences and intentions from interactions. Learning in these interactive scenarios brings its own challenges, but also significant benefits as we will see in the following.




In this paper, we consider the problem of learning to cooperate with a potentially suboptimal partner while having no access to the joint reward function. 
This problem is modeled as a {cooperative} episodic Markov Decision Process (MDP) between two agents $\A_1$ and $\A_2$.
While agent $\A_2$ (the human) knows the joint reward function, we take the perspective of agent $\A_1$ (the learner) that has to cooperate with $\A_2$ without knowing or observing the rewards. 
As an example, consider a maze in which the human tries to reach a target while the learning agent can unlock doors to help the human move, but without knowing the precise target location.
We focus on the Stackelberg formulation of the game, in which at the beginning of each episode the learner commits to a policy before the human does. 
This allows us to view the learning agent as a \emph{designer of environments} that the human operates in. For instance, when the learning agent's actions correspond to unlocking doors in a grid world, then, in the Stackelberg game, we can interpret the learner's policy as choosing a maze layout, which is communicated to the human at the beginning of the episode and in which she operates.

Inverse Reinforcement Learning (IRL) \citep{russell1998learning} can be used to infer the reward function of an agent from observations of that agent's behaviour, which is assumed to be (near-)optimal. In our case, the learner also obtains observations of the human's behaviour through interactions, which could then be used to infer the joint reward function. However, 
the human's actions, e.g.\ the path taken in a maze, depend on the learner's policy, e.g.\ the maze layout, so that in contrast to the standard IRL formulation the learner now \emph{actively influences} the demonstrations of the human expert. 
This leads to an interesting Interactive IRL setting, where the learner can actively seek information about the joint reward function by playing specific policies. 
In this paper, we analyse how to infer the unknown (joint) reward function from interactions with the expert and how the learner should choose its policy so that the two agents collaborate efficiently over both the short and long term. 
We lay an emphasis on the role of the learner as the designer of environments and investigate what environments allow the learning agent to infer the reward function quickly while achieving high levels of cooperation. 



\paragraph{Outline and Contribution.}
We discuss related work in Section~\ref{sec:related-work} and formally introduce the setting in Section~\ref{section:setting}. 
Section~\ref{section:cooperating_with_optimal} considers the case where $\A_2$ plays \emph{optimally}. 
We show how to learn about the reward function from interactions with $\A_2$ and prove the existence of ideal reward learning environments.
We then construct an algorithm that is no-regret under mild assumptions. 
Section~\ref{sec:suboptimal} considers the case where $\A_2$ responds \emph{suboptimally}.
In Section~\ref{subsection:bayesian_IRL}, we adapt conventional Bayesian IRL methods for estimating the reward function to our setting. We then analyse optimal commitment strategies for cooperating with suboptimal followers in Section~\ref{subsection:planning_suboptimal}.
Section~\ref{sec:experiments} describes the experiments, which we perform on random MDPs and specially constructed maze problems. 
Our experiments support our theoretical results and show that the interactive nature of our setting allows the learning agent to obtain a much better estimate of the reward function (compared to the standard IRL setting). We thus achieve better cooperation by intelligently probing the human's responses. Future work is discussed in Section~\ref{section:future_work}. Finally, omitted proofs, experimental details and algorithms are collected in the Appendix. 
\section{Related Work} 
\label{sec:related-work}
Since our setting requires
(a) inferring the joint reward function, as in IRL, and (b) collaborating with a potentially suboptimal agent, in this section we present related work in those two domains.

\paragraph{Inverse Reinforcement Learning.} 
IRL \citep{russell1998learning} aims to find a reward function that explains observed behaviour of an agent. We face the same problem, with the main difference being that \emph{two} agents act in the environment simultaneously, one of which (the human) knows the reward function and the other (the learner) does not. 
Our algorithm for the case when $\A_2$ is optimal is based on a characterisation of reward functions consistent with an optimal policy, similarly to~\citet{ng2000IRL}.
We extend their characterisation to our interactive setting and prove the existence of ideal (reward) learning environments. 
\citet{ramachandran2007BIRL} adopt a Bayesian perspective to the IRL problem as it provides a principled way to reason under uncertainty. The Bayesian formulation of the IRL problem can naturally account for suboptimal demonstrations as well as partial information and we will show how to translate the Bayesian approach to our interactive IRL setting. 

\citet{hadfield2016cooperative} introduce the problem of cooperative IRL in which a robot must cooperate with a human but does not initially know the reward function. 
Their work focuses on apprenticeship learning, where the robot and the human \emph{take turns} demonstrating and performing a task. In particular, they examine the problem of calculating optimal human demonstrations for the robot to observe.
Instead, we consider the situation when the agents \emph{interact} by simultaneously acting in the same environment.
Our setting also notably differs from apprenticeship learning \citep{abbeel2004apprenticeship} and imitation learning \citep{ratliff2006maximum} more generally in that our goal is {\em not} to mimic the behaviour of $\A_2$, as effective cooperation between $\A_1$ and $\A_2$ may require both agents to perform entirely different tasks. 
\citet{nikolaidis2013human} consider a cross-training approach in which a human expert and a robot repeatedly switch roles. In the first of two phases, the expert operates in an environment, which is influenced by the robot. The learner then observes the expert and updates its estimates of the reward function. In the second phase, the robot then demonstrates the learned policy while the expert influences the transitions. Crucially, in this approach the human steers the learning of the robot similar to teaching approaches for IRL \citep{brown2019machine, parameswaran2019interactive}. In contrast, we consider the situation where the learner actively seeks information from the human over whom we have no control. 
\citet{natarajan2010multi} consider a multi-agent extension of IRL in which the learner observes multiple experts maximising a joint reward function. Similarly, \citet{lin2019multi} address the problem of multi-agent IRL in certain general-sum games. 
In contrast to their work, we consider the case where the learner is {\em not} a passive observer, but interacts with the other agent and thereby influences what observations it collects. 
%

\citet{zhang2008enabling} and \citet{zhang2009policy} consider the problem of \emph{environment design}: how to modify an environment so as to influence an agent's decisions. 
They analyse how to construct \emph{reward incentives} to induce a particular policy when the reward function of the acting agent is unknown. In our setting, we can also view the learner as a designer of environments that the human operates in, however, with the difference that the learner influences transitions, but not the underlying reward function. Moreover, our goal is generally not to steer the human towards certain behaviour, but rather to learn from and cooperate with a human expert.

\paragraph{Cooperating with suboptimal partners.}
In the context of human-AI collaboration, there have been recent efforts addressing the problem of cooperating with a potentially suboptimal partner \emph{when the reward function is known.}
In particular, \citet{dimitrakakis2017multi} and \citet{radanovic2019learning} consider a setting where the human responds suboptimally to the learning agent's policy. The former focuses on a single-stage Stackelberg game, while the latter on an online learning variant of the problem. However, in both cases the learning agent knows the human's reward function. 

Our work also has some links to the problem of optimal commitment in Stackelberg games \citep{conitzer2006computing, letchford2012computing}. While prior work assumes optimal responses and a potentially competitive game, we focus on finding optimal commitment strategies when playing with a {\em suboptimal} follower in a strictly {\em cooperative} setting. 

\section{Setting}\label{section:setting}
We model this problem as a cooperative two-agent MDP $(\S, A_1, A_2, \trans, \r, \gamma)$ between agents $\A_1$ and $\A_2$, where $\S$ denotes a finite state space, $A_i$ a finite action space of agent $\A_i$ with $i \in \{1,2\}$, 
$\trans: \S \times A_1 \times A_2 \to \Delta(\S)$ the transition function,
$\r: \S \to \Reals$ the \emph{joint} reward function and $\gamma \in [0, 1)$ the discount factor. We will take the perspective of agent $\A_1$ that, without knowing or observing the joint reward function, aims to cooperate with its partner $\A_2$. We assume that the interaction between the two agents and the environment takes place in a sequence of episodes, where at the beginning of each episode, $\A_1$ commits to a policy $\pi^1$ first. Agent $\A_2$ then responds with a policy $\pi^2$ and the joint policy is executed until the end of the episode.\footnote{Even in MDPs without termination condition, discounting corresponds to episodes that end with probability $1-\gamma$ each time step.} 
We assume that agents $\A_1$ and $\A_2$ know the transition function.

\paragraph{Interaction.} 
The repeated interaction of both agents can be specified as the following \emph{Stackelberg game}. In episode~$t$:
\begin{itemize}[topsep=0pt, itemsep=0pt]
    \item[1)] $\A_1$ commits to policy $\pi^1_t$,
    \item[2)] $\A_2$ observes $\pi^1_t$ and responds with policy $\pi^2_t$,
    \item[3a)] $\A_1$ observes the \emph{fully specified policy} $\pi^2_t$, \emph{or}
    \item[3b)] $\A_1$ observes a \emph{trajectory} $\tau_t$ of (random) length $H+1$, where $\tau_t = (s_0, \acta_0, \actb_0, \dots,  s_H, \acta_H, \actb_H)$. 
\end{itemize}
Alternative 3a) describes the {\em full information} setting in which the complete policy $\pi^2_t$ is available to the learner at the end of each episode. This could, for instance, be the case when interaction takes place for a sufficiently long time in each episode, or the same policy is committed by $\A_1$ several times so that $\A_1$ can effectively observe $\A_2$'s response.  Alternative 3b) corresponds to the {\em partial information} setting, where $\A_1$ interacts with $\A_2$ in a series of $H+1$ time steps and observes the generated trajectory only.

\subsection{Preliminaries}
By a slight abuse of notation, we sometimes refer to functions $\smash{f : \S \to \Reals}$ as vectors $\smash{f \in \Reals^{|\S|}}$. For instance, when convenient, we treat reward functions ${\r : \S \to \Reals}$ as vectors ${\r \in \Reals^{|\S|}}$. 
Let $V_{\pi^1, \pi^2}$ denote the value function under the joint policy $(\pi^1, \pi^2)$. The value function satisfies the Bellman equation, which we can concisely express in matrix-form as 
\begin{align*}
     V_{\pi^1, \pi^2} = (I - \gamma \trans_{\pi^1, \pi^2})^{-1} \r,
\end{align*}

where $\smash{V_{\pi^1, \pi^2}}$ and $\r$ are column vectors and $\trans_{\pi^1, \pi^2}$ is the transition matrix obtained from $\trans$ by marginalising over policy $\smash{(\pi^1, \pi^2)}$. Let $Q_{\pi^1, \pi^2} (s, a, b)$ denote the value of taking joint action $(a,b)$ in state $s$ under policy $(\pi^1, \pi^2)$. 
When $\A_1$ commits to a policy ${\pi^1}$ first, agent $\A_2$ gets to plan under the marginalised transitions $\trans_{\pi^1} : \S \times A_2 \to \Delta (\S)$ given by $\trans_{\pi^1}(s' | s, \actb) = \E_{a \sim \pi^1} [ \trans(s' | s, \acta, \actb)]$. The $Q$-values for $\A_2$ under $\trans_{\pi^1}$ equal $Q_{ \pi^1, \pi^2} (s, \actb) = \E_{\acta\sim \pi^1} [ Q_{\pi^1, \pi^2} (s, \acta, \actb)]$ and we denote the optimal $Q$-value with respect to $\trans_{\pi^1}$ by $Q^*_{\pi^1}(s, \actb) = \max_{\pi^2} Q_{\pi^1, \pi^2} (s, \actb)$.

\paragraph{Behavioural Models for $\A_2$.}
A typical assumption about the behaviour of a partner (or opponent) in game theory \citep{roughgarden_2007} and IRL \citep{ng2000IRL} is that of {optimal} behaviour, sometimes referred to as fully rational behaviour. In our case, this means that in episode $t$, agent $\A_2$ plays an optimal response $\pi^2_t(\pi^1_t)$ to the policy $\pi^1_t$ committed by agent $\A_1$. Note that we will simply write $\pi^2_t$ when the dependence on $\pi^1_t$ is clear from the context. 

We are also interested in the case when $\A_2$ is {suboptimal}. 
A common decision-model for suboptimal human behaviour in IRL \citep{jeon2020reward}, economics \citep{luce1959individual}, and cognitive science \citep{baker2009action} are Boltzmann-rational policies for which the probability of choosing an action is exponentially dependent on its expected value: 
\begin{align*}
    \pi^2 (\actb\mid s, \pi^1) \propto \exp \big(\beta Q_{\pi^1}^* (s, \actb) \big).
\end{align*}
Here, $\beta \geq 0$ is called the inverse temperature of the distribution and indicates how rationally $\A_2$ is behaving. In particular, for $\beta = 0$, $\A_2$ acts uniformly at random, and for $\beta \to \infty$, $\A_2$ acts perfectly rational, i.e.\ optimally in response to $\A_1$'s committed policy. 

\paragraph{Objective and Regret.} 
Agent $\A_1$ aims to maximise the expected sum of discounted rewards by learning about the joint reward function and cooperating with $\A_2$. 
In general, due to the possibly suboptimal nature of $\A_2$, we have that $\max_{\pi^1} V_{\pi^1, \pi^2(\pi^1)} \preceq \max_{\pi^1, \pi^2} V_{\pi^1, \pi^2}$, i.e.\ the value of the game under $\A_2$'s behavioural model is bounded by the value of the joint optimal policy.
For an initial state distribution~$D$, 
we define the value of the optimal commitment strategy as  
$$V^* = \max_{\pi^1} \E_{s_0 \sim D} \big[V_{\pi^1, \pi^2(\pi^1)} (s_0)\big],$$
where $\pi^2(\pi^1)$ denotes the response of $\A_2$ to policy $\pi^1$. 
Note that the optimal value $V^*$ may only be well-defined with respect to a specific initial state distribution as a dominating commitment strategy may fail to exist when $\A_2$ responds suboptimally (see Section~\ref{subsection:planning_suboptimal}). 
We define the (per-episode) regret of playing policy $\pi^1$ as the difference
$\loss (\pi^1) = V^* - \E_{s_0\sim D} [V_{\pi^1, \pi^2(\pi^1)} (s_0)]$.
Similarly, we define the (online) regret of playing policies ${\pi^1_1, \dots, \pi^1_T}$ as the sum
$\loss (\pi^1_1, \dots, \pi^1_T) = \sum_{t=1}^T \loss (\pi^1_t)$. 

\subsection{Interactive IRL}\label{section:online_IRL}

In the classical IRL problem, the learner is able to observe an expert performing a task. The observations are then interpreted as demonstrations of approximately optimal behaviour in a \emph{fixed} single-agent MDP with unknown reward function. Our setting is substantially different, as two agents must collaborate in the same two-agent MDP, with the first agent not knowing the common reward function. As a result, the second agent's demonstrations depend on the first agent's policy and so become {\em context-dependent}. In addition, learning must take place in an {\em online} fashion, as the first agent must adapt its policy to extract information and to better collaborate. 

\paragraph{$\A_1$ as an MDP Designer.} 
When the learner, $\A_1$, commits to a policy $\smash{\pi^1}$ at the beginning of an episode, then\,---\,with knowledge of $\pi^1$\,---\,the expert, $\A_2$, can be seen as planning in a single-agent MDP with transition function~$\smash{\trans_{\pi^1}}$. Consequently, from the perspective of the learner, choosing a policy $\pi^1$ is equivalent to designing single-agent MDPs for the human expert to act in. While the state space, $\A_2$'s action space, the (unknown) reward function as well as the discount factor remain the same across these simplified MDPs, $\A_2$ may face different environment dynamics~$\trans_{\pi^1}$ depending on $\A_1$'s policy. This is in contrast to the standard IRL setting in which demonstrations always take place in the same fixed MDP. An abstract example where the learner creates different environments for the expert to operate in is illustrated in \cref{figure:introductory_example}\textcolor{mydarkblue}{(a)}.

\paragraph{Context-Dependent Responses.}
The learner can now interpret the expert's response to a policy $\pi^1$ as a demonstration in the single-agent MDP $(\S, A_2, \trans_{\pi^1}, \truereward, \gamma)$, where $\truereward$ is the true reward function that is unknown and unobserved by $\A_1$. 
Since $\A_2$ faces possibly different environment dynamics across episodes, we can also expect $\A_2$'s behaviour to vary between episodes. In Figure~\ref{figure:introductory_example}\textcolor{mydarkblue}{(a)}, for instance, the expert adapts their policy 
to the specific maze layout 
created by the learner. As a result, $\A_2$'s responses (and thus demonstrations) become context-dependent in the sense that they always depend on $\A_1$'s policy, i.e.\ the environment that is implicitly generated by $\A_1$.  

In particular, we see that even though the underlying reward function remains the same, the results of IRL methods vary depending on the environment in which demonstrations were provided.
Figure~\ref{figure:introductory_example}\textcolor{mydarkblue}{(b)} also illustrates that reward learning may overfit to specific environment dynamics, which has also been observed by, e.g., \citet{toyer2020Magical}.
While there may exist certain environment dynamics that are better suited for learning rewards, in this paper we focus on designing a sequence of environments, based on past data, to learn the reward function efficiently.

\begin{figure}[t]
    \vspace{0.1cm}
    \colorlet{DarkGreen}{green!25!black!75}
\colorlet{DarkRed}{red!100}

\begin{tikzpicture}
    \node at (-.51, 0.75) {\small (a)};

    \fill[black!65] (0.5, 0.5) -- +(0, .5) -- +(.5, .5) -- +(.5,0) -- cycle;
    \fill[black!65] (1.0, 0.5) -- +(0, .5) -- +(.5, .5) -- +(.5,0) -- cycle;
    \fill[black!65] (1.5, 0.5) -- +(0, .5) -- +(.5, .5) -- +(.5,0) -- cycle;
    \draw[step=0.5cm,color=black!30] (0,0) grid (2.0,1.5);
    \node at (0.25, 0.75) {\small $\A_2$};
    
    \node at (1.75, 1.25) {\small \textcolor{DarkGreen}{{+1}}};
    \draw[-, gray!99] (0.25, 0.95) -- (0.25, 1.25);
    \draw[-to, gray!99] (0.25, 1.25) -- (1.5, 1.25);
\end{tikzpicture}
\hspace{0.1cm}
\begin{tikzpicture}
    \fill[black!65] (1, 0) -- +(0, .5) -- +(.5, .5) -- +(.5,0) -- cycle;
    \fill[black!65] (1, 1) -- +(0, .5) -- +(.5, .5) -- +(.5,0) -- cycle;
    \fill[black!65] (0.5, 1) -- +(0, .5) -- +(.5, .5) -- +(.5,0) -- cycle;
    \draw[step=0.5cm,color=black!30] (0,0) grid (2.0,1.5);
    \node at (0.25, 0.75) {\small $\A_2$};

    \node at (1.75, 1.25) {\small \textcolor{DarkGreen}{{+1}}};
    \draw[-, gray!99] (0.45, 0.75) -- (1.75, 0.75);
    \draw[-to, gray!99] ((1.75, 0.75) -- (1.75, 1);

\end{tikzpicture}
\hspace{0.1cm}
\begin{tikzpicture}
    \node at (0.25, 0.75) {\small $\A_2$};
    \fill[black!65] (0.5, 0.5) -- +(0, .5) -- +(.5, .5) -- +(.5,0) -- cycle;
    \fill[black!65] (1.0, 0.5) -- +(0, .5) -- +(.5, .5) -- +(.5,0) -- cycle;
    \fill[black!65] (1.0, 1.0) -- +(0, .5) -- +(.5, .5) -- +(.5,0) -- cycle;
    \draw[step=0.5cm,color=black!30] (0,0) grid (2.0,1.5);
    
    \node at (1.75, 1.25) {\small \textcolor{DarkGreen}{{+1}}};
    \draw[-, gray!99] (0.25, 0.55) -- (0.25, 0.25);
    \draw[-, gray!99] (0.25, 0.25) -- (1.75, 0.25);
    \draw[-to, gray!99] (1.75, 0.25) -- (1.75, 1);
\end{tikzpicture} 
    \vspace{0.2cm}
    \colorlet{DarkGreen}{green!25!black!75}
\colorlet{DarkRed}{red!100}

\begin{tikzpicture}      
    \node at (-.5, 0.75) {\small (b)};

    \fill[DarkGreen!0.2] (0, 1) -- +(0, .5) -- +(.5, .5) -- +(.5,0) -- cycle;
    \fill[DarkGreen!10.8] (.5, 1) -- +(0, .5) -- +(.5, .5) -- +(.5,0) -- cycle;
    \fill[DarkGreen!27.3] (1, 1) -- +(0, .5) -- +(.5, .5) -- +(.5,0) -- cycle;
    \fill[DarkGreen!70] (1.5, 1) -- +(0, .5) -- +(.5, .5) -- +(.5,0) -- cycle;
    \fill[DarkGreen!0] (0, .5) -- +(0, .5) -- +(.5, .5) -- +(.5,0) -- cycle;
    \fill[DarkGreen!5.62] (.5, .5) -- +(0, .5) -- +(.5, .5) -- +(.5,0) -- cycle;
    \fill[DarkGreen!4.5] (1, .5) -- +(0, .5) -- +(.5, .5) -- +(.5,0) -- cycle;
    \fill[DarkGreen!5.7] (1.5, .5) -- +(0, .5) -- +(.5, .5) -- +(.5,0) -- cycle;
    \fill[DarkGreen!0.77] (0, 0) -- +(0, .5) -- +(.5, .5) -- +(.5,0) -- cycle;
    \fill[DarkGreen!0] (.5, 0) -- +(0, .5) -- +(.5, .5) -- +(.5,0) -- cycle;
    \fill[DarkGreen!0.47] (1, 0) -- +(0, .5) -- +(.5, .5) -- +(.5,0) -- cycle;
    \fill[DarkGreen!1.7] (1.5, 0) -- +(0, .5) -- +(.5, .5) -- +(.5,0) -- cycle;

    \draw[step=0.5cm,color=black!30] (0,0) grid (2.0,1.5);
    \node at (0.25, 0.75) {\small $\A_2$};
    
    
    

    
\end{tikzpicture}
\hspace{0.13cm}
\begin{tikzpicture}
    
    \fill[DarkGreen!0] (0, 1) -- +(0, .5) -- +(.5, .5) -- +(.5,0) -- cycle;
    \fill[DarkGreen!7.2] (.5, 1) -- +(0, .5) -- +(.5, .5) -- +(.5,0) -- cycle;
    \fill[DarkGreen!11.5] (1, 1) -- +(0, .5) -- +(.5, .5) -- +(.5,0) -- cycle;
    \fill[DarkGreen!65] (1.5, 1) -- +(0, .5) -- +(.5, .5) -- +(.5,0) -- cycle;
    \fill[DarkGreen!7] (0, .5) -- +(0, .5) -- +(.5, .5) -- +(.5,0) -- cycle;
    \fill[DarkGreen!15.8] (.5, .5) -- +(0, .5) -- +(.5, .5) -- +(.5,0) -- cycle;
    \fill[DarkGreen!22.4] (1, .5) -- +(0, .5) -- +(.5, .5) -- +(.5,0) -- cycle;
    \fill[DarkGreen!25.6] (1.5, .5) -- +(0, .5) -- +(.5, .5) -- +(.5,0) -- cycle;
    \fill[DarkGreen!3.5] (0, 0) -- +(0, .5) -- +(.5, .5) -- +(.5,0) -- cycle;
    \fill[DarkGreen!8.3] (.5, 0) -- +(0, .5) -- +(.5, .5) -- +(.5,0) -- cycle;
    \fill[DarkGreen!9.23] (1, 0) -- +(0, .5) -- +(.5, .5) -- +(.5,0) -- cycle;
    \fill[DarkGreen!18.5] (1.5, 0) -- +(0, .5) -- +(.5, .5) -- +(.5,0) -- cycle;

    \draw[step=0.5cm,color=black!30] (0,0) grid (2.0,1.5);
    \node at (0.25, 0.75) {\small $\A_2$};
\end{tikzpicture}
\hspace{0.13cm}
\begin{tikzpicture}
    
    \fill[DarkGreen!1.2] (0, 1) -- +(0, .5) -- +(.5, .5) -- +(.5,0) -- cycle;
    \fill[DarkGreen!0] (.5, 1) -- +(0, .5) -- +(.5, .5) -- +(.5,0) -- cycle;
    \fill[DarkGreen!15] (1, 1) -- +(0, .5) -- +(.5, .5) -- +(.5,0) -- cycle;
    \fill[DarkGreen!63] (1.5, 1) -- +(0, .5) -- +(.5, .5) -- +(.5,0) -- cycle;
    \fill[DarkGreen!6] (0, .5) -- +(0, .5) -- +(.5, .5) -- +(.5,0) -- cycle;
    \fill[DarkGreen!10.9] (.5, .5) -- +(0, .5) -- +(.5, .5) -- +(.5,0) -- cycle;
    \fill[DarkGreen!11.3] (1, .5) -- +(0, .5) -- +(.5, .5) -- +(.5,0) -- cycle;
    \fill[DarkGreen!21] (1.5, .5) -- +(0, .5) -- +(.5, .5) -- +(.5,0) -- cycle;
    \fill[DarkGreen!11.5] (0, 0) -- +(0, .5) -- +(.5, .5) -- +(.5,0) -- cycle;
    \fill[DarkGreen!15] (.5, 0) -- +(0, .5) -- +(.5, .5) -- +(.5,0) -- cycle;
    \fill[DarkGreen!15.1] (1, 0) -- +(0, .5) -- +(.5, .5) -- +(.5,0) -- cycle;
    \fill[DarkGreen!19] (1.5, 0) -- +(0, .5) -- +(.5, .5) -- +(.5,0) -- cycle;
    
    \draw[step=0.5cm,color=black!30] (0,0) grid (2.0,1.5);
    \node at (0.25, 0.75) {\small $\A_2$};
        
\end{tikzpicture}
    \caption{(a) $\A_1$ designs a maze for $\A_2$ to navigate in and collect a reward in the top right corner. 
    $\A_2$ behaves differently, i.e.\ chooses a different path, depending on the maze created by $\A_1$.
    (b)~The mean reward function computed using Bayesian IRL \citep{ramachandran2007BIRL} when observing $\A_2$ navigate in each of the three mazes. Dark colours denote higher estimated rewards.}
    \label{figure:introductory_example}
\end{figure}
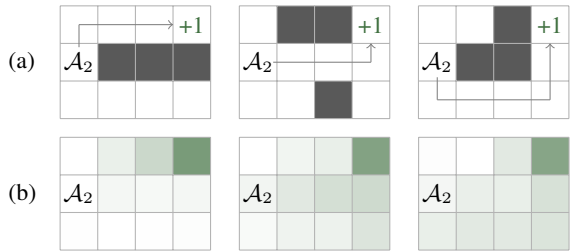

\paragraph{Online Learning.}
As the game progresses, the learner interacts with the expert in a series of episodes, thereby collecting a stream of observations. Then, in order to extract more information as well as to improve cooperation in the next episode, the learner may want to leverage the observations up to episode $t$ to learn about the joint reward function and to inform its decisions in episode $t+1$. 
Naturally, since the learner \emph{actively influences} the demonstrations by the expert, we ask ourselves whether demonstrations under some environment dynamics $\trans_{\pi^1}$ are more informative than others. 
In particular, how much more information (if any) can be gained from demonstrations in unseen environments?  
In the following, we will address these questions both theoretically and empirically. 



\section{Cooperating with Optimal Agents}\label{section:cooperating_with_optimal}
Here we consider the case when $\A_2$ responds optimally to the commitment of $\A_1$. 
In Section~\ref{subsection:IRL_LP}, we characterise the set of \emph{feasible} reward functions, i.e.\ those that are consistent with observed responses, and prove the existence of ideal (reward) learning environments. 
We then describe an algorithm that is no-regret under an assumption on the identifiability of suboptimal behaviour in Section~\ref{subsection: algo-for-opt-responses}. The omitted proofs from this section can be found in Appendix~\ref{appendix:proofs_optimal}.

\subsection{Learning from Optimal Responses}\label{subsection:IRL_LP}
For our theoretical analysis, we focus on the full information setting in which $\A_1$ observes the fully specified policy played by the expert at the end of each episode.
In a first step, we define a \emph{feasible} reward function under $(\pi^1, \pi^2)$ as a reward function for which $\A_2$'s response to the commitment of $\A_1$ is optimal.
\begin{definition}
We say that a reward function $\r$ is {\em feasible} when observing policy $\pi^2$ in response to $\pi^1$ if $\pi^2$ is optimal in the single-agent MDP $(\S, A_2, \trans_{\pi^1}, \r, \gamma)$.
\end{definition}
We now adapt the standard result by \citet{ng2000IRL} to obtain a characterisation of the set of feasible reward functions under policies $\pi^1$ and $\pi^2$. Here, we let $\succeq$ denote element-wise inequality. 

\begin{theorem}[\cite{ng2000IRL}]\label{lemma:feasible_set_characterization}
Let there be an MDP without reward function $(\S, A_1, A_2, \trans, \gamma)$. A reward function $\r$ is feasible under policies $\pi^1$ and $\pi^2$ if and only if 
\begin{align*}
    \big(\trans_{\pi^1, \pi^2} - \trans_{\pi^1, \actb} \big) \big(I - \gamma \trans_{\pi^1, \pi^2}\big)^{-1} \r \succeq 0 \quad \forall   \actb \in A_2,
\end{align*}
where $\trans_{\pi^1, b}$ is the one-step transition matrix under policy $\pi^1$ and action $b \in A_2$. 
\end{theorem}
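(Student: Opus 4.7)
The plan is to essentially reduce the statement to the single-agent result of \citet{ng2000IRL} by noting that, once $\A_1$ commits to $\pi^1$, agent $\A_2$ faces a single-agent MDP with transition kernel $\trans_{\pi^1}$ and reward $r$, for which the feasibility characterisation is already known. The only work is to unpack what this says in matrix form using the notation of the paper.

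First I would write down the Bellman optimality condition. Under $\trans_{\pi^1}$, policy $\pi^2$ is optimal iff for every state $s$ and every action $b \in A_2$,
\begin{equation*}
    V_{\pi^1, \pi^2}(s) \;\geq\; r(s) + \gamma \sum_{s'} \trans_{\pi^1}(s' \mid s, b)\, V_{\pi^1, \pi^2}(s').
\end{equation*}
Since the reward $r(s)$ does not depend on the action, it also equals $V_{\pi^1, \pi^2}(s) - \gamma \sum_{s'} \trans_{\pi^1, \pi^2}(s' \mid s)\, V_{\pi^1, \pi^2}(s')$, so subtracting gives the per-state inequality $\sum_{s'} \bigl(\trans_{\pi^1, \pi^2}(s'\mid s) - \trans_{\pi^1, b}(s'\mid s)\bigr) V_{\pi^1, \pi^2}(s') \geq 0$. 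Stacking over $s$ and writing it in the matrix notation of the preliminaries yields
\begin{equation*}
    (\trans_{\pi^1, \pi^2} - \trans_{\pi^1, b})\, V_{\pi^1, \pi^2} \;\succeq\; 0 \qquad \forall b \in A_2.
\end{equation*}

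Next I would substitute the closed form of the value function. Because $\gamma \in [0,1)$ and $\trans_{\pi^1, \pi^2}$ is row-stochastic, $(I - \gamma \trans_{\pi^1, \pi^2})$ is invertible (its spectral radius is at most $\gamma < 1$, so the Neumann series converges), and the Bellman equation recalled in the preliminaries gives $V_{\pi^1, \pi^2} = (I - \gamma \trans_{\pi^1, \pi^2})^{-1} r$. Plugging this into the previous display yields exactly the stated characterisation.

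For the converse direction, the same chain of equivalences runs backwards: if the matrix inequality holds for every $b \in A_2$, then defining $V \defn (I - \gamma \trans_{\pi^1, \pi^2})^{-1} r$ we recover $V = V_{\pi^1, \pi^2}$ (by construction of the Bellman fixed point), and the inequality says that for every $s, b$ the one-step lookahead under $\trans_{\pi^1, b}$ is no better than following $\pi^2$. This is the Bellman optimality condition for $\pi^2$ in the MDP $(\S, A_2, \trans_{\pi^1}, r, \gamma)$, so $\pi^2$ is optimal and $r$ is feasible.

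There is no real obstacle here; the only subtlety worth being explicit about is that the reward is a function of state only, which is what lets us cancel $r(s)$ on both sides of the Bellman inequality and reduce feasibility to a linear condition on $r$ alone. Everything else is bookkeeping that translates the standard Ng--Russell argument into the $(\pi^1, \pi^2, \trans_{\pi^1})$ notation of this paper.
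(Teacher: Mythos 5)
Your proposal is correct and takes essentially the same route as the paper: the paper's proof simply says to substitute $\trans_{\pi^1}$ for the transition matrix in the argument of \citet{ng2000IRL}, and your write-up is exactly that argument spelled out (Bellman optimality inequality, cancellation of the state-only reward, and substitution of $V_{\pi^1,\pi^2} = (I - \gamma\trans_{\pi^1,\pi^2})^{-1}\r$).
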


Since $\A_1$ and $\A_2$ repeatedly interact in a series of episodes, a reward function is feasible after $t$ episodes if and only if it is feasible under all policies $\pi^1_1, \dots, \pi^1_t$ and corresponding responses $\pi^2_1, \dots, \pi^2_t$. As an immediate consequence of Theorem~\ref{lemma:feasible_set_characterization}, we then obtain the following characterisation of reward functions that are feasible under multiple observations.
\begin{corollary}\label{corollary:feasible_set}
Let there be an MDP without reward function $(\S, A_1, A_2, \trans, \gamma)$. A reward function $\r$ is feasible when observing policies $(\pi^1_1, \pi^2_1), \dots, (\pi^1_t, \pi^2_t)$ if and only if
\begin{align*}
\begin{split}
    & \big(\trans_{\pi^1_1, \pi^2_1} - \trans_{\pi^1_1, \actb} \big) \big(I - \gamma \trans_{\pi^1_1, \pi^2_1}\big)^{-1} \r \succeq 0 \quad \forall \actb \in A_2 , \\
    & \qquad \qquad \qquad \dots \\
    &  \big(\trans_{\pi^1_t, \pi^2_t} - \trans_{\pi^1_t, \actb} \big) \big(I - \gamma \trans_{\pi^1_t, \pi^2_t}\big)^{-1} \r \succeq 0 \quad \forall \actb \in A_2 .
\end{split}
\end{align*}
\end{corollary}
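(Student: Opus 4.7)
The plan is to observe that this corollary is essentially an immediate consequence of Theorem~\ref{lemma:feasible_set_characterization} combined with the extended definition of feasibility over multiple episodes stated in the sentence preceding the corollary. There is no substantive technical content beyond Theorem~\ref{lemma:feasible_set_characterization}; the work amounts to carefully unpacking definitions and taking a conjunction.

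First I would formalise what ``feasible when observing $(\pi^1_1,\pi^2_1),\dots,(\pi^1_t,\pi^2_t)$'' means. A reward function $\r$ is feasible after $t$ episodes precisely when, for every $i \in \{1,\dots,t\}$, the response $\pi^2_i$ is optimal in the induced single-agent MDP $(\S, A_2, \trans_{\pi^1_i}, \r, \gamma)$. In other words, joint feasibility is the conjunction of the $t$ single-episode feasibility conditions, one per observed pair $(\pi^1_i,\pi^2_i)$.

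Next I would invoke Theorem~\ref{lemma:feasible_set_characterization} separately for each episode $i$. Applied to the pair $(\pi^1_i, \pi^2_i)$, the theorem states that $\pi^2_i$ is optimal in $(\S, A_2, \trans_{\pi^1_i}, \r, \gamma)$ if and only if
\begin{align*}
    \big(\trans_{\pi^1_i, \pi^2_i} - \trans_{\pi^1_i, \actb}\big)\big(I - \gamma \trans_{\pi^1_i, \pi^2_i}\big)^{-1} \r \succeq 0 \quad \forall \actb \in A_2.
\end{align*}
Taking the conjunction over $i = 1,\dots,t$ yields precisely the system of inequalities displayed in the corollary, establishing both directions of the equivalence simultaneously.

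The only point that requires any care, and what I would flag as the ``main obstacle'' (though it is minor), is ensuring that the matrix inverse $(I - \gamma \trans_{\pi^1_i, \pi^2_i})^{-1}$ is well-defined for each $i$; this follows from $\gamma < 1$ and the fact that $\trans_{\pi^1_i, \pi^2_i}$ is a stochastic matrix, so its spectral radius is at most $1$ and $I - \gamma \trans_{\pi^1_i, \pi^2_i}$ is invertible. With that observation, the corollary follows immediately by applying Theorem~\ref{lemma:feasible_set_characterization} once per episode and intersecting the resulting feasibility sets.
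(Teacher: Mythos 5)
Your proposal is correct and matches the paper's treatment exactly: the paper likewise defines feasibility after $t$ episodes as the conjunction of the single-episode feasibility conditions and presents the corollary as an immediate consequence of Theorem~\ref{lemma:feasible_set_characterization} applied once per observed pair. Your added remark on the invertibility of $(I - \gamma \trans_{\pi^1_i, \pi^2_i})$ is a harmless and correct extra detail.
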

We denote the set of reward functions that satisfy these constraints by $\feasible_t = \feasible((\pi^1_1, \pi^2_1), \dots, (\pi^1_t, \pi^2_t))$. 



The IRL problem is an inherently ill-posed problem as degenerate solutions such as constant reward functions explain any observed behaviour. 
In fact, we see that any reward function $\smash{\r \in \Reals^{|\S|}}$ is indistinguishable from its positive affine transformations $\Aff (\r) = \{ \lambda_1 \r + \lambda_2 \mathbf{1} \colon \lambda_1 \geq 0, \lambda_2 \in \Reals\}$. 
\begin{lemma}\label{lemma:minimal feasible set}
If $\A_2$ responds optimally to the commitment of $\A_1$, any reward function $\r$ is indistinguishable from its positive affine transformations, i.e.\ $\r$ is feasible iff every $\bar \r \in \Aff (\r)$ is feasible.
\end{lemma}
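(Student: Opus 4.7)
The plan is to apply the feasibility characterization from Theorem~\ref{lemma:feasible_set_characterization} (or its multi-observation form in Corollary~\ref{corollary:feasible_set}) directly, and exploit the linearity of the constraint together with two elementary facts: $\trans_{\pi^1,\pi^2}$ and each $\trans_{\pi^1,\actb}$ are row-stochastic, and $(I-\gamma\trans_{\pi^1,\pi^2})^{-1}$ sends the all-ones vector to a scalar multiple of itself. Because $r \in \Aff(r)$ (via $\lambda_1=1,\lambda_2=0$), the ``if'' direction is trivial, so the whole content is in showing that feasibility is preserved under every transformation $r \mapsto \lambda_1 r + \lambda_2 \mathbf{1}$ with $\lambda_1 \geq 0$.

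Concretely, I would first observe that for any row-stochastic matrix $M$ we have $M\mathbf{1} = \mathbf{1}$, so $(I-\gamma\trans_{\pi^1,\pi^2})\mathbf{1} = (1-\gamma)\mathbf{1}$ and hence $(I-\gamma\trans_{\pi^1,\pi^2})^{-1}\mathbf{1} = \frac{1}{1-\gamma}\mathbf{1}$. Then, since both $\trans_{\pi^1,\pi^2}$ and $\trans_{\pi^1,\actb}$ are row-stochastic, $(\trans_{\pi^1,\pi^2} - \trans_{\pi^1,\actb})\mathbf{1} = \mathbf{0}$. Chaining these, the constant-shift term evaluates to
\begin{equation*}
    \big(\trans_{\pi^1, \pi^2} - \trans_{\pi^1, \actb} \big) \big(I - \gamma \trans_{\pi^1, \pi^2}\big)^{-1} (\lambda_2 \mathbf{1}) = \tfrac{\lambda_2}{1-\gamma}\big(\trans_{\pi^1, \pi^2} - \trans_{\pi^1, \actb} \big)\mathbf{1} = \mathbf{0}.
\end{equation*}

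Next, I would use linearity: for any $\bar r = \lambda_1 r + \lambda_2 \mathbf{1}$ and every $\actb \in A_2$,
\begin{equation*}
    \big(\trans_{\pi^1, \pi^2} - \trans_{\pi^1, \actb} \big) \big(I - \gamma \trans_{\pi^1, \pi^2}\big)^{-1} \bar r = \lambda_1 \big(\trans_{\pi^1, \pi^2} - \trans_{\pi^1, \actb} \big) \big(I - \gamma \trans_{\pi^1, \pi^2}\big)^{-1} r.
\end{equation*}
If $r$ is feasible then the right-hand side is $\succeq 0$ because $\lambda_1 \geq 0$, so $\bar r$ is feasible as well. The argument applies identically to each of the $t$ constraint blocks in Corollary~\ref{corollary:feasible_set}, so feasibility under arbitrarily many observations is preserved. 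Combined with the trivial inclusion $r \in \Aff(r)$, this gives the ``iff''.

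There is no real obstacle here: the proof is a one-line linear-algebra computation once one notices the stochasticity of the transition matrices. The only point that requires mild care is that $(I-\gamma\trans_{\pi^1,\pi^2})^{-1}$ is indeed well-defined and commutes with the action on $\mathbf{1}$, which follows from $\gamma<1$ and the Neumann series (or directly from $(I-\gamma\trans_{\pi^1,\pi^2})\mathbf{1} = (1-\gamma)\mathbf{1} \neq 0$). The requirement $\lambda_1 \geq 0$ (rather than $\lambda_1 \in \mathbb{R}$) is essential: negative scaling would flip the inequality and destroy feasibility, which explains why $\Aff$ is defined as a \emph{positive} affine hull.
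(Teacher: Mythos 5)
Your proof is correct. It differs from the paper's in where the linearity is applied: the paper works directly from the definition of feasibility via the Bellman equation, showing $V_{\pi^1,\pi^2}(\lambda_1 \r + \lambda_2 \mathbf{1}) = \lambda_1 V_{\pi^1,\pi^2}(\r) + \lambda_2(1-\gamma)^{-1}\mathbf{1}$ and concluding that the set of policies maximising the value is unchanged, whereas you verify that the polyhedral cone of Theorem~\ref{lemma:feasible_set_characterization} (and its intersection over observations in Corollary~\ref{corollary:feasible_set}) is invariant under $\r \mapsto \lambda_1 \r + \lambda_2\mathbf{1}$ with $\lambda_1 \geq 0$. Both hinge on the same two facts\,---\,$(I-\gamma\trans_{\pi^1,\pi^2})^{-1}\mathbf{1} = (1-\gamma)^{-1}\mathbf{1}$ and the vanishing of differences of stochastic matrices on $\mathbf{1}$\,---\,so the computations are essentially equivalent. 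The paper's route is marginally more self-contained (it does not pass through the Ng--Russell characterisation, so it is agnostic to how $\pi^2$ is represented) and it directly delivers the slightly stronger statement, used later, that $\r$ and $\bar\r$ induce the \emph{same} set of optimal responses; your route has the virtue of making explicit that the feasible set is a cone closed under adding constants, which is the form in which the lemma is actually consumed by the LP in Algorithm~\ref{algorithm:full_info_opt_behaviour}. Your handling of the trivial converse via $\r \in \Aff(\r)$ and your remark on why $\lambda_1 \geq 0$ is needed are both sound.
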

In particular, Lemma~\ref{lemma:minimal feasible set} states that all positive affine transformations of the true reward function $\truereward$ are always feasible.\footnote{We generally denote the true underlying reward function by~$\truereward$. Note that $\truereward$ is unknown to and unobserved by $\A_1$.} 
However, since any reward function in $\Aff(\truereward)$ induces the same optimal (joint) policy, finding it is sufficient for optimally solving the IRL problem. 

Perhaps surprisingly, we find that if $\A_1$'s policies can induce any transition matrix for $\A_2$, then there exists a policy $\pi^1$ such that its optimal response $\pi^2(\pi^1)$ can only be explained by positive affine transformations of the true reward function.
\begin{theorem}\label{proposition:ideal_environment}
(A) If $\A_2$ responds optimally and (B) if for all $\mathcal{T}: \S \times A_2 \to \Delta(\S)$ there exists ${\pi^1}$ such that $\trans_{\pi^1} \equiv \mathcal{T}$, then there exists a policy ${\pi^1}$ with optimal response ${\pi^2}$ such that the feasible set of reward functions under $(\pi^1, \pi^2)$ is given by $\minset(\truereward)$, i.e.\ $\feasible ((\pi^1, \pi^2)) = \minset(\truereward)$. 
\end{theorem}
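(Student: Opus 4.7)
The plan is by explicit construction. The inclusion $\Aff(\truereward)\subseteq\feasible((\pi^1,\pi^2))$ is automatic from \cref{lemma:minimal feasible set}, so I only need to exhibit $\pi^1$ (or equivalently, by assumption~(B), a target transition function $\mathcal{T}\colon\S\times A_2\to\Delta(\S)$) for which the reverse inclusion holds. The statement is degenerate when $\truereward$ is a constant vector (every $\pi^2$ is then optimal for every reward), so I assume $\truereward\notin\spn(\mathbf{1})$, which makes $U\defn\truereward^\perp\cap\mathbf{1}^\perp$ a genuine $(|\S|-2)$-dimensional subspace of $\Reals^{|\S|}$.

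The idea is to engineer ties for $\A_2$ at $r=\truereward$ so that feasibility inequalities become equalities along $U$, together with one asymmetric action to select the positive ray. Let $n=|\S|$, $\mu_0\defn\tfrac{1}{n}\mathbf{1}\in\Delta(\S)$, fix a basis $\{\delta_1,\dots,\delta_{n-2}\}$ of $U$, and pick $\eta\in\mathbf{1}^\perp$ with $\eta^\top\truereward>0$ (which exists precisely because $\truereward\notin\spn(\mathbf{1})$). Equip $A_2$ with actions $b_0,\, b_i^{\pm}\ (i=1,\dots,n-2),\, b^*$, and set
\begin{align*}
\mathcal{T}(\cdot\mid s,b_0)&=\mu_0,\\
\mathcal{T}(\cdot\mid s,b_i^{\pm})&=\mu_0\pm\varepsilon\,\delta_i,\\
\mathcal{T}(\cdot\mid s,b^*)&=\mu_0-\varepsilon'\,\eta,
\end{align*}
for $\varepsilon,\varepsilon'>0$ small enough that every right-hand side lies in $\Delta(\S)$; none of these depends on $s$. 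With $\pi^2\equiv b_0$ we have $\trans_{\pi^1,\pi^2}=\tfrac{1}{n}\mathbf{1}\mathbf{1}^\top$, and Sherman--Morrison yields $(I-\gamma\trans_{\pi^1,\pi^2})^{-1}r=r+\tfrac{\gamma\bar r}{1-\gamma}\mathbf{1}$ for every reward $r$, where $\bar r$ is the mean of $r$. Hence for any $v\in\mathbf{1}^\perp$, $v^\top(I-\gamma\trans_{\pi^1,\pi^2})^{-1}r=v^\top r$.

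The Q-value differences then simplify to $Q(s,b_0)-Q(s,b_i^{\pm})=\mp\gamma\varepsilon\,\delta_i^\top r$ and $Q(s,b_0)-Q(s,b^*)=\gamma\varepsilon'\,\eta^\top r$, \emph{independently of} $s$. At $r=\truereward$ the former vanishes (as $\delta_i\in\truereward^\perp$) while the latter is strictly positive, so $\pi^2\equiv b_0$ is indeed an optimal response. For any $r\in\feasible((\pi^1,\pi^2))$, both $\pm\delta_i^\top r\ge 0$ must hold, forcing $\delta_i^\top r=0$ for every $i$; as $\{\delta_i\}$ spans $U$, this forces $r\in U^\perp=\spn(\truereward,\mathbf{1})$. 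Writing $r=\alpha\truereward+\beta\mathbf{1}$, the surviving constraint $\eta^\top r\ge 0$ becomes $\alpha\,\eta^\top\truereward\ge 0$, i.e.\ $\alpha\ge 0$. Hence $\feasible((\pi^1,\pi^2))=\{\alpha\truereward+\beta\mathbf{1}:\alpha\ge 0\}=\Aff(\truereward)$.

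The main difficulty is persuading the feasible cone, a priori an intersection of up to $|\S|(|A_2|-1)$ half-spaces in $\Reals^{|\S|}$, to collapse to the two-dimensional set $\Aff(\truereward)$. The critical observation is that when $\trans_{\pi^1,\pi^2}$ has identical rows, the value function is $r$ plus a multiple of $\mathbf{1}$, so Q-value comparisons become plain inner products of $r$ with vectors in $\mathbf{1}^\perp$; pairing $b_i^+$ and $b_i^-$ turns two opposing inequalities into an equality along $\delta_i$, while the asymmetric action $b^*$ supplies the one strict inequality that picks out the correct half of $\spn(\truereward,\mathbf{1})$.
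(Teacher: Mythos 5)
Your core mechanism is the same one the paper uses\,---\,make the played action's transitions uniform so that $(I-\gamma\trans_{\pi^1,\pi^2})^{-1}\r = \r + \tfrac{\gamma\bar{\r}}{1-\gamma}\mathbf{1}$ and the constraints of Theorem~\ref{lemma:feasible_set_characterization} become inner products of $\r$ with vectors in $\mathbf{1}^\perp$, then force equalities along $\spn(\truereward,\mathbf{1})^\perp$ and add one strict direction to select the cone\,---\,and your algebra is correct. The genuine gap is the step where you ``equip $A_2$ with actions'' $b_0, b_i^{\pm}, b^*$. The action space $A_2$ is part of the given MDP; assumption (B) only says that every kernel $\mathcal{T}\colon \S\times A_2\to\Delta(\S)$ over the \emph{fixed} $A_2$ is realizable by some $\pi^1$\,---\,it does not let you choose or enlarge $A_2$. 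Because every action in your design has a state-independent transition, the $|\S|$ constraint rows that each action generates are all identical, so each action buys exactly one scalar constraint on $\r$, and your argument needs $|A_2|\ge 2|\S|-2$. The theorem must hold for any given $A_2$ with at least two actions (e.g.\ $|A_2|=4$ while $|\S|=392$ in the paper's own Maze-Maker environment), so your construction cannot even be instantiated in the generic case.

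The paper closes exactly this hole with two devices you are missing. First, it works with only two actions and makes the \emph{unplayed} action's transition vary with the state: $\trans_{\pi^1,b_1}$ is uniform, while row $s$ of $\trans_{\pi^1,b_2}$ is the uniform distribution minus $\varphi_s$; then the single action $b_2$ imposes $|\S|$ \emph{different} constraints $\varphi_s^\top \r\ge 0$, one per state. Second, since this budget is only $|\S|$ constraints, it cannot afford your pairs $\pm\delta_i$ (two constraints per equality); instead it takes $\varphi_1,\dots,\varphi_{|\S|-2}$ orthogonal to $\spn(\truereward,\mathbf{1})$ together with $\varphi_{|\S|-1}=-(\varphi_1+\cdots+\varphi_{|\S|-2})$, whose joint nonnegativity already forces all of the equalities (Lemma~\ref{appendix-lemma:half-spaces characterization}), leaving the last row for the cone-selecting vector, which is your $\eta$ up to scaling. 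Repairing your proof amounts to adopting these two ideas. A smaller point: your dismissal of the constant-$\truereward$ case is not justified\,---\,the theorem's claim there is that exactly the constant rewards are feasible, which is a real statement requiring (and in the paper receiving) its own construction, namely characterizing the line $\spn(\mathbf{1})$ by $|\S|$ half-spaces.
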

To emphasise the interpretation and relevance of Theorem~\ref{proposition:ideal_environment} in the standard single-agent IRL setting, we can also rephrase Theorem~\ref{proposition:ideal_environment} as follows:
\begin{remark}
For any state space $\S$, action space $A$, reward function $\truereward$ and discount factor $\gamma \in [0, 1)$, there exists a transition matrix $\mathcal{T}:\S \times A \to \Delta(\S)$ such that the optimal policy $\pi$ in $(\S, A, \mathcal{T}, \truereward, \gamma)$ uniquely characterises $\truereward$ up to positive affine transformations. 
\end{remark}


This leads to the following corollary, which shows that it is possible to check in a single episode whether any given reward function is an affine transformation of $\truereward$. 
\begin{corollary}\label{theorem:verifying_r_transform_of_true_reward}
Under Assumptions (A) and (B) of Theorem~\ref{proposition:ideal_environment}, the learner can verify in any episode whether a reward function $\r$ is a positive affine transformation of the unknown and unobserved reward function $\truereward$.
\end{corollary}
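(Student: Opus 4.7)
The plan is to reduce the verification of ``$r \in \Aff(\truereward)$'' to a single linear feasibility check, by applying Theorem~\ref{proposition:ideal_environment} to the \emph{candidate} reward $r$ rather than to the unknown $\truereward$. First, I would take the construction underlying Theorem~\ref{proposition:ideal_environment} and instantiate it with $r$ in place of $\truereward$, obtaining a commitment policy $\pi^1_r$ whose induced dynamics $\trans_{\pi^1_r}$ form an ideal reward-learning environment for $r$; by Assumption~(B), such a $\pi^1_r$ is realisable. Let $\pi^2_r$ denote the corresponding optimal response in the single-agent MDP $(\S, A_2, \trans_{\pi^1_r}, r, \gamma)$, so that by Theorem~\ref{proposition:ideal_environment} the hypothetical feasible set satisfies $\feasible((\pi^1_r, \pi^2_r)) = \Aff(r)$.

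Next, in the current episode I would commit to $\pi^1_r$ and observe $\A_2$'s actual optimal response $\pi^2$ under $\truereward$. To verify whether $r \in \Aff(\truereward)$, I then use Corollary~\ref{corollary:feasible_set} to check whether $r$ itself satisfies the finite system of linear inequalities that characterises $\feasible((\pi^1_r, \pi^2))$, and the algorithm outputs \emph{yes} precisely when this linear test succeeds.

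Correctness has two directions. The forward direction is immediate: positive affine transformations preserve $\argmax$, so if $r \in \Aff(\truereward)$ then $\pi^2$, being optimal under $\truereward$ in $\trans_{\pi^1_r}$, is also optimal under $r$, and hence $r$ is feasible under $(\pi^1_r, \pi^2)$. For the reverse direction, suppose $r$ satisfies the feasibility inequalities under $(\pi^1_r, \pi^2)$; then $\pi^2$ is optimal for $r$ in $(\S, A_2, \trans_{\pi^1_r}, r, \gamma)$. Because $\pi^1_r$ is ideal for $r$, the optimal response under $r$ must coincide with $\pi^2_r$, forcing $\pi^2 = \pi^2_r$. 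Consequently $\pi^2_r$ is also optimal under $\truereward$, which gives $\truereward \in \feasible((\pi^1_r, \pi^2_r)) = \Aff(r)$; inverting the (non-degenerate) positive affine relation yields $r \in \Aff(\truereward)$.

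The main obstacle is the uniqueness step in the reverse direction: I must argue that matching $\pi^2$ with $\pi^2_r$ really is forced, rather than merely that $\pi^2$ is one of potentially many optimal responses under $r$ in $\trans_{\pi^1_r}$. This follows from the tightness of the identification $\feasible((\pi^1_r, \pi^2_r)) = \Aff(r)$ in Theorem~\ref{proposition:ideal_environment}, since any genuinely different optimal response would broaden the set of rewards consistent with $\A_2$'s behaviour beyond $\Aff(r)$. Some care is needed to handle degenerate ties at states that $\pi^2_r$ never visits, but these are benign because they do not alter the feasibility inequalities of Corollary~\ref{corollary:feasible_set} at the relevant states.
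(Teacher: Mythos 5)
Your setup matches the paper: commit to the ideal policy $\pi^1$ that the proof of Theorem~\ref{proposition:ideal_environment} constructs for the \emph{candidate} reward $\r$, and observe $\A_2$'s response $\pi^2$. The gap is in the test and in the reverse direction of its correctness. You check only \emph{membership}, $\r \in \feasible((\pi^1,\pi^2))$, and to make this suffice you assert that ``$\pi^2$ optimal under $\r$'' forces $\pi^2=\pi^2_r$. That forcing fails, and not because of benign unvisited-state ties: in the construction of Theorem~\ref{proposition:ideal_environment} the rows $\varphi_1,\dots,\varphi_{N-1}$ of $\Phi$ are orthogonal to $\spn(\r,\mathbf{1})$, so $Q^*_{\pi^1}(s_i,b_1)-Q^*_{\pi^1}(s_i,b_2)=\gamma\,\varphi_i^\top\r=0$ under $\r$ for every $i\in[N-1]$. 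The ideal environment for $\r$ deliberately makes the follower \emph{exactly indifferent} between its two actions in all but one state; only $\varphi_N^\top\r>0$ is strict. Consequently \emph{any} response that plays $b_1$ in that single state is optimal under $\r$ and makes your linear test succeed, and a true reward $\truereward$ far outside $\Aff(\r)$ can easily induce such a response: it only needs to satisfy the one strict constraint while violating some $\varphi_i^\top$-constraint with $i<N$. (Concretely, with $N=3$, $\r=(1,0,0)$, $\truereward=(1,1,0)$ and $\gamma$ small, the optimal response is $(b_1,b_2,b_1)$, which is optimal under $\r$, yet $\r\notin\Aff(\truereward)$.) Your own justification --- that a genuinely different optimal response would broaden the set of consistent rewards beyond $\Aff(\r)$ --- is true but cuts against you: it is exactly because $\feasible((\pi^1,\pi^2))$ broadens that membership of $\r$ in it stops certifying anything.

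The paper avoids this by running a stronger test: it checks the set identity $\feasible((\pi^1,\pi^2))=\Aff(\r)$ via linear programming, rather than membership of $\r$. With that test the reverse direction is immediate and needs no uniqueness claim: $\truereward$ is always feasible under $\A_2$'s own optimal response (Lemma~\ref{lemma:minimal feasible set}), so the identity gives $\truereward\in\Aff(\r)$ and hence $\r\in\Aff(\truereward)$; and if the identity fails, Theorem~\ref{proposition:ideal_environment} together with Lemma~\ref{lemma:minimal feasible set} rules out $\r\in\Aff(\truereward)$. Your forward direction (affine transformations preserve optimal responses, so a true affine copy always passes) is sound and is the same ingredient the paper uses.
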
 
We have shown that for any reward function $\truereward$ there exists an environment $\mathcal{T}: \S \times A_2 \to \Delta(\S)$ such that the optimal policy with respect to $\mathcal{T}$ and $\truereward$ characterises $\truereward$ up to positive affine transformations (Theorem~\ref{proposition:ideal_environment}). This implied that the learner, without knowledge of $\truereward$, can verify whether a reward function is element in $\minset(\truereward)$ by playing a specific policy (Corollary~\ref{theorem:verifying_r_transform_of_true_reward}). However, the assumption that $\A_1$ can create any environment dynamics is very strong and we notice that, while retrieving the set $\minset(\truereward)$ is clearly desirable, it is generally not necessary in order to cooperate optimally as other reward functions may also induce optimal behaviour. 
Thus, milder assumptions may be sufficient to learn about the reward function so that $\A_1$ is an optimal partner to $\A_2$. 
In the following, we propose an algorithm that learns about the reward function by adaptively designing environments and that is no-regret under mild assumptions.


\subsection{An Algorithm for Interactive IRL}\label{subsection: algo-for-opt-responses}

We now present an online algorithm for learning from and cooperating with an optimally responding agent $\A_2$ when agent $\A_1$ gets to observe the fully specified policy of $\A_2$ at the end of each episode. 
Note that we can always restrict the space of reward functions to the $|\S|$-dimensional unit simplex $\Delta(\S)$ as any positive affine transformation of $\r \in \Delta(\S)$ is equivalent to $\r$ in the sense that they are feasible under the same observations and induce the same optimal (joint) policies (Lemma~\ref{lemma:minimal feasible set}).
Now, as the constraints characterising the feasible set $\feasible_t = \feasible((\pi^1_1, \pi^2_1), \dots, (\pi^1_t, \pi^2_t))$ are linear in the reward function (Corollary~\ref{corollary:feasible_set}), we can use a Linear Program (LP) to find a reward function in $\feasible_t \cap \Delta(\S)$. 
Let $\mathcal{C}((\pi^1_1, \pi^2_1), \dots, (\pi^1_t, \pi^2_t))$ denote the set of constraints induced by $(\pi^1_1, \pi^2_1), \dots, (\pi^1_t, \pi^2_t)$. 
In episode $t+1$, we then sample an $|\S|$-dimensional objective function $c$ uniformly at random and solve the following LP:
\begin{align}\label{equation:linear_program}
    \max_{\r \in \Delta^{|S|}} c^\top \r \ \text{subject to } \mathcal{C}((\pi^1_1, \pi^2_1), \dots, (\pi^1_t, \pi^2_t)).
\end{align}

\begin{algorithm}[tb]
    \caption{Interactive IRL via Linear Programming}
    \label{algorithm:full_info_opt_behaviour}
\begin{algorithmic}[1]
\STATE {\bfseries input:} $(\S, A_1, A_2, \trans, \gamma)$, initial policy $\pi^1_1$
\FOR{$t = 1, 2, \dots $}
\STATE commit to policy $\pi^1_t$ 
\STATE observe response $\pi^2_t$
\STATE get constraints $\mathcal{C}_t = \mathcal{C}((\pi^1_1, \pi^2_1),\dots, (\pi^1_t, \pi^2_t))$
\STATE sample objective vector $c$ uniformly at random
\STATE find solution $\r_t \in \feasible_t$ of LP \eqref{equation:linear_program} for $\mathcal{C}_t$ and $c$
\STATE compute $\pi^1_{t+1} \in \optPi_1(\r_t)$
\ENDFOR
\end{algorithmic}
\end{algorithm}

In the unlikely event that the LP computes the constant reward function in $\Delta(\S)$, we resample the objective $c$ and solve the LP again. 
Given a prospective reward function $\r$, we then want to compute an optimal commitment strategy in $(\S, A_1, A_2, \trans, \r, \gamma)$. 
We see that if $\A_2$ responds optimally, it suffices to find an optimal joint policy as it yields an optimal commitment strategy for $\A_1$. 
\begin{lemma}\label{lemma:optimal_joint_policy_optimal}
Let $(\bar \pi^1, \bar\pi^2)$ 
be an optimal joint policy. If agent $\A_2$ responds optimally to the commitment of $\A_1$, then $V_{\bar \pi^1, \pi^2(\bar \pi^1)} = V_{\bar \pi^1, \bar \pi^2}$. In particular, this entails that $\max_{\pi^1} V_{\pi^1, \pi^2(\pi^1)} = \max_{\pi^1, \pi^2} V_{\pi^1, \pi^2}$.
\end{lemma}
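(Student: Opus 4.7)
The plan is to argue that joint optimality of $(\bar\pi^1, \bar\pi^2)$ forces $\bar\pi^2$ to already be a best response to $\bar\pi^1$ in the single-agent MDP that $\bar\pi^1$ induces on $\A_2$, so the \emph{actual} optimal response $\pi^2(\bar\pi^1)$ agrees with $\bar\pi^2$ in value, and the outer $\max$ identity then follows by sandwiching.

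First I would fix $\bar\pi^1$ and look at the induced single-agent MDP $(\S, A_2, \trans_{\bar\pi^1}, \r, \gamma)$ in which $\A_2$ plans. Let $\pi^2(\bar\pi^1)$ denote any optimal policy in this single-agent MDP (an optimal, uniformly dominating policy exists by standard MDP theory). I claim $V_{\bar\pi^1, \pi^2(\bar\pi^1)} = V_{\bar\pi^1, \bar\pi^2}$. For the ``$\succeq$'' direction, $\pi^2(\bar\pi^1)$ is optimal among all $\A_2$-policies against $\trans_{\bar\pi^1}$ and therefore dominates $\bar\pi^2$ in this induced MDP; since the value of the joint policy $(\bar\pi^1, \cdot)$ coincides with the single-agent value under $\trans_{\bar\pi^1}$, we get $V_{\bar\pi^1, \pi^2(\bar\pi^1)} \succeq V_{\bar\pi^1, \bar\pi^2}$. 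For the reverse ``$\preceq$'' direction, $(\bar\pi^1, \bar\pi^2)$ is a jointly optimal pair, hence dominates every joint policy, in particular $(\bar\pi^1, \pi^2(\bar\pi^1))$. Combining the two directions yields the claimed equality.

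For the second part of the statement, I would chain:
\begin{equation*}
\max_{\pi^1, \pi^2} V_{\pi^1, \pi^2} \;=\; V_{\bar\pi^1, \bar\pi^2} \;=\; V_{\bar\pi^1, \pi^2(\bar\pi^1)} \;\preceq\; \max_{\pi^1} V_{\pi^1, \pi^2(\pi^1)},
\end{equation*}
where the middle equality is what we just proved and the last inequality follows because $\bar\pi^1$ is one admissible choice in the outer $\max$. The reverse inequality is immediate since for every $\pi^1$, $(\pi^1, \pi^2(\pi^1))$ is a particular joint policy and thus $V_{\pi^1, \pi^2(\pi^1)} \preceq \max_{\pi^1, \pi^2} V_{\pi^1, \pi^2}$.

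The only delicate point is making sure the equalities hold componentwise (vector-valued) rather than merely under some initial distribution, but this is handled by invoking the standard fact that cooperative MDPs admit a uniformly optimal joint policy and that $\A_2$'s induced single-agent MDP admits a uniformly optimal policy as well. No deeper obstacle should arise; the lemma is essentially a restatement of the observation that in a cooperative Stackelberg game the leader can assume the follower will break ties in the leader's favour without loss of value, because joint optimality already selects such a follower response.
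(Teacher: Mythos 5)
Your proposal is correct and follows essentially the same route as the paper's proof: both arguments sandwich the value using (i) the optimality of $\A_2$'s response in the MDP induced by $\bar\pi^1$ and (ii) the joint optimality of $(\bar\pi^1,\bar\pi^2)$ as one particular joint policy, differing only in that you establish $V_{\bar\pi^1,\pi^2(\bar\pi^1)}=V_{\bar\pi^1,\bar\pi^2}$ first and then derive the $\max$ identity, while the paper runs a single chain of element-wise inequalities. Your remark about componentwise (vector-valued) equality and the existence of uniformly optimal policies is a valid and correctly resolved point that the paper handles implicitly via its use of $\succeq$.
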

Note that an optimal joint policy and thus an optimal commitment strategy for $\A_1$ can be computed in time polynomial in the number of states and actions. In episode $t+1$, the algorithm then commits to a policy $\pi^1_{t+1} \in \optPi_1(\r)$, where $\r$ is the solution of the LP \eqref{equation:linear_program} and $\optPi_1(\r)$ is the set of optimal commitment strategies under $\r$.
A description of this approach is given by Algorithm~\ref{algorithm:full_info_opt_behaviour}. 
In fact, we can show that Algorithm~\ref{algorithm:full_info_opt_behaviour} is \emph{no-regret} under the assumption that reward functions that induce suboptimal joint policies are identifiable in the sense that these also induce suboptimal responses. 
\begin{proposition}\label{lemma:algorithm_converges}
Suppose that for any non-constant reward function $\r \in \Delta(\S)$ it holds that if an optimal joint policy $(\pi^1, \pi^2)$ under $\r$ is suboptimal under $\truereward$, then in return there exists an optimal response $\pi^2(\pi^1)$ under $\truereward$ that is suboptimal under $\r$. Moreover, assume that $\A_2$ responds optimally and breaks ties between equally good policies uniformly at random. Then, the average regret suffered by Algorithm~\ref{algorithm:full_info_opt_behaviour} converges to zero almost surely. 
\end{proposition}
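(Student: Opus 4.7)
The plan is to show that, almost surely, only finitely many episodes incur positive regret; since per-episode regret is uniformly bounded in the finite MDP, this yields almost sure convergence of the average regret to zero. By Lemma~\ref{lemma:optimal_joint_policy_optimal}, the per-episode loss $\loss(\pi^1_t)$ vanishes exactly when $\pi^1_t$ is the first component of some jointly optimal policy under $\truereward$. Call such commitments \emph{good} and the rest \emph{bad}. Since one may always restrict to deterministic optimal commitments in $\optPi_1(r_t)$, the set of bad commitments is finite, and by a union bound it suffices to prove that any fixed bad commitment $\pi^1_{\text{bad}}$ is chosen only finitely often almost surely.

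Fix such a $\pi^1_{\text{bad}}$ and, toward contradiction, let $A$ be the event that $\pi^1_{\text{bad}}$ is committed infinitely often and suppose $\Pr(A) > 0$. On $A$, $\A_2$ is repeatedly prompted to respond to $\pi^1_{\text{bad}}$. Let $\pi^2_1, \dots, \pi^2_k$ be the finitely many deterministic optimal responses to $\pi^1_{\text{bad}}$ under $\truereward$. Since $\A_2$ breaks ties uniformly at random among them, Borel--Cantelli gives that each $\pi^2_i$ is observed infinitely often almost surely on $A$, so the constraints from Corollary~\ref{corollary:feasible_set} associated to every pair $(\pi^1_{\text{bad}}, \pi^2_i)$ are present in $\feasible_\infty \defn \bigcap_t \feasible_t$.

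Now consider the closed bad region $\mathcal{B}_{\text{bad}} \defn \{r \in \Delta(\S) : \pi^1_{\text{bad}} \in \optPi_1(r)\}$. Pick any non-constant $r \in \mathcal{B}_{\text{bad}}$; then there exists $\bar\pi^2$ with $(\pi^1_{\text{bad}}, \bar\pi^2)$ jointly optimal under $r$, and since $\pi^1_{\text{bad}}$ is bad this pair must be suboptimal under $\truereward$. The identifiability assumption then supplies some $\pi^2_{\text{kill}} \in \{\pi^2_1, \dots, \pi^2_k\}$ that is \emph{strictly} suboptimal under $r$; the corresponding Bellman inequality in Corollary~\ref{corollary:feasible_set} fails strictly at $r$, so $r \notin \feasible_\infty$. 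Hence, on $A$, $\feasible_\infty \cap \mathcal{B}_{\text{bad}}$ reduces to at most the single constant reward $(1/|\S|, \dots, 1/|\S|)$ -- an isolated point that the random LP objective hits with probability zero and against which the algorithm resamples by construction.

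The concluding step is a nested-compactness argument: $\feasible_t \cap \mathcal{B}_{\text{bad}}$ is a decreasing family of compact subsets of $\Delta(\S)$, and because the rulings-out above come from strict open half-space inequalities, the ``non-constant part'' collapses to the empty set in finitely many steps on $A$. Thus for large $t$ any non-constant LP solution $r_t$ lies outside $\mathcal{B}_{\text{bad}}$, forcing $\pi^1_{\text{bad}} \notin \optPi_1(r_t)$ and contradicting $A$. The main obstacle will be making this last compactness step fully rigorous: one must show that the isolated constant-reward point does not obstruct the eventual emptiness of $\feasible_t \cap \mathcal{B}_{\text{bad}}$ in the relevant sense, which will combine the openness of the strict-suboptimality constraints with the fact that under a generic random objective the LP returns a vertex of $\feasible_t \cap \Delta(\S)$ almost surely.
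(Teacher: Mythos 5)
Your proof is correct, but it takes a genuinely different route from the paper's. The paper argues via a measure-theoretic packing argument: for each commitment $\pi^1$ it partitions $\Delta(\S)$ into the finitely many cells $B(\pi^1, \bar\Pi_2)$ of rewards inducing the same set $\bar\Pi_2$ of optimal responses, and shows that each episode in which a suboptimal commitment is played removes, with probability bounded below, an entire cell of Lebesgue measure at least a fixed $\varepsilon > 0$ from the feasible set; since $\Delta(\S)$ has finite measure, this can occur only finitely often, so eventually every LP solution lies in $\mathcal{O} = \{\r \in \Delta(\S) : \optPi(\r) \subseteq \optPi(\truereward)\}$ and only optimal commitments are played. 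You instead fix one bad deterministic commitment $\pi^1_{\text{bad}}$, assume it recurs infinitely often, use the uniform tie-breaking and Borel--Cantelli to conclude that all $k$ deterministic optimal responses to it under $\truereward$ are observed, and invoke the identifiability hypothesis to exclude every non-constant reward in $\mathcal{B}_{\text{bad}}$; a union bound over the finitely many bad commitments finishes. Your route dispenses with the paper's Lebesgue-measure bookkeeping (including its somewhat delicate claim about cells of measure at least $\varepsilon$), at the price of a per-commitment contradiction argument rather than a single global potential bound.

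One correction to your final step: the nested-compactness difficulty you flag is not actually there, and neither compactness nor LP-vertex genericity is needed. Your own intermediate steps show that every non-constant $r \in \mathcal{B}_{\text{bad}}$ violates a constraint coming from one of only \emph{finitely many} pairs $(\pi^1_{\text{bad}}, \pi^2_i)$, $i = 1, \dots, k$. Almost surely on $A$, each such pair is observed at some finite episode; let $T$ be the largest of these $k$ times. Then for all $t \geq T$ we have $\feasible_t \subseteq \bigcap_{i=1}^{k} \feasible((\pi^1_{\text{bad}}, \pi^2_i))$, whose intersection with $\mathcal{B}_{\text{bad}}$ contains at most the constant reward, which Algorithm~\ref{algorithm:full_info_opt_behaviour} never selects by construction. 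Hence $\pi^1_{\text{bad}}$ is never committed after episode $T$, contradicting $A$ directly; the passage from $\feasible_\infty$ to finite $t$ is free because only finitely many killing constraints are involved. Separately, make explicit one step you use implicitly: the identifiability assumption produces an optimal response under $\truereward$ that may be stochastic, whereas you need a response that is suboptimal under $r$ among the deterministic ones $\pi^2_1, \dots, \pi^2_k$. This holds because any optimal stochastic response is supported on $\truereward$-optimal actions, and if every deterministic selection from those supports were $r$-optimal, then every supported action would be $r$-optimal and the stochastic response would be $r$-optimal as well.
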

\begin{proof}[Proof Sketch]
The proof relies on a finite cover of the space of reward functions. We can show that in every step of the algorithm either an optimal policy was played (generating no regret) or with positive probability the reward functions in at least one of the sets of the cover become infeasible - thus ultimately reducing the set of feasible reward functions to only those that yield optimal policies.
\end{proof}


\section{Cooperating with Suboptimal Agents}
\label{sec:suboptimal}
We now consider the case when $\A_2$ responds suboptimally according to some behavioural model such as Boltzmann-rational policies. 
Section~\ref{subsection:bayesian_IRL} extends the Bayesian IRL formulation to our setting and Section~\ref{subsection:planning_suboptimal} analyses the problem of computing optimal commitment strategies when $\A_2$ is playing suboptimally. The omitted proofs from this section can be found in Appendix~\ref{appendix:proofs_suboptimal}.

\subsection{Learning from Suboptimal Responses}\label{subsection:bayesian_IRL} 
When demonstrations are possibly {suboptimal}, it is natural to take a Bayesian perspective \citep{ramachandran2007BIRL} as it provides a principled way to reason under uncertainty. 
Moreover, the Bayesian approach naturally extends to the {partial information} setting, where only trajectories generated by both agents' policies are available for learning. 
We assume that $\A_2$ responds with Boltzmann-rational policies with \emph{unknown} inverse temperature 
$\beta$\footnote{Note that any other parameterised behavioural model could also be modeled by this Bayesian formulation.}
and adapt the Bayesian IRL formulation to our setting. 
Suppose that in the first $t$ episodes $\A_1$ observes $(\pi^1_1, \tau_1), \dots, (\pi^1_t, \tau_t)$, where $\tau_i$ 
is the trajectory generated by $\A_1$'s policy $\pi^1_i$ and $\A_2$'s response $\pi^2_i(\pi^1_i)$ for $i \in [t]$.\footnote{For notational conciseness, we assume here that the length of a trajectory is fixed across all episodes.}
Bayesian IRL aims to estimate the posterior
\begin{align*}
    & \P (\r, \beta \mid (\pi^1_1, \tau_1), \dots, (\pi^1_t, \tau_t)) \\[0.3em]
    & \qquad \qquad \quad = \frac{\P( (\pi^1_1, \tau_1), \dots, (\pi^1_t, \tau_t) \mid \r, \beta ) \P(\r) \P(\beta)}{\P((\pi^1_1, \tau_1), \dots, (\pi^1_t, \tau_t))},
\end{align*}
given priors $\P(\r)$ and $\P(\beta)$ over reward functions and inverse temperatures, respectively. We notice that the observations $(\pi^1_1, \tau_1), \dots, (\pi^1_t, \tau_t)$ are conditionally independent under measure $\P(\cdot \mid \r , \beta)$. As a result, we can express their likelihood as $\P((\pi^1_1, \tau_1), \dots, (\pi^1_t, \tau_t)\mid \r, \beta ) = \prod_{i=1}^t \P((\pi^1_i, \tau_i) \mid \r, \beta)$.
The likelihood for each observation $(\pi^1_i, \tau_i)$ can then be computed as
\begin{align*}
    \P ((\pi^1_i, \tau_i) \mid \r, \beta) & = \prod_{h = 0}^H \pi^2(\actb_{i, h} \mid s_{i, h}, \pi^1_i, \r, \beta) \\
    & \propto \exp \big(\beta \sum_{h=0}^H  Q_{\pi^1_i}^* ( s_{i, h}, \actb_{i, h}, \r) \big).
\end{align*}
The Bayesian method we employ generates samples from the posterior via Markov Chain Monte Carlo (MCMC), similarly to \citep{ramachandran2007BIRL, rothkopf2011preference}. At a high level, we employ a Metropolis-Hastings algorithm on the reward simplex, with a uniform prior on the reward function and an exponential prior on the inverse temperature (see Algorithm~\ref{algorithm:Bayesian_IRL_MCMC} in Appendix~\ref{appendix:details_experiments}).  

\subsection{Planning with Suboptimal Agents}\label{subsection:planning_suboptimal}

Prior work on computing optimal commitment strategies in stochastic games typically assumes that the follower is responding optimally \citep{letchford2012computing, vorobeychik2012computing}. In this section, we analyse optimal commitment strategies for the \emph{cooperative} Stackelberg game from Section~\ref{section:setting} when agent $\A_2$, i.e.\ the follower, responds \emph{suboptimally} according to some behavioural model, e.g.\ Boltzmann-rational policies or $\varepsilon$-greedy policies. For this, the concept of dominating policies play a crucial role.
\begin{definition}
A policy $\pi^1$ is {\em dominating} if $V_{\pi^1, \pi^2(\pi^1)} (s) \geq V_{\bar \pi^1, \pi^2(\bar \pi^1)}(s) $ for all policies $\bar \pi^1$ and states $s \in \S$. 
\end{definition}
The existence of dominating policies is closely linked to our capacity to compute an optimal commitment strategy efficiently as it is a key requirement for dynamic programming. 
We show that if $\A_2$ plays proportionally with respect to the expected value of taking an action, there may not exist dominating policy for $\A_1$ to commit to.

\begin{theorem}[]\label{lemma:no_dominating_policy}
If $\pi^2 (\actb \mid s) \propto f(Q_{\pi^1}^*(s, \actb))$ for any strictly increasing function $f:[0, \infty) \to [0, \infty)$, then a dominating commitment strategy for agent $\A_1$ may not exist.
\end{theorem}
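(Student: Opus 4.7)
The plan is to prove the negative statement by constructing a concrete counterexample MDP in which two candidate commitment strategies of $\A_1$ yield value functions that cross over the state space, so that neither pointwise dominates the other (and by symmetry no third policy can dominate both). The proportional response of $\A_2$ is the engine of the argument: because $\A_2$ assigns positive probability to every action with positive $Q$-value, each commitment strategy leaks some value to suboptimal actions, and this leakage can be made asymmetric across states.

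Concretely, I would build a small two-state, two-action MDP $(\S, A_1, A_2, \trans, \r, \gamma)$ with $\S = \{s_1, s_2\}$ and design two policies $\pi^1_a$ and $\pi^1_b$ of $\A_1$ that induce different marginal transition matrices $\trans_{\pi^1_a}$ and $\trans_{\pi^1_b}$ for $\A_2$. The idea is to tune the transitions so that: (i) from $s_1$, $\pi^1_a$ makes $\A_2$'s optimal action much more attractive than its suboptimal action, whereas $\pi^1_b$ flattens the $Q$-values at $s_1$, so $\E_{\pi^2(\pi^1_a)}[\cdot]$ exceeds $\E_{\pi^2(\pi^1_b)}[\cdot]$ at $s_1$; (ii) from $s_2$, the situation is reversed, with $\pi^1_b$ making $\A_2$'s optimal action dominant and $\pi^1_a$ inducing a nearly uniform response. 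Since $f$ is strictly increasing, a bigger gap in $Q^*_{\pi^1}(s, \cdot)$ translates into a sharper (more concentrated) response, hence a higher expected value. This yields $V_{\pi^1_a, \pi^2(\pi^1_a)}(s_1) > V_{\pi^1_b, \pi^2(\pi^1_b)}(s_1)$ and $V_{\pi^1_a, \pi^2(\pi^1_a)}(s_2) < V_{\pi^1_b, \pi^2(\pi^1_b)}(s_2)$, ruling out both $\pi^1_a$ and $\pi^1_b$ from being dominating.

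To finish, I would argue that no other commitment strategy can dominate either. The cleanest way is to choose the example so that $\pi^1_a$ is itself the unique pointwise maximiser of $V(s_1)$ among all policies and $\pi^1_b$ the unique pointwise maximiser of $V(s_2)$; then any candidate dominating policy $\pi^1$ would have to match both $\pi^1_a$ at $s_1$ and $\pi^1_b$ at $s_2$, and one can pick the MDP so that these two local optimality conditions are incompatible (e.g., because the two policies must prescribe different action distributions at a common state encountered along the induced trajectories). Alternatively, one may exhibit a continuum of policies parameterised by a mixture coefficient and show that the map from the coefficient to the pair $(V(s_1), V(s_2))$ traces a strictly concave Pareto frontier, ruling out any point that is simultaneously maximal in both coordinates.

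The main obstacle will be calibrating the numerical parameters of the MDP (rewards on $s_1, s_2$, the transition probabilities induced by $\pi^1_a$ vs.\ $\pi^1_b$, the discount $\gamma$, and the shape of $f$) so that the qualitative crossing of the two value functions is provable without case-splitting on $f$. Since the hypothesis allows \emph{any} strictly increasing $f:[0,\infty)\to[0,\infty)$, I would keep the construction robust by ensuring that the $Q^*$-gaps differ strictly across the two states for the two policies, so that regardless of the specific $f$, the induced response distributions satisfy a likelihood-ratio ordering that forces the desired comparison of expected values via a monotone coupling argument.
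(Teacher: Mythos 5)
Your overall strategy---an explicit counterexample in which two commitment strategies produce value functions that cross over the state space, plus an argument that the locally optimal commitments at the two states are incompatible---is exactly the shape of the paper's proof. The paper uses a five-state one-shot gadget in which $\A_1$'s action only matters at one state, so that the ``no third policy dominates'' step is immediate; your plan for that step is fine. The genuine gap is in your central quantitative claim: ``since $f$ is strictly increasing, a bigger gap in $Q^*_{\pi^1}(s,\cdot)$ translates into a sharper (more concentrated) response, hence a higher expected value.'' This is not true for arbitrary strictly increasing $f$. If $f$ is nearly constant (e.g.\ $f(q)=1+\epsilon q$), the response is close to uniform under \emph{both} commitments, so the concentration effect is negligible, and what dominates the comparison of expected values is the change in the action values themselves. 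In any such construction, the only way $\A_1$ can widen the $Q$-gap at a state is by changing the downstream value of the suboptimal branch, so the expected value under the response changes for two competing reasons: the probabilities shift toward the better action (a gain), and the value obtained on the suboptimal branch drops (a loss). Your proposed fix---a likelihood-ratio ordering of the two response distributions and a monotone coupling---only controls the first effect; it cannot resolve the trade-off, because the random variables being averaged are different under the two commitments, not just their laws.

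The paper closes exactly this gap with an asymptotic argument: in its gadget the optimal branch has value $x$ and the suboptimal branch has value $y$ or $0$ depending on the commitment, and one shows that the assumed domination inequality forces $x f(x)\,\frac{f(y)-f(0)}{f(x)+f(0)} \le y f(y)$ for all $x,y>0$; since $f(y)-f(0)>0$ and $\frac{f(x)}{f(x)+f(0)}$ is bounded away from zero, the left-hand side grows without bound in $x$ while the right-hand side is fixed, a contradiction. Note that this means the counterexample MDP is chosen \emph{depending on} $f$ (the theorem only asserts that for each such $f$ some MDP admits no dominating policy), whereas your stated goal of a single calibration ``provable without case-splitting on $f$'' and robust to every strictly increasing $f$ simultaneously is stronger than what is needed and likely not achievable with a fixed finite MDP, since for very steep $f$ the response is nearly optimal and a near-dominating policy exists. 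To repair your proposal, replace the likelihood-ratio argument with an explicit computation of the two expected values as functions of the branch payoffs and show, as the paper does, that a suitable limit of one payoff parameter makes the concentration gain outweigh the branch-value loss.
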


In particular, this means that if $\A_2$ plays Boltzmann-rational policies, a dominating commitment strategy may fail to exist. 
Note that Theorem \ref{lemma:no_dominating_policy} generally only holds for {\em strictly} increasing functions $f$, as, for instance, there always exists a dominating commitment strategy when $\A_2$ plays uniformly at random. 
However, even for behavioural models as simple as $\varepsilon$-greedy, we see that a dominating commitment strategy does not necessarily exist.

\begin{lemma}[]\label{lemma:eps_greedy_dominating}
If $\A_2$ plays $\varepsilon$-greedy, a dominating commitment strategy for $\A_1$ may not exist. 
\end{lemma}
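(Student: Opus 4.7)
The plan is to establish the claim by explicit counterexample. Concretely, I would exhibit a cooperative two-agent MDP together with two distinct initial states $s_1, s_2 \in \S$, and two commitment strategies $\pi^1_A, \pi^1_B$ such that $\pi^1_A$ strictly beats every other commitment from $s_1$ while $\pi^1_B$ strictly beats every other commitment from $s_2$. Since, by definition, a dominating policy must attain the supremum value simultaneously at every state, the existence of two states whose individual maximisers disagree rules out dominance. This is the standard route to such impossibility results, as used for the proportional case in Theorem~\ref{lemma:no_dominating_policy}.

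For the construction, I would design a small MDP whose state graph has two disjoint ``branches'' reachable from $s_1$ and $s_2$ respectively. In each branch, $\A_2$ has a clearly best action $\actb^\star$ (the one maximising $Q_{\pi^1}^*$) and a ``wrong'' action $\actb'$. The role of $\A_1$'s action is not to change which action is $\A_2$'s greedy choice, but to shape the \emph{consequence} of the $\varepsilon$-exploration step, i.e.\ to determine whether $\actb'$ leads to a neutral or a catastrophic outcome. By placing large negative rewards in mutually exclusive regions of the two branches, I can arrange that the ``safe'' commitment for the branch rooted at $s_1$ is $\pi^1_A$ (which neutralises exploration in branch~$1$ but \emph{amplifies} the cost of exploration in branch~$2$), while the safe commitment for the branch at $s_2$ is the opposite policy $\pi^1_B$. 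Because with probability $\varepsilon$ each step the wrong action is taken, these hedging choices produce order-$\varepsilon$ but strictly positive gaps in the value of each branch.

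To close the argument, I would compute $V_{\pi^1, \pi^2(\pi^1)}(s_1)$ and $V_{\pi^1, \pi^2(\pi^1)}(s_2)$ for $\pi^1 \in \{\pi^1_A, \pi^1_B\}$ using the Bellman equation under the $\varepsilon$-greedy response, choosing the numerical rewards and (deterministic) transitions so that
\begin{align*}
V_{\pi^1_A, \pi^2(\pi^1_A)}(s_1) &> V_{\pi^1_B, \pi^2(\pi^1_B)}(s_1), \\
V_{\pi^1_B, \pi^2(\pi^1_B)}(s_2) &> V_{\pi^1_A, \pi^2(\pi^1_A)}(s_2).
\end{align*}
A symmetric construction keeps these calculations short. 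One must also rule out that some third (possibly randomised) commitment dominates both; this follows because mixing $\pi^1_A$ and $\pi^1_B$ linearly trades one catastrophic-branch probability for the other, so every mixture is strictly dominated at one of $s_1, s_2$ by one of the two pure policies.

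The main obstacle is not conceptual but combinatorial: the commitment $\pi^1$ enters $\A_2$'s $\varepsilon$-greedy law both through the \emph{greedy} action (via the $\arg\max$ of $Q_{\pi^1}^*$) and through the \emph{transition kernel} faced after exploration, and small parameter changes can flip the greedy choice and collapse the intended asymmetry. I would therefore design the example so that the greedy action of $\A_2$ is invariant under the two candidate policies $\pi^1_A, \pi^1_B$ (so only the exploration effect distinguishes them), which makes the $\varepsilon$-dependence of the value gap transparent and the counterexample robust for every $\varepsilon \in (0, 1)$.
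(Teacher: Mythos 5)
There is a genuine gap: the two-disjoint-branches design cannot work, because a commitment strategy $\pi^1$ is a \emph{state-dependent} policy. If the branches rooted at $s_1$ and $s_2$ share no states, then the policy that agrees with $\pi^1_A$ on branch~1 and with $\pi^1_B$ on branch~2 is itself a valid commitment; since $\A_2$'s optimal response inside a branch depends only on that branch, this combined policy achieves $V_{\pi^1_A,\pi^2(\pi^1_A)}(s_1)$ at $s_1$ and $V_{\pi^1_B,\pi^2(\pi^1_B)}(s_2)$ at $s_2$ simultaneously, i.e.\ it dominates both of your candidates. So the two strict inequalities you aim for can never hold at once, and your closing argument about mixtures misses the point: the policy to rule out is not a probabilistic mixture of $\pi^1_A$ and $\pi^1_B$ but their state-wise combination. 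Your design constraint that $\A_2$'s greedy action be invariant under the two candidate commitments makes things worse rather than more robust: if $\A_2$'s response is the same fixed policy under both commitments, $\A_1$ is simply comparing two policies in one fixed single-agent MDP, where a state-wise best-of-both policy always weakly dominates. The only mechanism that can defeat dominance here is precisely the one you engineer away.

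For a working counterexample the conflict must sit between a state and its \emph{ancestor}, and it must run through $\A_2$'s greedy routing. This is what the paper's construction (Figure~\ref{fig: counterexample-lem3}) does: the two commitments differ only at a single state $s_2$ downstream of $s_0$. Committing $a_1$ at $s_2$ is strictly better \emph{at} $s_2$ (value $\delta(1-\varepsilon/2)>0$ versus $0$), but because $\A_2$'s optimal response is computed as if it will play optimally later (ignoring its own future exploration), this commitment lures $\A_2$ at $s_0$ toward $s_2$, where the $\varepsilon$-probability wrong action hits a reward of order $-1/\varepsilon$; committing $a_2$ at $s_2$ zeroes out $s_2$ and redirects $\A_2$ to the safe $+1$, making it strictly better at $s_0$. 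Note that this differs in an essential way from the proportional-response counterexample of Theorem~\ref{lemma:no_dominating_policy}, which also uses a single chain rather than disjoint branches; you cite that theorem as a template but then depart from its structure exactly where the structure matters.
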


Despite these difficulties, we provide algorithms to approximate optimal commitment strategies for the case of Boltzmann-rational responses (Algorithm~\ref{algorithm:value_iteration_boltzmann}) and $\varepsilon$-greedy responses (Algorithm~\ref{algorithm:value_iteration_greedy}), which can be found in Appendix~\ref{appendix:approx_algorithms}. The proposed methods correspond to approximate value iteration algorithms that keep track of two value functions, each modelling one agent.
We include an empirical evaluation of the proposed algorithms in Appendix~\ref{appendix:approx_evaluation}, which demonstrates that accounting for the suboptimal nature of $\A_2$ reliably improves performance. 


\section{Experiments}\label{sec:experiments}

In our experiments, we investigate how much the learner benefits from repeatedly interacting with the expert. 
To address this question and emphasise the potential benefit of demonstrations in different environments, we include the situation where $\A_1$ only observes the response of $\A_2$ to the initial policy $\pi^1_1$ played by $\A_1$. This resembles the standard IRL setting where we observe the expert only in a single fixed environment $(\S, A_2, \trans_{\pi^1_1}, \r, \gamma)$. 

Here, the initial policy $\pi^1_1$ is chosen uniformly at random. We model the standard IRL setting by repeatedly generating responses of $\A_2$ with respect to $\pi^1_1$, i.e.\ in the implied environment $\trans_{\pi^1_1}$. 
Using these observations, we then estimate the reward function using standard IRL, compute the optimal policy with respect to the estimated rewards, and evaluate the regret of this policy. 
In contrast, in the Interactive IRL setting, the learner gets to choose a different policy in subsequent episodes. 
In this case, we report the online regret of the actually played policies, i.e.\ the actual regret of the learner. More details are provided in Appendix~\ref{appendix:details_experiments}.

\subsection{Environments}

\paragraph{Maze-Maker.} In this environment, agents $\A_1$ and $\A_2$ jointly control a cart in a $7\times7$ grid world. 
In this grid world, the doors leading from one cell to the neighbouring ones are locked. However, $\A_1$ can unlock exactly two doors at any time step before they fall shut again. 
Agent $\A_2$ can attempt to move the cart through a door to a neighbouring cell. 
However, when the door is locked, the cart stays where it was. 
The agents are tasked with collecting three rewards of different value (+$1$, +$2$, +$3$), which disappear once collected. 
While the expert, $\A_2$, knows where the rewards are placed, the helper, $\A_1$, does not know their location. 
We model this environment as a two-agent MDP with $392$ states ($49 \times 8$) and discount factor $\gamma = 0.9$, where $\A_1$ has six actions (unlocking two out of four doors) and $\A_2$ four actions (moving the cart North, East, South, West). 
An illustration of the environment is given in Figure~\ref{figure:experiments_maze_maker}.



\begin{figure}[H]
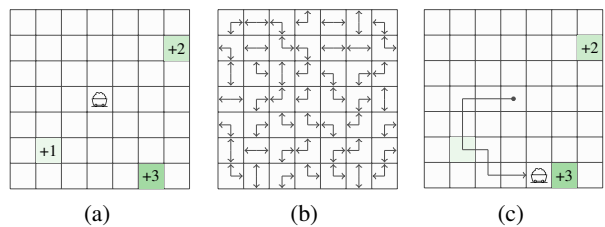

    \subfigure[]{%
    \resizebox{!}{.29\linewidth}{
    \input{figures/figure_maze_maker_env/maze_maker_left}}
    }
    \hfill 
    \subfigure[]{%
    \resizebox{!}{.29\linewidth}{
    \colorlet{DarkGreen}{green!60!black!40}
\begin{tikzpicture}
\fill[gray!2] (0,0) -- +(0, 3.5) -- +(3.5, 3.5) -- +(3.5, 0) -- cycle;

\draw[step=0.5cm,color=black!70] (0,0) grid (3.5,3.5);

\foreach \i/\j in {0/0, 1/4, 2/5, 3/0, 4/3, 6/5, 2/2, 3/2, 4/0}
{
    \draw[-to, black!70] (0.25 + \i *0.5, 0.25 +\j * 0.5) -- (0.25 + \i *0.5, 0.48 + \j *0.5);
    \draw[-to, black!70] (0.25 + \i *0.5, 0.25 +\j * 0.5) -- (0.48 + \i *0.5, 0.25 + \j *0.5);
}
\foreach \i/\j in {0/1, 0/4, 1/0, 2/4, 5/0, 6/3, 5/6}
{
    \draw[-to, black!70] (0.25 + \i *0.5, 0.25 +\j * 0.5) -- (0.25 + \i *0.5, 0.48 + \j *0.5);
    \draw[-to, black!70] (0.25 + \i *0.5, 0.25 +\j * 0.5) -- (0.25 + \i *0.5, 0.02 + \j *0.5);
}

\foreach \i/\j in {6/0, 3/3, 2/3, 3/1, 3/4, 3/6, 5/1, 5/2, 6/4}
{
    \draw[-to, black!70] (0.25 + \i *0.5, 0.25 +\j * 0.5) -- (0.25 + \i *0.5, 0.48 + \j *0.5);
    \draw[-to, black!70] (0.25 + \i *0.5, 0.25 +\j * 0.5) -- (0.02 + \i *0.5, 0.25 + \j *0.5);
}

\foreach \i/\j in {0/6, 1/3, 1/2, 2/0, 2/1, 4/1, 5/3, 5/4, 6/1, 6/2}
{
    \draw[-to, black!70] (0.25 + \i *0.5, 0.25 +\j * 0.5) -- (0.25 + \i *0.5, 0.02 + \j *0.5);
    \draw[-to, black!70] (0.25 + \i *0.5, 0.25 +\j * 0.5) -- (0.48 + \i *0.5, 0.25 + \j *0.5);
}

\foreach \i/\j in {6/6, 0/2, 0/5, 2/6, 3/5, 4/2, 4/4}
{
    \draw[-to, black!70] (0.25 + \i *0.5, 0.25 +\j * 0.5) -- (0.25 + \i *0.5, 0.02 + \j *0.5);
    \draw[-to, black!70] (0.25 + \i *0.5, 0.25 +\j * 0.5) -- (0.02 + \i *0.5, 0.25 + \j *0.5);
}

\foreach \i/\j in {1/1, 0/3, 1/5, 1/6, 4/5, 4/6, 5/5}
{
    \draw[-to, black!70] (0.25 + \i *0.5, 0.25 +\j * 0.5) -- (0.02 + \i *0.5, 0.25 + \j *0.5);
    \draw[-to, black!70] (0.25 + \i *0.5, 0.25 +\j * 0.5) -- (0.48 + \i *0.5, 0.25 + \j *0.5);
}
\end{tikzpicture}}
    }
    \hfill
    \subfigure[]{%
    \resizebox{!}{.29\linewidth}{
    \input{figures/figure_maze_maker_env/maze_maker_right}}
    }
    \caption{The Maze-Maker Environment. (a) The initial game setup with starting position in the center and three rewards scattered across the grid world. (b) When $\A_1$ commits to a policy it implicitly creates a maze for $\A_2$ to navigate the cart in. (c) An exemplary path taken by $\A_2$ in the maze implied by $\A_1$'s policy.}
    \label{figure:experiments_maze_maker}
\end{figure}

\paragraph{Random MDPs.} 
We also randomly generated MDPs with $200$ states and four actions for each agent. We randomly draw the transition dynamics from a Dirichlet distribution, with restrictions on the influence of each agent on the transitions, and the rewards from an i.i.d.\ Beta distribution. The discount factor is set to $\gamma = 0.9$.

\subsection{Results} 

\paragraph{Optimal Responses and Full Information:} In Figure~\ref{subfigure:optimal_maze_maker} and \ref{subfigure:optimal_random_MDPs}, we observe that the per-episode regret suffered by Algorithm~\ref{algorithm:full_info_opt_behaviour} in both environments decreases notably with the number of episodes played. In particular, we see that after only a few episodes the per-episode regret of Algorithm~\ref{algorithm:full_info_opt_behaviour} is significantly lower than for maximum-margin IRL \citep{ng2000IRL} when $\A_1$ only observes the response to the initial policy $\pi^1_1$. This roughly corresponds to the standard IRL setting in which demonstrations are obtained in a single environment only.  
We thus find that the learner significantly benefits from observing $\A_2$'s behaviour in new and different environments, i.e.\ with respect to different policies of $\A_1$. In particular, it appears to be necessary to observe the expert's response to several different policies in order to infer an approximately optimal reward function. The results are averaged over $5$ runs.

\paragraph{Suboptimal Responses and Partial Information}
For the case of suboptimal responses and partial information, we let $\A_2$ respond with Boltzmann-rational policies with inverse temperature $\beta = 10$ in both environments. We assume that the inverse temperature, i.e.\ the optimality of $\A_2$, is unknown to the learner and simulate the partial information setting by generating trajectories according to policies $\pi^1_t$ and $\pi^2_t$ in episode $t$. We let an episode end with probability $1-\gamma$ each time step so that the lengths of observed trajectories are random. 


\begin{figure}[t]
    \subfigure[Maze-Maker \label{subfigure:optimal_maze_maker}]{%
      \includegraphics[width=.45\linewidth]{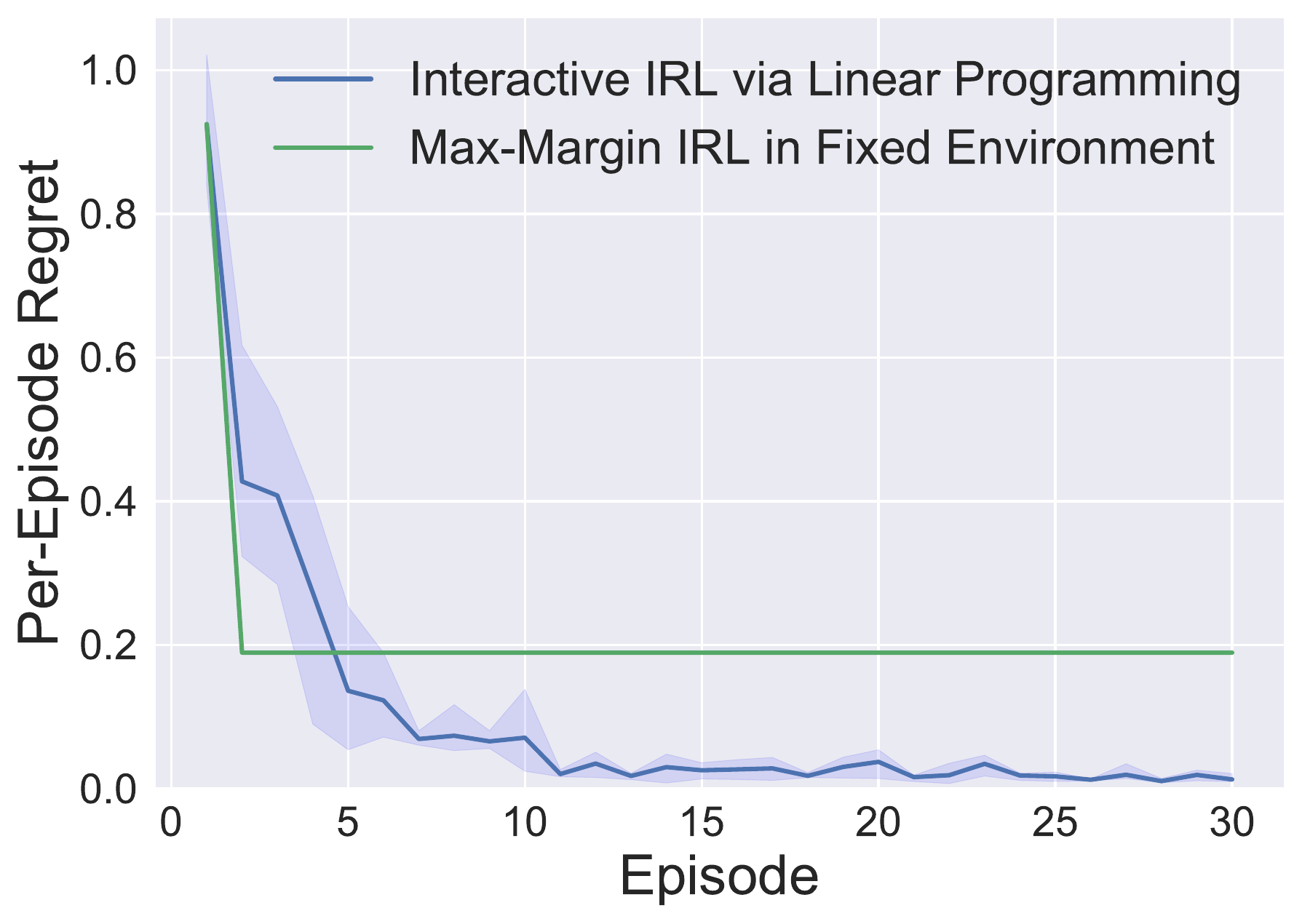}
    }
    \hfill  
    \subfigure[Random MDPs\label{subfigure:optimal_random_MDPs}]{%
      \includegraphics[width=0.45\linewidth]{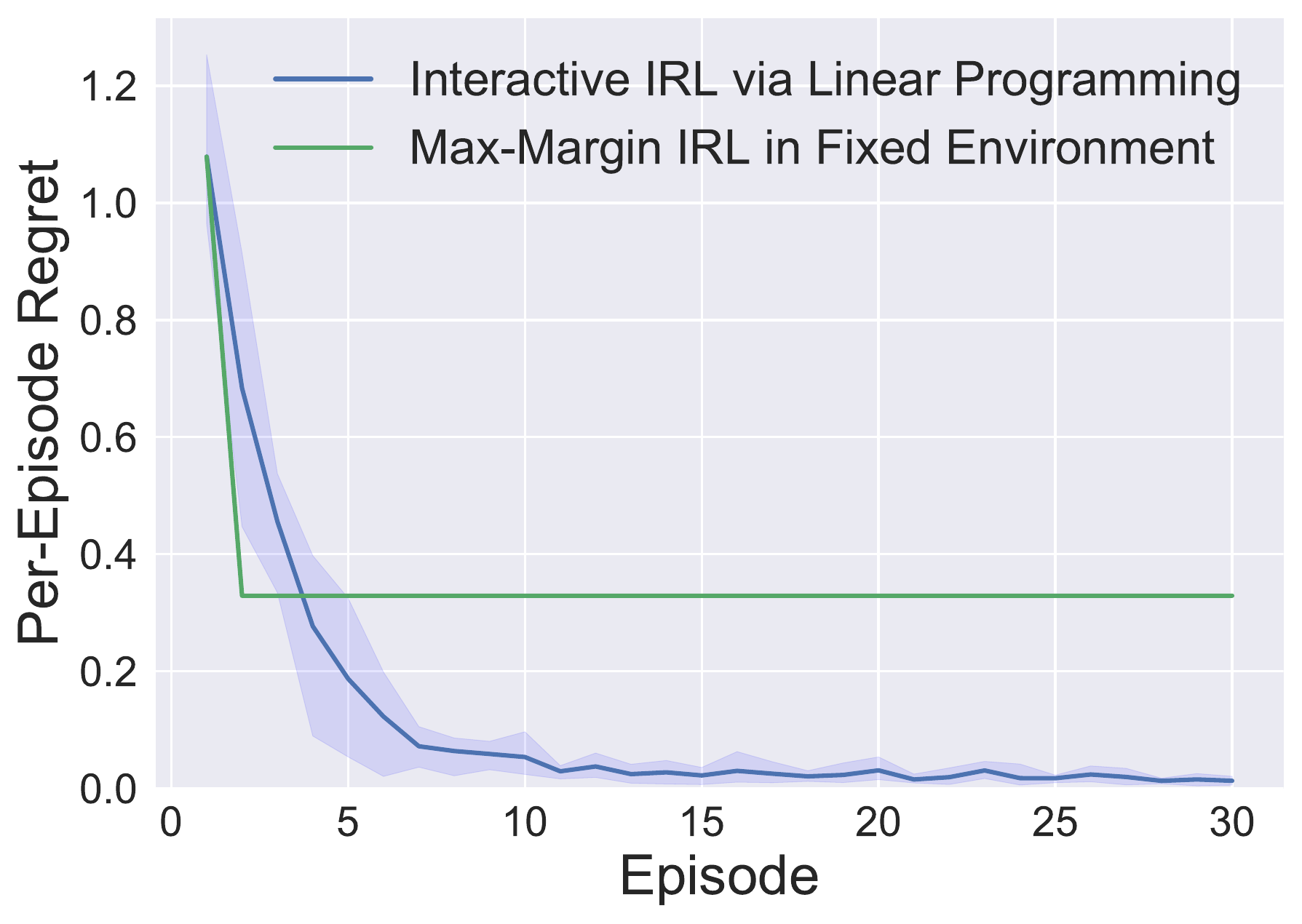}
    }
    \caption{Optimal Responses and Full Information. Blue lines show the per-episode regret $\loss(\pi^1_t)$ of Algorithm~\ref{algorithm:full_info_opt_behaviour}.  
    Green lines correspond to the regret of maximum-margin IRL \citep{ng2000IRL} performed with observation $(\pi^1_1, \pi^2_1)$ only.}
    \label{fig:my_label}
\end{figure}

Figure~\ref{subfigure:suboptimal_maze_maker} and \ref{subfigure:suboptimal_random_MDPs} show that Bayesian Interactive IRL (Algorithm~\ref{algorithm:Bayesian_IRL_MCMC}) reliably improves its estimate of the true reward function with the number of episodes played and that the learner again substantially benefits from observing $\A_2$ act in different environments. While obtaining an increasing amount of trajectories in the \emph{same} environment improves the estimate of the reward function as well, we see that trajectories generated in new environments, i.e.\ with respect to different policies of $\A_1$, yield much more information and thus allow for a better estimate of the unknown reward function.  
The results are averaged over $10$ runs. 




\begin{figure}[t]
\centering
    \subfigure[Maze-Maker \label{subfigure:suboptimal_maze_maker}]{%
      \includegraphics[width=0.45\linewidth]{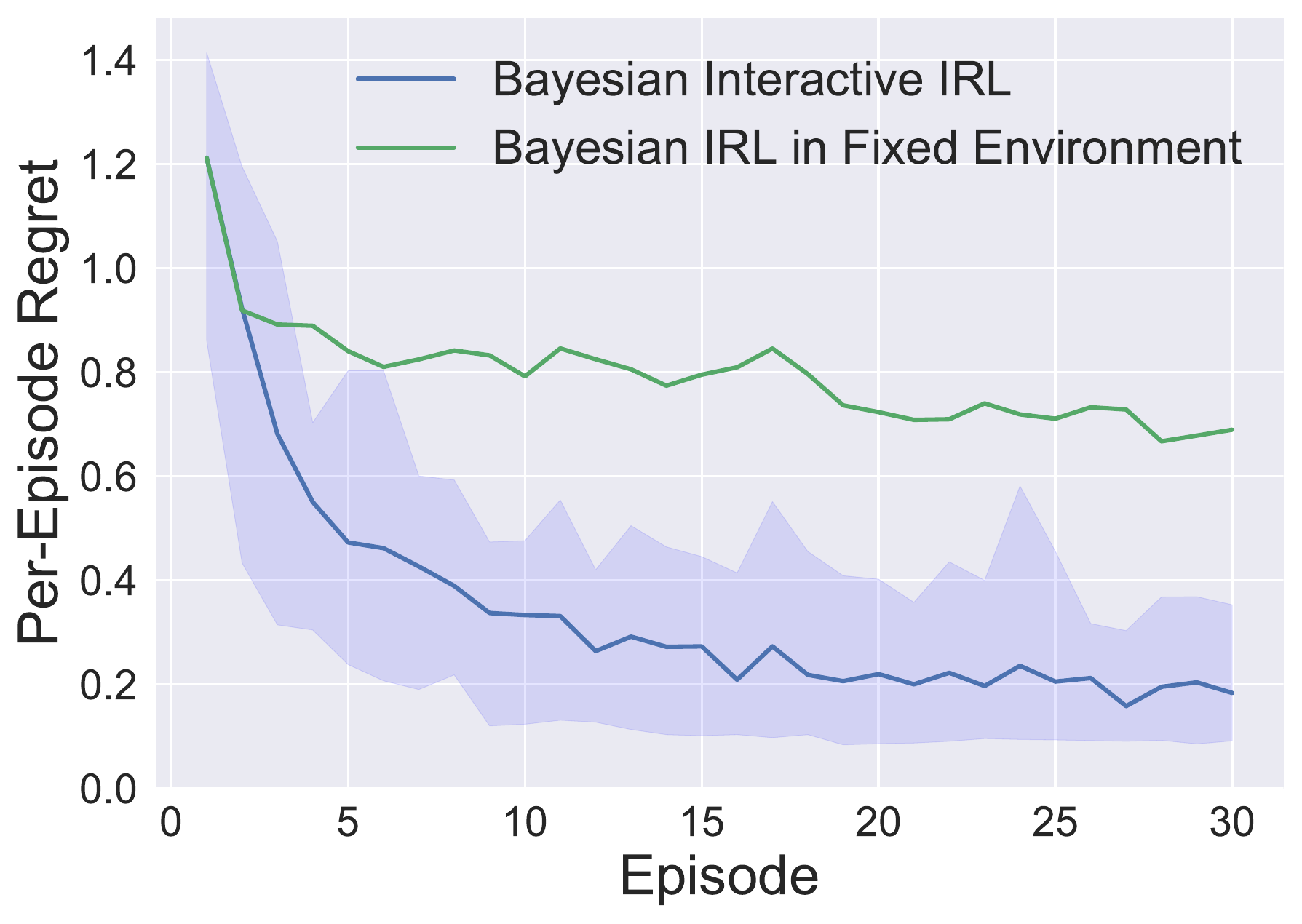}
    }
    \hfill
    \subfigure[Random MDPs\label{subfigure:suboptimal_random_MDPs}]{%
      \includegraphics[width=0.45\linewidth]{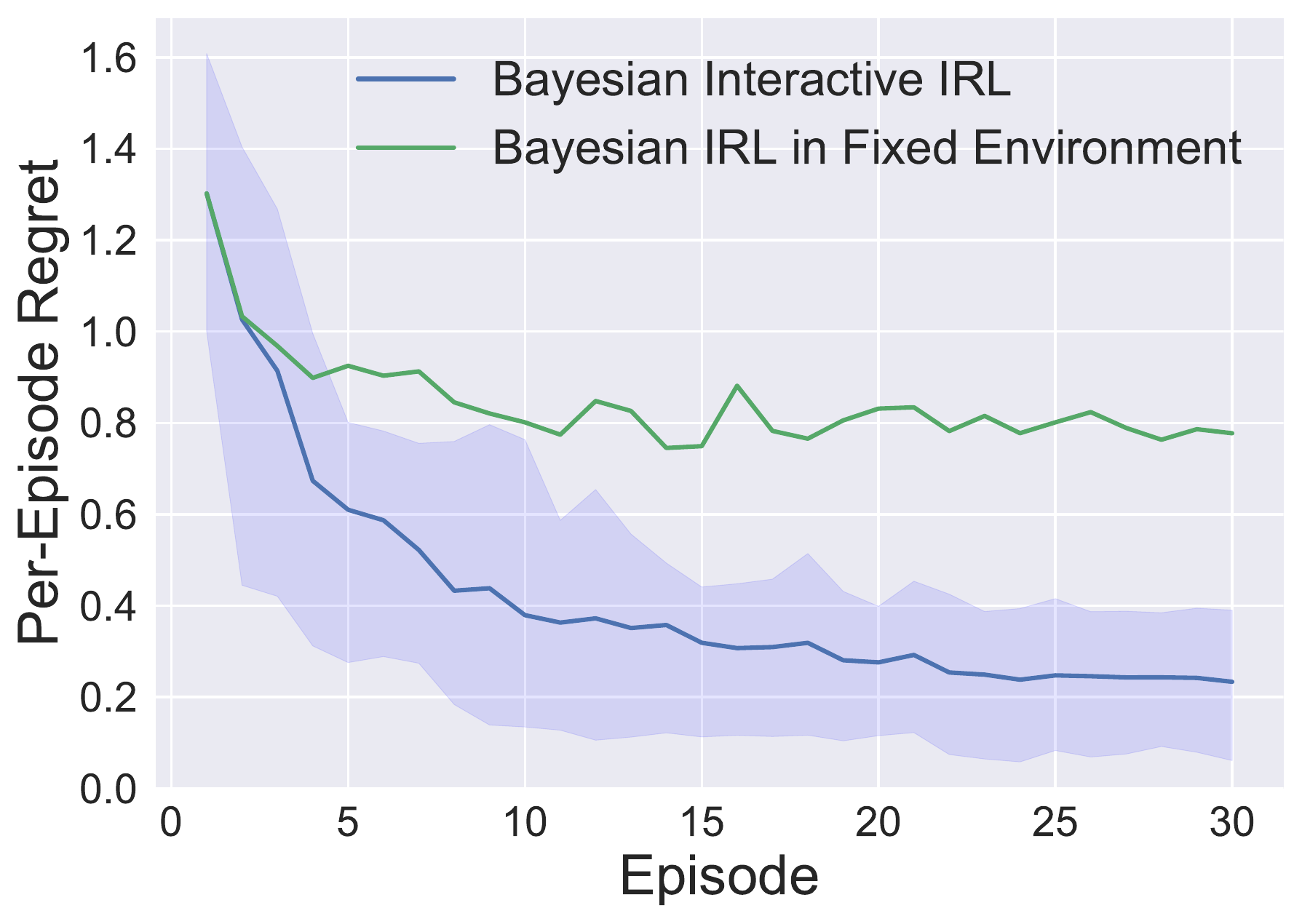}
    }
    \caption{Suboptimal Responses and Partial Information. Blue lines show the per-episode regret of Bayesian Interactive IRL (Algorithm~\ref{algorithm:Bayesian_IRL_MCMC} in Appendix~\ref{appendix:details_experiments}). Green lines refer to Bayesian IRL performed for trajectories repeatedly generated by $\pi^1_1$ and $\pi^2_1$.}
    \label{fig:my_label}
\end{figure}

\section{Discussion and Future Work}\label{section:future_work}
We considered an interactive cooperation problem when the objective is unknown to one of the agents. This can be seen as a two-agent version of the IRL problem, where one agent is actively trying to infer the preferences of the other in order to cooperate. While the classical IRL problem is generally ill-posed, the interactive version that we study here can indeed be solved if the learning agent has sufficient power to affect the transitions. This is supported by both our experimental and theoretical results. In particular, the experiments clearly show that we can more accurately estimate the reward function (and hence collaborate more effectively) if we intelligently probe the other agent's responses.  

An open theoretical question is whether upper and lower problem-dependent bounds on the episodic regret could be obtained in this setting. We presume that such bounds would involve a characterisation of $\A_1$'s power to affect the transitions. 
A natural extension of our setting would be the case where $\A_1$ does not reveal its policy to $\A_2$, but instead the latter simply observes the former's actions. 
In future work, it will also be interesting to construct Interactive IRL algorithms that scale to large state spaces (or continuous domains) and test these in real-world applications.

Our observation that reward learning benefits from demonstrations under different environment dynamics also opens up a new and interesting perspective on IRL more generally. While current IRL methods still struggle to learn satisfactory reward functions in certain domains (even with abundant data), it could be promising to try to infer the reward function from demonstrations in slight variations of the target environment (when possible). Moreover, our results suggest that receiving samples under new environment dynamics is generally more valuable than collecting additional samples from the same environment. Thus, such an approach could be useful in domains where resources are limited and samples expensive.




\bibliographystyle{icml2022}
\bibliography{ref}

\clearpage 
\newpage 

\appendix
\onecolumn




\section{Proofs for Section~\ref{section:cooperating_with_optimal}}\label{appendix:proofs_optimal}

\subsection{Proof of Theorem~\ref{lemma:feasible_set_characterization}}

\begin{customthm}{1}
Let there be some MDP without reward function $(\S, A_1, A_2, \trans, \gamma)$. A reward function $\r$ is feasible under policies $\pi^1$ and $\pi^2$ if and only if 
\begin{align*}
    \big(\trans_{\pi^1, \pi^2} - \trans_{\pi^1, \actb} \big) \big(I - \gamma \trans_{\pi^1, \pi^2}\big)^{-1} \r \succeq 0 \quad \forall   \actb \in A_2,
\end{align*}
where $\trans_{\pi^1, b}$ is the one-step transition matrix under policy $\pi^1$ and action $b \in A_2$. 
\end{customthm}
\begin{proof}[Proof of Theorem~\ref{lemma:feasible_set_characterization}]
Substituting transition matrix $\trans$ by $\trans_{\pi^1}$ in the proof by \citet{ng2000IRL} readily implies Theorem~\ref{lemma:feasible_set_characterization}. Note that if $\pi^2(s) = \bar \actb$ for all $s \in \S$, the inequality vacuously holds for $\smash{\actb = \bar \actb}$. Thus, in general we obtain $|A_2|-1$ many of the above vector inequalities. 
\end{proof}

\subsection{Proof of Lemma~\ref{lemma:minimal feasible set}}
\begin{customlemma}{1}
If $\A_2$ responds optimally to the commitment of $\A_1$, any reward function $\r$ is indistinguishable from its positive affine transformations, i.e.\ $\r$ is feasible iff every $\bar \r \in \mathcal{A}(\r)$ is feasible.
\end{customlemma}

\begin{proof}[Proof of Lemma \ref{lemma:minimal feasible set}]
We write $V_{\pi^1, \pi^2}(\r)$ for the value function under joint policy $(\pi^1, \pi^2)$ and reward function $\r$.
The Bellman equation tells us that the value function under $(\pi^1, \pi^2)$ and reward function $\lambda_1 \r + \lambda_2 \mathbf{1} \in \minset(\r)$ is given by 
\begin{align*}
    V_{\pi^1, \pi^2} (\lambda_1 \r + \lambda_2 \mathbf{1}) = (I- \gamma \trans_{\pi^1, \pi^2})^{-1} (\lambda_1 \r + \lambda_2 \mathbf{1}).
\end{align*}
Now, since $\trans_{\pi^1, \pi^2}$ is a stochastic matrix, it is easy to check that $(I- \gamma \trans_{\pi^1, \pi^2})^{-1} \mathbf{1} = (1-\gamma)^{-1} \mathbf{1}$. It then follows that 
\begin{align*}
    V_{\pi^1, \pi^2} (\lambda_1 \r + \lambda_2 \mathbf{1}) = \lambda_1 V_{\pi^1, \pi^2} (\r) + K,
\end{align*}
where $K = \lambda_2 (1-\gamma)^{-1} \mathbf{1}$. Hence, we find that any policy $\pi^2$ that maximises $V_{\pi^1, \pi^2} (\r)$ also maximises $V_{\pi^1, \pi^2} (\lambda_1 \r + \lambda_2 \mathbf{1})$ for $\lambda_1 \geq 0$ and $\lambda_2 \in \Reals$, and vice versa. This means that $\r$ is feasible if and only if every $\bar \r \in \minset(\r)$ is feasible. 
\end{proof}

\subsection{Proof of Theorem~\ref{proposition:ideal_environment}}

\begin{customthm}{2}
(A) If $\A_2$ responds optimally and (B) if for all $\mathcal{T}: \S \times A_2 \to \Delta(\S)$ there exists $\pi^1$ such that $\trans_{\pi^1} \equiv \mathcal{T}$, then there exists a policy $\pi^1$ with optimal response $\pi^2$ such that the feasible set of reward functions under $(\pi^1, \pi^2)$ is given by $\minset(\truereward)$, i.e.\ $\feasible ((\pi^1, \pi^2)) = \minset(\truereward)$. 
\end{customthm}
For the proof of Theorem~\ref{proposition:ideal_environment}, we will need the following technical lemma.
\begin{lemma}\label{appendix-lemma:half-spaces characterization}
Any (two-dimensional) plane $\mathcal{R} \subseteq \Reals^{N}$ can be uniquely characterized by the intersection of $N-1$ many half-spaces $H_i = \{ x \in \Reals^{N} \colon \varphi_i^\top x \geq 0\}$, where $\varphi_1, \dots, \varphi_{N-1} \in \Reals^N$ are vectors orthogonal to $\mathcal{R}$. 
\end{lemma}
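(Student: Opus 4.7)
My plan is to prove two inclusions separately. The containment $\mathcal{R} \subseteq \bigcap_{i=1}^{N-1} H_i$ will hold for every choice of $\varphi_i \in \mathcal{R}^\perp$, while the reverse inclusion $\bigcap_{i=1}^{N-1} H_i \subseteq \mathcal{R}$ will only hold for a carefully chosen collection of $\varphi_i$'s, which I plan to exhibit explicitly. Since $\mathcal{R}$ is a two-dimensional linear subspace of $\mathbb{R}^N$, its orthogonal complement $\mathcal{R}^\perp$ has dimension $N-2$. The first inclusion is immediate: for any $\varphi \in \mathcal{R}^\perp$ and $x \in \mathcal{R}$, we have $\varphi^\top x = 0 \geq 0$, so $\mathcal{R} \subseteq H_\varphi$ regardless of how the $\varphi_i$'s are selected within $\mathcal{R}^\perp$.

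For the reverse inclusion, I would decompose any $x \in \mathbb{R}^N$ orthogonally as $x = x_\parallel + x_\perp$ with $x_\parallel \in \mathcal{R}$ and $x_\perp \in \mathcal{R}^\perp$. Since $\varphi_i \in \mathcal{R}^\perp$ kills $x_\parallel$, we have $\varphi_i^\top x = \varphi_i^\top x_\perp$, so $x \in \bigcap_i H_i$ iff $\varphi_i^\top x_\perp \geq 0$ for all $i$; the task is then to choose the $\varphi_i$'s so that this system of inequalities forces $x_\perp = 0$. My plan is to fix an orthonormal basis $e_1, \ldots, e_{N-2}$ of $\mathcal{R}^\perp$ and take $\varphi_i = e_i$ for $i = 1, \ldots, N-2$ together with $\varphi_{N-1} = -(e_1 + \cdots + e_{N-2})$. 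Writing $x_\perp = \sum_{j=1}^{N-2} a_j e_j$, the constraints $\varphi_i^\top x_\perp \geq 0$ read $a_i \geq 0$ for $i \leq N-2$ and $-\sum_j a_j \geq 0$, which together force $a_j = 0$ for all $j$. Hence $x_\perp = 0$ and $x \in \mathcal{R}$, completing the equality $\bigcap_i H_i = \mathcal{R}$.

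The subtle point — and the reason $N-1$ is the right count — is that although $\mathcal{R}^\perp$ has dimension only $N-2$, no collection of $N-2$ or fewer half-space normals lying in $\mathcal{R}^\perp$ can carve out exactly $\mathcal{R}$: the dual cone of any such smaller collection inside $\mathcal{R}^\perp$ always contains non-zero directions, because positively spanning a $d$-dimensional space requires at least $d+1$ vectors. The extra vector $\varphi_{N-1} = -(e_1 + \cdots + e_{N-2})$ in the construction is precisely what collapses this dual cone to $\{0\}$. I expect no serious obstacle beyond making this counting observation; once it is in place, the remainder reduces to an elementary orthogonal decomposition and a direct verification of the two inequalities above.
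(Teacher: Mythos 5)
Your proposal is correct and follows essentially the same route as the paper: both take an orthogonal basis $\varphi_1,\dots,\varphi_{N-2}$ of $\mathcal{R}^\perp$ together with $\varphi_{N-1}=-(\varphi_1+\cdots+\varphi_{N-2})$, and both derive $\bigcap_i H_i\subseteq\mathcal{R}$ from the fact that nonnegative numbers summing to zero must all vanish (you phrase this in coordinates, the paper by contradiction, which is an immaterial difference). The only cosmetic gap is that, like the paper, you tacitly treat $\mathcal{R}$ as a linear subspace through the origin, which is the case actually needed in the application.
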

\begin{proof}[Proof of Lemma \ref{appendix-lemma:half-spaces characterization}]
W.lo.g.\ let $\mathcal{R}$ be some plane in $\Reals^N$ through the origin. 
Let the vectors $v_1$ and $v_2$ denote an orthogonal basis of $\mathcal{R}$, i.e.\  $\mathcal{R} = \{ \lambda_1 v_1 + \lambda_2 v_2 \colon \lambda_1, \lambda_2 \in \Reals \}$ and $v_1^\top v_2 = 0$. We can then find vectors $\varphi_1, \dots, \varphi_{N-2}$ such that $\{\varphi_1, \dots, \varphi_{N-2}, v_1, v_2\}$ forms an orthogonal basis of $\Reals^N$. 
In particular, we then have $\varphi_i^\top x = 0$ for all $x \in \mathcal{R}$ and $i \in [N-2]$. 
Moreover, we define the vector $$\varphi_{N-1} = - (\varphi_1 + \dots + \varphi_{N-2})$$ and note that $\varphi_{N-1}$ is orthogonal to $\mathcal{R}$ as well. 
Let the half-spaces induced by vectors $\varphi_1, \dots, \varphi_{N-1}$ be given by $H_i = \{ x \in \Reals^N \colon \varphi_i^\top x \geq 0\}$ for $i \in [N-1]$. We now show that $H_1 \cap \dots \cap H_{N-1} = \mathcal{R}$. 

We begin by verifying that $H_1 \cap \dots \cap H_{N-1} \subseteq \mathcal{R}$. 
Suppose this is not true and there exists a vector $w \notin \mathcal{R}$ such that $\varphi_i^\top w \geq 0$ for all $i \in [N-1]$, i.e.\ $w \in H_1 \cap \dots \cap H_{N-1}$. 
Then, we must have $\varphi_j^\top w > 0$ for some $j \in [N-2]$ as the orthogonal complement of $\spn(\varphi_1, \dots, \varphi_{N-2})$ is given by $\mathcal{R}$ and we assumed $w \notin \mathcal{R}$. 
By definition of $\varphi_{N-1}$, we have $\varphi_1 + \dots + \varphi_{N-1} = 0$ and thus, $(\varphi_1 + \dots + \varphi_{N-1})^\top w = 0$. 
However, it also holds that $$\varphi_1^\top w + \dots + \varphi_{N-1}^\top w > 0,$$ since $\varphi_i^\top w \geq 0$ for $i \in [N-1]$ and $\varphi_j^\top w > 0$ for some $j \in [N-2]$. Thus, such $w$ cannot exist and we have shown that $H_1 \cap \dots \cap H_{N-1} \subseteq \mathcal{R}$. 
Finally, the relation $\mathcal{R} \subseteq H_1 \cap \dots \cap H_{N-1}$ also holds as $\varphi_1, \dots, \varphi_{N-1}$ are chosen orthogonal to $\mathcal{R}$ and thus, $\varphi_i^\top  x = 0$ for all $i \in [N-1]$ and $x \in \mathcal{R}$. 

Note that we can analogously prove that any line $\mathcal{C} = \{ \lambda v : \lambda \in \Reals\}$ in $\Reals^N$ can be uniquely characterised by $N$ half-spaces. In this case, we can find an orthogonal basis $\{\varphi_1, \dots, \varphi_{N-1}, v\}$ and define $\varphi_N = -(\varphi_1+ \dots + \varphi_{N-1})$. The remainder of the proof then follows the same line of argument as before.
\end{proof}

\begin{proof}[Proof of Theorem~\ref{proposition:ideal_environment}]
Let $N = |\S|$. We will now show that under the assumptions of Theorem~\ref{proposition:ideal_environment}, there exists a policy $\pi^1$ with optimal response $\pi^2$ so that only positive affine transformations of $\truereward$ are feasible under observation $(\pi^1, \pi^2)$, i.e.\ $\mathcal{R}((\pi^1, \pi^2) = \minset(\truereward)$.

First we observe that we can w.l.o.g.\ assume only two actions for $\A_2$, i.e.\ $|A_2|=2$. To see this suppose that $|A_2|> 2$ and consider an action space $A_2^\prime \subset A_2$ with $|A_2^\prime| \geq 2$ and transition kernel $\trans^\prime_{\pi^1}: \S \times A_2^\prime \to \Delta(\S)$ defined as $\trans^\prime_{\pi^1} (\cdot \mid s, b) = \trans_{\pi^1} (\cdot \mid s, b)$ for $b \in A_2^\prime$.
If $\pi^2(s) \in A_2^\prime$ for all $s \in \S$, then the feasible set under action space $A_2$ is subset of the feasible set under action space $A_2^\prime$.
Thus, we can assume w.l.o.g. that $A_2 = \{b_1, b_2\}$. 
From hereon out, we assume that the true reward function $\truereward$ is non-constant. The special case of a constant true reward function is addressed at the end. 

We first construct an orthogonal basis $\{\varphi_1, \dots, \varphi_N\}$ such that the corresponding half-spaces characterise $\minset(\truereward$
and then show that there exists $\pi_1$ such that  $$(\trans_{\pi^1, b_1} - \trans_{\pi^1, b_2}) (I- \gamma \trans_{\pi^1, b_1})^{-1} = (\varphi_1, \dots, \varphi_N)^\top. $$
For non-constant $\truereward$ we have that $\mathcal{R} \triangleq \spn(\truereward, \mathbf{1})$ describes a plane in $\Reals^N$ and $\minset(\truereward) \subset \mathcal{R}$. By Lemma \ref{appendix-lemma:half-spaces characterization}, there exist vectors $\varphi_1, \dots, \varphi_{N-1} \in \Reals^N$ such that $\varphi_i^\top x = 0$ for all $x \in \mathcal{R}$ and $H_1 \cap \dots \cap H_{N-1} = \mathcal{R}$ with $H_i = \{ x \in \Reals^N \colon \varphi_i^\top x \geq 0\}$. 
In particular, it holds that $\varphi_i^\top \mathbf{1} = 0$, i.e.\ $\lVert \varphi_i \rVert_1 = 0$ for all $i \in [N-1]$.

Now, let us consider the orthogonal projection of $\truereward$ given by $\truereward = \alpha \mathbf{1} + w$ for $\alpha \in \Reals$ and $w \in \Reals^N$ with $w^\top \mathbf{1} = 0$. 
It follows that $w^\top \truereward = w^\top (\alpha \mathbf{1} + w) = w^\top w > 0$, since $\truereward$ is non-constant and thus, $w \neq \mathbf{0}$. 
Let us define $\varphi_N = \eta w$ for some scalar $\eta > 0$. Then, we have $\varphi_N^\top x \geq 0$ for all $x \in \{\lambda_1 \truereward + \lambda_2 \mathbf{1} \colon \lambda \geq 0, \lambda_2 \in \Reals\}$, since $w^\top \truereward > 0$ and $w^\top \mathbf{1} = 0$. Similarly, we have $\varphi_N^\top \hat x < 0$ for all $ \hat x \in \{ \lambda_1 \truereward + \lambda_2 \mathbf{1} \colon \lambda_1 < 0 , \lambda_2 \in \Reals \}$. It then follows that 
$$H_1 \cap \dots \cap H_N = \mathcal{R} \cap H_N = \minset(\truereward),$$ where $H_N = \{ x \in \Reals^N \colon \varphi_N^\top x \geq 0\}$. 
Note that every $\varphi_i$ with $i \in [N]$ satisfies $\lVert \varphi_i \rVert_1 = 0$ and that the half-spaces $H_i$ are invariant under positive linear transformation of $\varphi_i$. We can therefore assume that $\varphi_1, \dots, \varphi_N$ take values in $[\frac{1}{N} - 1, \frac{1}{N}]$. 
We denote with $\Phi = (\varphi_1, \dots, \varphi_N)^\top$ the matrix with rows $\varphi_1, \dots, \varphi_N$. 

Recall that $A_2 = \{b_1, b_2\}$. We will now show that there exists a policy $\pi_1$ such that 
\begin{align*}
    (\trans_{\pi^1, b_1} - \trans_{\pi^1, b_2}) (I- \gamma \trans_{\pi^1, b_1})^{-1} = \Phi.
\end{align*}
By assumption, there exists a $\pi^1$ such that $\trans_{\pi^1, b_1} \equiv B_1$ and $\trans_{\pi^1, b_2} \equiv B_2$ for any two stochastic matrices $B_1$ and $B_2$. We set $\trans_{\pi^1, b_1}(s' \mid s) = \frac{1}{N}$ for all $s, s' \in \S$, which yields
\begin{align}\label{equation:phi_identity}
    \Phi (I-\gamma \trans_{\pi^1, b_1}) = \Phi - \gamma \Phi \trans_{\pi^1, b_1}  = \Phi,
\end{align}
since $\lVert \varphi_i\rVert_1 = 0$ for all $i \in [N]$ and $\trans_{\pi^1, b_1}$ is a constant matrix. Now, set $\trans_{\pi^1, b_2} \equiv \trans_{\pi^1, b_1} - \Phi$ and note that since $\lVert \varphi_i\rVert_1 = 0$ for all $i \in [N]$, the matrix $\trans_{\pi^1, b_2}$ is indeed stochastic. It then follows that  
\begin{align*}
    (\trans_{\pi^1, b_1} - \trans_{\pi^1, b_2})  (I- \gamma \trans_{\pi^1, b_1})^{-1} = \Phi (I- \gamma \trans_{\pi^1, b_1})^{-1} = \Phi, 
\end{align*}
by equation \eqref{equation:phi_identity}. Note that this means that indeed action $b_1$ is the optimal response to policy $\pi^1$ as $\Phi \truereward \succeq 0$ by construction of $\Phi$.\footnote{This can, for instance, be verified using Theorem~\ref{lemma:feasible_set_characterization}.} 
Therefore, from Theorem \ref{lemma:feasible_set_characterization} it follows that any feasible reward function $\r$ must satisfy 
\begin{align*}
(\trans_{\pi^1,b_1} - \trans_{\pi^1, b_2})(I-\gamma \trans_{\pi^1, b_1})^{-1} \r = \Phi \r \succeq 0,
\end{align*}
i.e.\ $\varphi_i^\top \r \geq 0$ for all $i \in [N]$. Hence, any feasible reward function must be in $H_1 \cap \dots \cap H_N$ and thus element in $\minset(\truereward)$. So, we have shown that the feasible set of reward functions under $\pi^1$ with response $\pi^2 \equiv b_1$ is given by $\minset(\truereward)$.

In the special case of the constant reward function $\truereward$, we have that the set $\minset(\truereward) = \{ \lambda \mathbf{1} : \lambda \in \Reals\}$ becomes not a plane, but a line in $\Reals^{N}$. The proof for this case
then progresses similarly to the proof above with the difference that we describe $\minset(\truereward)$ by $N$ many half-spaces and that there is no need to consider the orthogonal projection of $\truereward$ as done before.

\end{proof}


\subsection{Proof of Corollary~\ref{theorem:verifying_r_transform_of_true_reward}}
\begin{customcor}{\ref{theorem:verifying_r_transform_of_true_reward}}
Under Assumptions (A) and (B) of Theorem~\ref{proposition:ideal_environment}, the learner can verify in any episode whether a reward function $\r$ is a positive affine transformation of the actual and unknown reward function $\truereward$.
\end{customcor}
\begin{proof}
Recall that it follows from Lemma~\ref{lemma:minimal feasible set} that $\minset (\truereward) \subseteq \feasible((\pi^1, \pi^2))$ for any policy $\pi^1$ with optimal response $\pi^2$. In other words, the positive affine transformations of the unknown reward function $\truereward$ are always feasible as $\truereward$ is always feasible. 
Now, let $\r \in \Reals^{|\S|}$ be some reward function and suppose that $\A_1$ plays the ``ideal'' policy $\pi^1$ with respect to $\r$ as it is constructed in the proof of Theorem~\ref{proposition:ideal_environment}. Let $\pi^2$ be an optimal response to $\pi^1$. 
It follows from the combination of Lemma~\ref{lemma:minimal feasible set} and Theorem~\ref{proposition:ideal_environment} that $\feasible((\pi^1, \pi^2)) = \minset(\r)$ if and only if $\r \in \minset(\truereward)$. Now, using linear programming, we can check whether $\feasible((\pi^1, \pi^2)) = \minset(\r)$ holds true. If $\feasible((\pi^1, \pi^2)) = \minset(\r)$, we know that $\r$ must be a positive affine transformation of $\truereward$. On the other hand, if we observe $\feasible((\pi^1, \pi^2)) \neq \minset(\r)$, then $\r$ cannot be element in $\Aff(\truereward)$. 

\end{proof}

\subsection{Proof of Lemma~\ref{lemma:optimal_joint_policy_optimal}}

\begin{customlemma}{2}
Let $(\bar \pi^1, \bar\pi^2)$ 
be an optimal joint policy. If $\A_2$ responds optimally to the commitment of $\A_1$, then $V_{\bar \pi^1, \pi^2(\bar \pi^1)} = V_{\bar \pi^1, \bar \pi^2}$. In particular, this entails that $\max_{\pi^1} V_{\pi^1, \pi^2(\pi^1)} = \max_{\pi^1, \pi^2} V_{\pi^1, \pi^2}$.
\end{customlemma}

\begin{proof}[Proof of Lemma \ref{lemma:optimal_joint_policy_optimal}]
Let $(\bar \pi^1, \bar \pi^2) \in \argmax_{\pi^1, \pi^2} V_{\pi^1, \pi^2}$. Suppose $\A_1$ commits to $\bar \pi^1$. Then, $\A_2$ responds with $\pi^2(\bar \pi^1)$ such that $V_{\hat\pi^1, \pi^2(\bar \pi^1)} \succeq V_{\bar \pi^1, \pi^2}$ for all $\pi^2$ by optimality of $\A_2$. Now, since $V_{\bar \pi^1, \bar \pi^2} \succeq \max_{\pi^1} V_{\pi^1, \pi^2(\pi^1)}$ always, we also have  
\begin{align*}
    \max_{\pi^1}V_{\pi^1, \pi^2(\pi^1)}&  \succeq V_{\bar \pi^1, \pi^2(\bar \pi^1)} \succeq V_{\bar \pi^1, \bar \pi^2} \succeq \max_{\pi^1} V_{\pi^1, \pi^2(\pi^1)}.
\end{align*}
Thus, $\max_{\pi^1} V_{\pi^1, \pi^2(\pi^1)} = V_{\bar \pi^1, \bar \pi^2} = \max_{\pi^1, \pi^2} V_{\pi^1, \pi^2}$. In other words, Lemma~\ref{lemma:optimal_joint_policy_optimal} states that the optimal joint policy yields an optimal commitment strategy for $\A_1$ when $\A_2$ responds optimally.
\end{proof}

\subsection{Proof of Proposition~\ref{lemma:algorithm_converges}}
\setcounter{proposition}{0}
\begin{proposition}
Suppose that for any non-constant reward function $\r \in \Delta(\S)$ it holds that if an optimal joint policy $(\pi^1, \pi^2)$ under $\r$ is suboptimal under $\truereward$, then in return there exists an optimal response $\pi^2(\pi^1)$ under $\truereward$ that is suboptimal under $\r$. Moreover, assume that $\A_2$ responds optimally and breaks ties between equally good policies uniformly at random. Then, the average regret suffered by Algorithm~\ref{algorithm:full_info_opt_behaviour} converges to zero almost surely. 
\end{proposition}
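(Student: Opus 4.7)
The plan is to argue, via a finite polyhedral decomposition of the reward simplex, that ``bad'' cells of this decomposition are eliminated from the feasible set $\feasible_t$ after almost surely finitely many visits, leaving only cells on which any optimal commitment strategy chosen by Algorithm~\ref{algorithm:full_info_opt_behaviour} is also optimal under $\truereward$. The argument then closes by showing that the number of suboptimal episodes is almost surely finite.

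First I would introduce a finite partition $\{C_k\}$ of $\Delta(\S)$ such that within each cell (a) for every commitment $\pi^1$ the set of $\A_2$-optimal responses is constant in $\r$, and (b) the ranking of the commitment values $V_{\pi^1, \pi^2(\pi^1)}(s_0)$ across $\pi^1$ is constant. The inequalities in Theorem~\ref{lemma:feasible_set_characterization}, together with pairwise comparisons of commitment values, form a finite arrangement of linear hyperplanes whose chambers give such a decomposition. Consequently the optimal-commitment set $\optPi_1$ is constant on each $C_k$. Since $c$ is sampled uniformly (with resampling when the LP returns a constant reward), the LP solution $\r_t$ lies in the relative interior of a single cell almost surely. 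Call a cell \emph{good} if every $\pi^1 \in \optPi_1$ on that cell is an optimal commitment strategy under $\truereward$, and \emph{bad} otherwise. By Lemma~\ref{lemma:optimal_joint_policy_optimal}, landing in a good cell yields zero per-episode regret, so only visits to bad cells contribute to the regret.

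The crux is showing that each bad cell is eliminated with uniform positive probability per visit. Suppose $\r_t$ lies in bad cell $C_k$ and the algorithm commits to $\pi^1_{\ast} \in \optPi_1$. The standing assumption supplies a policy $\tilde{\pi}^2$ that is an optimal response to $\pi^1_{\ast}$ under $\truereward$ but suboptimal in response to $\pi^1_{\ast}$ under $\r_t$, and by constancy of response sets on $C_k$ this $\tilde{\pi}^2$ is in fact suboptimal in response to $\pi^1_{\ast}$ under every $\r \in C_k$. Because $\A_2$ responds optimally and breaks ties uniformly at random over its finitely many deterministic policies, the event $\{\pi^2_t = \tilde{\pi}^2\}$ has probability at least some constant $p_0 > 0$ depending only on the MDP. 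On that event, the Theorem~\ref{lemma:feasible_set_characterization} constraint appended to $\mathcal{C}_{t+1}$ in line~5 of the algorithm is violated by every $\r \in C_k$, so $C_k$ is excised from the feasible set permanently.

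A Borel--Cantelli-style argument then finishes the proof: the number of episodes in which $\r_t$ lies in any fixed bad cell is dominated by a geometric random variable with parameter $p_0$, hence almost surely finite, and summing over the finitely many bad cells gives an a.s.\ finite total number of suboptimal episodes. Since per-episode regret is bounded (by $\|\truereward\|_\infty/(1-\gamma)$), the cumulative regret $\loss(\pi^1_1, \dots, \pi^1_T)$ is a.s.\ bounded, and hence $\loss(\pi^1_1,\dots,\pi^1_T)/T \to 0$ almost surely. The main obstacle is verifying that the partition can be chosen fine enough that a single certificate $\tilde{\pi}^2$ produced by the assumption at one point $\r_t$ witnesses the infeasibility of the entire cell; defining the cells by the full response map $\pi^1 \mapsto$ optimal-response-set (refined by the commitment-value ordering) accomplishes this, provided one first argues that the random LP solution almost surely lies in the relative interior of a cell.
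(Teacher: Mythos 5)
Your proof is correct and follows essentially the same route as the paper's: a finite partition of $\Delta(\S)$ by optimal-response sets, a witnessing response supplied by the hypothesis that is played with positive probability under uniform tie-breaking, and permanent elimination of the entire cell containing $\r_t$ once that response is observed. The only difference is bookkeeping at the end: the paper terminates via a fixed Lebesgue-measure decrement $\varepsilon$ per elimination, whereas you count visits per cell with a geometric domination argument --- both yield almost surely finitely many regret-incurring episodes.
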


For the proof of Proposition~\ref{lemma:algorithm_converges}, we will need the following sets: Let $\optPi(\r)$ denote the set of optimal joint policies under reward function $\r$, i.e.\ the set of optimal joint policies in the MDP $(\S, A_1, A_2, \trans, \r, \gamma)$. 
Further, we denote the set of optimal responses under policy $\pi^1$ and reward function $\r$ by $\optPi_2(\r, \pi^1)$. 
A key object of interest is the following set of reward functions.
Let $\mathcal{O}$ be the set of reward functions in $\Delta(\S)$ that always induce an optimal joint policy, i.e.\ $$\mathcal{O} = \{ \r \in \Delta(\S) : \optPi (\r) \subseteq \optPi (\truereward) \}. $$ 
Note that by Lemma~\ref{lemma:optimal_joint_policy_optimal} any optimal joint policy yields an optimal commitment strategy for agent $\A_1$, i.e.\ any $\r \in \mathcal{O}$ induces an optimal commitment strategy. We can easily check that $\mathcal{O}$ is a convex set.

\begin{lemma}\label{aux_lemma:O_is_convex}
The set $\mathcal{O}$ is convex. 
\end{lemma}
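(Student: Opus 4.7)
The plan is to exploit two basic facts: (i) for any fixed joint policy $(\pi^1,\pi^2)$, the value function $V_{\pi^1,\pi^2}(\r) = (I-\gamma \trans_{\pi^1,\pi^2})^{-1}\r$ is linear in $\r$, and (ii) for any reward $\r$, the optimal joint value $V^*(\r) \defn \max_{\pi^1,\pi^2} V_{\pi^1,\pi^2}(\r)$ exists element-wise and is attained simultaneously at every state by any optimal joint policy. Fact (ii) is the standard MDP fact applied to the product MDP over joint actions, and makes $V^*(\cdot)$ a convex function of $\r$ (element-wise, as a pointwise maximum of linear functions).

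Given $\r_1,\r_2 \in \mathcal{O}$ and $\lambda\in(0,1)$, set $\r_\lambda \defn \lambda \r_1 + (1-\lambda)\r_2$ and let $(\pi^1,\pi^2)\in\optPi(\r_\lambda)$ be arbitrary. Combining linearity of $V_{\pi^1,\pi^2}(\cdot)$ with the element-wise convexity bound $V^*(\r_\lambda)\preceq \lambda V^*(\r_1) + (1-\lambda) V^*(\r_2)$, I would write
\begin{align*}
\lambda V_{\pi^1,\pi^2}(\r_1) + (1-\lambda) V_{\pi^1,\pi^2}(\r_2) \;=\; V_{\pi^1,\pi^2}(\r_\lambda) \;=\; V^*(\r_\lambda) \;\preceq\; \lambda V^*(\r_1) + (1-\lambda) V^*(\r_2).
\end{align*}
Since $V_{\pi^1,\pi^2}(\r_i) \preceq V^*(\r_i)$ element-wise for $i=1,2$ and $\lambda,(1-\lambda)>0$, the above element-wise inequality can only hold with equality in every coordinate, forcing $V_{\pi^1,\pi^2}(\r_1) = V^*(\r_1)$ and $V_{\pi^1,\pi^2}(\r_2) = V^*(\r_2)$. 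Thus $(\pi^1,\pi^2) \in \optPi(\r_1) \cap \optPi(\r_2)$.

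The conclusion then follows immediately: since $\r_1 \in \mathcal{O}$, we have $\optPi(\r_1)\subseteq \optPi(\truereward)$, so $(\pi^1,\pi^2)\in\optPi(\truereward)$. As $(\pi^1,\pi^2)$ was an arbitrary element of $\optPi(\r_\lambda)$, this gives $\optPi(\r_\lambda)\subseteq\optPi(\truereward)$, i.e.\ $\r_\lambda \in \mathcal{O}$. The cases $\lambda \in \{0,1\}$ are trivial. I do not foresee a real obstacle here; the only subtle point is insisting on element-wise (rather than scalar) optimality throughout, so that a weighted sum of element-wise inequalities can be inverted to yield componentwise equality for each summand.
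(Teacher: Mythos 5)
Your overall plan (linearity of $V_{\pi^1,\pi^2}$ in $\r$ plus a comparison against optimal values) is in the right family, but the central deduction is a non sequitur, and the intermediate claim it is meant to establish is false. Writing $\r_\lambda$ for $\lambda\r_1+(1-\lambda)\r_2$: from $V_{\pi^1,\pi^2}(\r_i)\preceq V^*(\r_i)$ for $i=1,2$ and the chain $\lambda V_{\pi^1,\pi^2}(\r_1)+(1-\lambda)V_{\pi^1,\pi^2}(\r_2)=V^*(\r_\lambda)\preceq\lambda V^*(\r_1)+(1-\lambda)V^*(\r_2)$, nothing forces equality; a convex combination of quantities each below its bound lying below the corresponding combination of the bounds is automatic and carries no information. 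Equality would follow from the \emph{reverse} inequality $V^*(\r_\lambda)\succeq\lambda V^*(\r_1)+(1-\lambda)V^*(\r_2)$, but $V^*$ is convex (a pointwise maximum of linear functions), not concave, and the reverse inequality holds only when some single joint policy is simultaneously optimal for $\r_1$ and $\r_2$ --- which is exactly what is not given. Concretely, your intermediate conclusion $(\pi^1,\pi^2)\in\optPi(\r_1)\cap\optPi(\r_2)$ can fail: take a start state with two actions leading to distinct absorbing states $s_1,s_2$, let $\truereward$ reward both absorbing states equally, let $\r_1$ reward only $s_1$ and $\r_2$ only $s_2$. Then $\r_1,\r_2\in\mathcal{O}$ and the mixture makes both actions optimal, yet the policy ``go to $s_2$'' is optimal for $\r_\lambda$ but not for $\r_1$. (The lemma's conclusion still holds in this example; it is only your intermediate step that breaks.)

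The paper instead argues by contradiction: if $\pi\in\optPi(\r_\lambda)$ but $\pi\notin\optPi(\truereward)$, then $\pi$ is suboptimal under both $\r_1$ and $\r_2$ (since $\optPi(\r_i)\subseteq\optPi(\truereward)$), and one exhibits a \emph{single} competitor $\nu$ with $V_\nu(\r_i)\succeq V_\pi(\r_i)$ for both $i$ and strict inequality at some state; linearity then gives a strict violation of $V_\pi(\r_\lambda)\succeq V_\nu(\r_\lambda)$, contradicting $\pi\in\optPi(\r_\lambda)$. The existence of a common dominating $\nu\in\optPi(\r_1)\cap\optPi(\r_2)$ is the crux, and the paper secures it by taking $\nu\in\optPi(\truereward)=\optPi(\r_1)=\optPi(\r_2)$. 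Any repair of your argument must likewise produce such a common policy; the convexity bound on $V^*$ alone cannot do that work.
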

\begin{proof}[Proof of Lemma~\ref{aux_lemma:O_is_convex}]
Let $\r_1, \r_2 \in \mathcal{O}$. We show that $\lambda \r_1 + (1-\lambda) \r_2 \in \mathcal{O}$ for any $\lambda \in [0,1]$. Recall that the value function $V_\pi (\r) = (I-\gamma \trans_{\pi})^{-1} \r$ is linear in $\r$ and we therefore have $V_\pi (\lambda \r_1 + (1-\lambda) \r_2) = \lambda V_{\pi}(\r_1) + (1-\lambda) V_{\pi} (\r_2)$.
In a first step, we prove $\optPi(\lambda \r_1 + (1-\lambda) \r_2) \subseteq \optPi (\truereward)$.  Let $\pi \in \optPi (\lambda \r_1 + (1-\lambda) \r_2)$. Then, for all policies $\nu$ it must hold that 
\begin{align}\label{aux_equation:O_convex}
    V_{\pi} (\lambda \r_1 + (1-\lambda) \r_2) \succeq V_{\nu} (\lambda \r_1 + (1-\lambda) \r_2),
\end{align} where $\succ$ denotes element-wise inequality. Now, suppose that $\pi \notin \optPi (\truereward)$. It follows that $V_{\pi} (\r_1) \preceq V_{\nu} (\r_1)$ and $V_{\pi} (\r_2) \preceq V_{\nu} (\r_2)$ for some $\nu \in \optPi(\truereward) = \optPi(\r_1) = \optPi(\r_2)$ with strict inequality for at least one $s \in \S$. This contradicts equation~\eqref{aux_equation:O_convex} and it follows that $\optPi(\lambda \r_1 + (1-\lambda) \r_2) \subseteq \optPi (\truereward)$. We will now verify the relation $\optPi (\truereward) \subseteq \optPi(\lambda \r_1 + (1-\lambda) \r_2)$. For any $\pi \in \optPi(\truereward)$, we have $V_{\pi} (\r_1) \succeq V_{\nu} (\r_1)$ and $V_{\pi} (\r_2) \succeq V_{\nu} (\r_2)$ for all policies $\nu$. It then directly follows that $\pi \in \optPi(\lambda \r_1 + (1-\lambda) \r_2)$ and thus, $\optPi (\truereward) \subseteq \optPi(\lambda \r_1 + (1-\lambda) \r_2)$, i.e.\ $\lambda \r_1 + (1-\lambda) \r_2 \in \mathcal{O}$.
\end{proof}
Interestingly, Lemma~\ref{aux_lemma:O_is_convex} implies that the set of reward functions that induce an optimal commitment strategy is a connected set. 
We will now prove Proposition~\ref{lemma:algorithm_converges}.

\begin{proof}[Proof of Proposition~\ref{lemma:algorithm_converges}]
As Algorithm~\ref{algorithm:full_info_opt_behaviour} only considers reward functions in the simplex $\Delta(\S)$, we will simply write $\feasible_t$ instead of $\feasible_t \cap \Delta(\S)$ for notational convenience. 

In episode $t$, Algorithm~\ref{algorithm:full_info_opt_behaviour} chooses a vertex of the set of feasible solutions of the linear program, i.e.\ a reward function $\r_t \in \feasible_t$. 
Note that by construction of Algorithm~\ref{algorithm:full_info_opt_behaviour} we never select the constant reward function in $\Delta(\S)$. 
For any $\r_t \in \feasible_t$ obtained from the LP \eqref{equation:linear_program} with uniformly random objective function $c$ there are two possible cases: $\r_t \in \mathcal{O}$ or $\r_t \notin \mathcal{O}$. If $\r_t \in \mathcal{O}$, then $\r_t$ induces an optimal joint policy, i.e.\ an optimal commitment strategy by Lemma~\ref{lemma:optimal_joint_policy_optimal}.
Accordingly, Algorithm~\ref{algorithm:full_info_opt_behaviour} commits to an optimal commitment strategy and thus suffers zero regret in episode $t+1$. We want to highlight that the proof does not require that the objective function in Algorithm~\ref{algorithm:full_info_opt_behaviour} is being chosen in a randomised fashion. However, randomising the choice of the objective improved exploration in our experiments.

In the following, we show that for the case of $\r_t \notin \mathcal{O}$, Algorithm~\ref{algorithm:full_info_opt_behaviour} strictly decreases the set of feasible reward functions with positive probability. In order to show this, we first construct a finite cover of $\Delta(\S)$.
Let $\Pi_1$ and $\Pi_2$ denote the sets of deterministic policies for $\A_1$ and $\A_2$, respectively.\footnote{We assume here that $\A_2$ responds with deterministic policies in order to keep the proof as comprehensible as possible. However, this assumption can be dropped as we can still give a finite partition of $\Delta(\S)$ when $\A_2$ also responds with optimal stochastic policies.} Note that both $\Pi_1$ and $\Pi_2$ are finite as we assumed finite action spaces $A_1$ and $A_2$. Let $2^{\Pi_2}$ denote the power set of $\Pi_2$. For $\pi^1 \in \Pi_1$ and $\bar \Pi_2 \in 2^{\Pi_2}$, we define 
\begin{align*}
    B(\pi^1, \bar \Pi_2) = \{ \r \in \Delta(\S) : \bar \Pi_2 = \optPi_2(\r, \pi^1)\}.
\end{align*}
The set $B(\pi^1, \bar \Pi_2)$ thus describes the reward functions that make the policies in $\bar \Pi_2$ optimal in response to $\pi^1$. Indeed, for any fixed $\pi^1 \in \Pi_1$, the collection $\mathcal{B}(\pi^1) = \{ B(\pi^1, \bar \Pi_2) : \bar \Pi_2 \in 2^{\Pi_2}\}$ forms a finite partition of $\Delta(\S)$
\begin{align*}
    {\bigcup}_{\bar \Pi_2 \in 2^{\Pi_2}} B(\pi^1, \bar \Pi_2) = \Delta(\S),
\end{align*}
as for any $\r \in \Delta(\S)$ there always exists at least one deterministic optimal policy in the MDP $(\S, A_2, \trans_{\pi^1}, \r, \gamma)$ \citep{puterman2014markov}. In other words, for any $\pi^1 \in \Pi_1$, we partition $\Delta(\S)$ into sets that induce the same set of optimal responses to $\pi^1$. 
Naturally, due to $\mathcal{B}(\pi^1)$ being a {\em finite} partition of $\Delta(\S)$ for any $\pi^1$, the Lebesgue-measure for all but finitely many $B(\pi^1, \bar \Pi_2)$ must be larger than some constant $\varepsilon>0$.  

We now show that if $\r_t \notin \mathcal{O}$, then with positive probability the set of feasible solutions is decreased by at least~$\varepsilon$. 
If $\r_t \notin \mathcal{O}$, then Algorithm~\ref{algorithm:full_info_opt_behaviour} computes an optimal commitment strategy $\pi^1_{t+1}\in \optPi_1(\r_t)$ (by computing the optimal joint policy under $\r_t$, see Lemma~\ref{lemma:optimal_joint_policy_optimal}), which may be suboptimal under $\truereward$, i.e.\ $\pi^1_{t+1} \notin \optPi_1(\truereward)$. 

Now, if $\pi^1_{t+1}$ is suboptimal under $\truereward$, then by assumption\footnote{Note that if $\pi^1$ is a suboptimal commitment strategy, then the joint policy $(\pi^1, \pi^2)$ is suboptimal for any $\pi^2$.} there exists an optimal response $\pi^2_{t+1} \in \optPi_2(\truereward, \pi^1_{t+1})$ that is suboptimal under $\r_t$, i.e.\ $\pi^2_{t+1} \notin \optPi_2(\r_t, \pi^1_{t+1})$. 
Recall that by our assumption $\A_2$ selects its response uniformly at random from $\optPi_2(\truereward, \pi^1_{t+1})$. Since $\optPi_2(\truereward, \pi^1_{t+1})$ is finite, $\A_2$ will respond with $\pi^2_{t+1} \notin \optPi_2(\r_t, \pi^1_{t+1})$ with positive probability. 

In that case, after observing $\pi^2_{t+1}$ the reward function $\r_t$ cannot be feasible anymore, i.e.\ $\r_t \notin \feasible_{t+1}$. 
In addition, we then also have that $B(\pi^1_{t+1}, \optPi_2(\r_t, \pi^1_{t+1})) \cap \feasible_{t+1} = \emptyset$, as all reward functions in $B(\pi^1_{t+1}, \optPi_2(\r_t, \pi^1_{t+1}))$ induce the same optimal responses $\optPi_2(\r_t, \pi^1_{t+1})$ and $\pi^2_{t+1}$ is not in $\optPi_2(\r_t, \pi^1_{t+1})$. 
In other words, any $\r \in B(\pi^1_{t+1}, \optPi_2(\r_t, \pi^1_{t+1}))$ cannot satisfy the constraints of Corollary~\ref{corollary:feasible_set}.

As seen before, for all but finitely many $ \bar \Pi_2 \in 2^{\Pi_2}$ we have $\lambda(B(\pi^1, \bar \Pi_2)) > \varepsilon$, where $\lambda$ is the Lebesgue-measure. As a consequence, if $\r_t \notin \mathcal{O}$, then we have for all but finitely many cases that $\lambda (\feasible_{t+1}) \leq \lambda(\feasible \setminus B(\pi^1_{t+1}, \optPi_2(\r_t, \pi^1_{t+1})) \leq \lambda(\feasible_t) - \varepsilon$. 

Therefore, every time when Algorithm~\ref{algorithm:full_info_opt_behaviour} chooses a reward function $\r_t \notin \mathcal{O}$\footnote{Recall that the special case of the constant reward function (which is not in $\mathcal{O}$) can be ignored.} inducing a suboptimal commitment strategy, (with positive probability) $\r_t$ will not be feasible anymore and (except for finitely many times) we reduce the size of the feasible set by at least the constant amount $\varepsilon$. As a result, the feasible set of reward function $\feasible_t$ will eventually become smaller than or equal to $\mathcal{O}$, i.e.\ $\feasible_t \subseteq \mathcal{O}$. Consequently, Algorithm~\ref{algorithm:full_info_opt_behaviour} will almost surely converge to choosing only reward function in $\mathcal{O}$ and will thus only play optimal commitment strategies.  
\end{proof}

\section{Proofs for Section~\ref{sec:suboptimal}}\label{appendix:proofs_suboptimal}

\subsection{Proof of Theorem~\ref{lemma:no_dominating_policy}}
\begin{customthm}{3}
If $\pi^2 (\actb \mid s) \propto f(Q_{\pi^1}^*(s, \actb))$ for any strictly increasing function $f:[0, \infty) \to [0, \infty)$, then a dominating commitment strategy for agent $\A_1$ may not exist.
\end{customthm}

\begin{proof}[Proof of Theorem \ref{lemma:no_dominating_policy}]
We provide a problem instance for which there exists no dominating policy for any strictly increasing function $f:[0, \infty) \to [0, \infty)$. Consider the two-agent MDP in Figure~\ref{figure:counterexample_mdp}. We omitted consecutive transitions in Figure~\ref{figure:counterexample_mdp}, but assume that states $s_1, s_3$, and $s_4$ lead to the same (terminal) state with probability one.

\colorlet{DarkGreen}{green!25!black!75}

\tikzset{node distance=0.5cm and 2.75cm, 
every state/.style={ 
semithick,
fill=gray!10
},
initial text={}, 
double distance=5pt, 
every edge/.style={ 
draw,
auto,
semithick}}

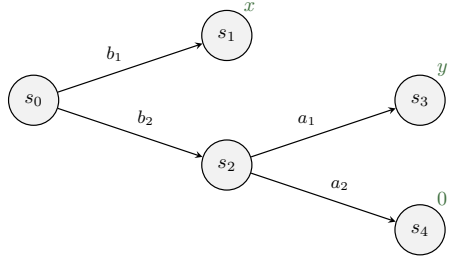
\begin{figure}
\centering 
\resizebox{.35\textwidth}{!}{
\begin{tikzpicture}[>=stealth]
\node[state] (s1) {$s_0$};
\node[state, above right=of s1, label={[label distance = -0.1cm]60:\textcolor{DarkGreen}{$x$}}] (s2) {$s_1$};
\node[state, below right=of s1] (s3) {$s_2$};
\node[state, above right =of s3, label={[label distance = -0.1cm]60:\textcolor{DarkGreen}{$y$}}] (s4) {$s_3$};
\node[state, below right =of s3, label={[label distance = -0.1cm]60:\textcolor{DarkGreen}{$0$}}] (s5) {$s_4$};

\draw[->] (s1) edge node {\small$b_1$} (s2);
\draw[->] (s1) edge node {\small$b_2$} (s3);
\draw[->] (s3) edge node {\small$a_1$} (s4);
\draw[->] (s3) edge node {\small$a_2$} (s5);
\end{tikzpicture}}
\caption{Counterexample. All transitions are deterministic. The action of $\A_2$ alone determines the transitions from state $s_0$ to states $s_1$ and $s_2$, whereas in state $s_2$ only the action of $\A_1$ affects transitions. The green $x$, $y$ and $0$ denote the rewards obtained in states $s_1$, $s_3$, and $s_4$, respectively. States $s_0$ and $s_2$ yield zero reward.}
\label{figure:counterexample_mdp}
\end{figure}

We will show that the strictly optimal policy when in state $s_0$ is strictly suboptimal when in state $s_2$ for specific choices of $x > 0$ and $y > 0$. For simplicity, we omit the discount factor $\gamma$ in the following. 

$\A_1$ only influences transitions in state $s_2$ and thus there are essentially only two deterministic policies for $\A_1$, namely $\pi^1$ with $\pi^1(s_2) = a_1$ and $\bar \pi^1$ with $\bar \pi^1(s_2) = a_2$. Since $y > 0$, action $a_1$ is optimal in state $s_2$ and so $\pi^1$ is the optimal policy in state $s_2$. We now show that there exists $x, y > 0$ such that $V_{\pi^1, \pi^2(\pi^1)} (s_0) < V_{\bar \pi^1, \pi^2(\bar \pi^1)} (s_0)$, i.e.\ $\bar \pi^1$ is strictly better than $\pi^1$ when in state $s_0$.

Omitting the discount factor, we have $Q_{\pi^1}^*(s_0, b_1) = x$ and $Q_{\pi^1}^*(s_0, b_2) = y$ as well as $Q_{\bar \pi^1}^*(s_0, b_1) = x$ and $Q_{\bar \pi^1}^*(s_1, b_2) = 0$. We therefore want to show that there exist $x, y > 0$ such that
\begin{align*}
    V_{\pi^1, \pi^2(\pi^1)} (s_1) &  = x \, \frac{f(x)}{f(x) + f(y)} + y \, \frac{f(y)}{f(x) + f(y)} \\
    & <  x \, \frac{f(x)}{f(x) + f(0)}  = V_{\bar \pi^1, \pi^2(\bar \pi^1)} (s_1).
\end{align*}
\vspace{1cm}

Suppose the contrary is true. Then, for all $x, y > 0$ it must hold that
\begin{align}
\label{equation:proof_boltzmann_1}
    \quad x \, \frac{f(x)}{f(x) + f(y)} + y \, \frac{f(y)}{f(x) + f(y)} & \geq  x \, \frac{f(x)}{f(x) + f(0)} \nonumber \\
    \Leftrightarrow \ x \, \Big( \frac{f(x)}{f(x) + f(0)} - \frac{f(x)}{f(x) + f(y)}  \Big) & \leq y \, \frac{f(y)}{f(x) + f(y)} \nonumber \\
    \Leftrightarrow \quad \qquad x f(x) \, \Big( \frac{f(x) + f(y)}{f(x) + f(0)} - 1\Big) & \leq y f(y) \nonumber \\
    \Leftrightarrow \qquad x f(x) \, \frac{f(y) - f(0)}{f(x) + f(0)} \leq y f(y).
\end{align}
Note that $f(y) - f(0) > 0$, since $f$ is strictly increasing. Now, for any fixed $y> 0$, we have that $f(x) \frac{f(y) - f(0)}{f(x) + f(0)} \to 1$ as $x \to \infty$, and the expression is therefore bounded from below by some positive value for $x$ sufficiently large. Hence, for any fixed $y$ there exists an $x > 0$ such that \eqref{equation:proof_boltzmann_1} does not hold. This shows that in fact for any $y > 0$ there exists $x > 0$ such that $V_{\pi^1, \pi^2(\pi^1)} (s_0) < V_{\bar \pi^1, \pi^2(\bar \pi^1)}(s_0)$, whereas we have seen before that $V_{\pi^1, \pi^2(\pi^1)} (s_2) > V_{\bar \pi^1, \pi^2(\bar \pi^1)}(s_2)$. Hence, no dominating commitment strategy exists for the MDP depicted in Figure~\ref{figure:counterexample_mdp}.
\end{proof}

\subsection{Proof of Lemma~\ref{lemma:eps_greedy_dominating}}
\begin{customlemma}{3}
If $\A_2$ plays $\varepsilon$-greedy responses, a dominating commitment strategy for $\A_1$ may not exist. 
\end{customlemma}

\noindent We define an $\varepsilon$-greedy response to a policy $\pi^1$ as the policy
\begin{align*}
    \pi^2_\varepsilon (s, \pi^1) =
    \begin{cases}
    \pi^2_*(s, \pi^1) & w.p.\ 1-\varepsilon \\
    \mathcal{U}(A_2) & w.p.\ \varepsilon,
    \end{cases}
\end{align*}
where $\varepsilon \in [0, 1]$, $\pi^2_*(\pi^1)$ is an optimal response to $\pi^1$, and $\mathcal{U}(A_2)$ the uniform distribution over $A_2$. 

\colorlet{DarkGreen}{green!25!black!75}

\tikzset{node distance=0.75cm and 2.75cm, 
every state/.style={ 
semithick,
fill=gray!10},
initial text={}, 
double distance=5pt, 
every edge/.style={ 
draw,
auto,
semithick}}

\begin{figure}[t]
\centering 

\resizebox{.45\textwidth}{!}{
\begin{tikzpicture}[>=stealth]
\node[state] (s0) {$s_0$};
\node[state, above right=of s0, label={[label distance = -0.1cm]60:\textcolor{DarkGreen}{$+1$}}] (s1) {$s_1$};
\node[state, below right=of s0] (s2) {$s_2$};
\node[state, above right =of s2, label={[label distance = -0.1cm]60:\textcolor{DarkGreen}{$+2$}}] (s3) {$s_3$};
\node[state, right =of s2, label={[label distance = -0.2cm]60:\textcolor{DarkGreen}{$- \frac{2(2-\delta)(1-\varepsilon/2)}{\varepsilon}$}}] (s4) {$s_4$};
\node[state, below right =of s2, label={[label distance = -0.1cm]60:\textcolor{DarkGreen}{$0$}}] (s5) {$s_5$};

\draw[->] (s0) edge node {\small$b_1$} (s1);
\draw[->] (s0) edge node {\small$b_2$} (s2);
\draw[->] (s2) edge node {\small$(a_1, b_1)$} (s3);
\draw[->] (s2) edge node {\small$(a_1, b_2)$} (s4);
\draw[->] (s2) edge node {\small$a_2$} (s5);

\end{tikzpicture}}
\caption{Counterexample for $\varepsilon$-greedy responses. All transitions are deterministic. The actions from agent $\A_2$ alone determine the transitions from state $s_0$ to states $s_1$ and $s_2$. The green numbers denote the rewards obtained in the respective states. States $s_0$ and $s_2$ yield zero reward.}
\label{fig: counterexample-lem3}
\end{figure}
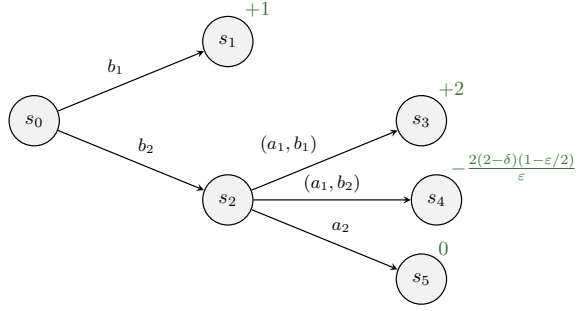

\begin{proof}[Proof of Lemma \ref{lemma:eps_greedy_dominating}]
We prove Lemma~\ref{lemma:eps_greedy_dominating} by means of the counterexample shown in Figure~\ref{fig: counterexample-lem3}. For convenience, we omit the discount factor here and assume that states $s_1$, $s_3$, $s_4$, and $s_5$ lead to some terminal state with probability one. There are two (deterministic) policies $\A_1$ can commit to: $\pi^1(s_2) = a_1$ and $\bar \pi^1(s_2) = a_2$.  

For notational convenience, we write $\smash{V_{a_1}(s) \defn V_{\pi^1, \pi^2_\varepsilon(\pi^1)} (s)}$ and $\smash{V_{a_2}(s) \defn V_{\bar \pi^1, \pi^2_\varepsilon(\bar \pi^1)} (s)}$. Note that if $\A_1$ commits to $\pi^1$, the optimal action for $\A_2$ in state $s_0$ is to play $b_2$ followed by $b_1$ in state $s_2$. Recall that $\A_2$ is assumed to play $\epsilon$-greedy, i.e.\ in any state, $\A_2$ plays the optimal response with probability $(1-\epsilon)$ and with probability $\epsilon$ selects an action uniformly at random. 
As a result, we have 
\begin{align*}
    V_{a_1}(s_2) & = 2(1-\varepsilon/2) - (2-\delta) (1-\varepsilon/2) = \delta(1-\varepsilon/2) > 0 \\ V_{a_1} (s_0) & = \delta (1-\varepsilon/2)^2 + \varepsilon/2 .
\end{align*}
On the other hand, if $\A_1$ commits to $\bar \pi^1$, it is optimal for $\A_2$ to play $b_1$ in state $s_0$, i.e.\ $V_{a_2}(s_0) = (1-\varepsilon/2)$. We observe that in state $s_2$, playing $a_1$ is optimal as $V_{a_1}(s_2) > V_{a_2} (s_2) = 0$. However, we also have 
    $V_{a_1} (s_0) - V_{a_2} (s_0) = \varepsilon + \delta(1-\varepsilon/2)^2 - 1.$
As we can choose $\delta$ arbitrarily close to $0$, we then have $V_{a_1}(s_0) < V_{a_2}(s_0)$ for some $\delta > 0$. Thus, $\pi^1$ is strictly optimal in state $s_2$, whereas $\bar \pi^1$ is strictly optimal in state $s_0$. Therefore, there exists no dominating commitment strategy for the MDP in Figure~\ref{fig: counterexample-lem3}.   

\end{proof}

\vspace{-.2cm}

\newpage 
\section{Approximate Algorithms for Cooperative Stackelberg Games with Suboptimal Followers}\label{appendix:approx_algorithms}

In this section, we first describe approximate value iteration algorithms for Boltzmann-rational policies as well as $\varepsilon$-greedy policies. We then evaluate both algorithms in the Maze-Maker and Random MDP environment for different levels of rationality (i.e.\ optimality) of agent $\A_2$.

\subsection{$\A_2$ responds with Boltzmann-rational policies}
Theorem~\ref{lemma:no_dominating_policy} states that no dominating commitment strategy may exist when agent $\A_2$ responds with Boltzmann-rational policies. In its essence, the approximate value iteration algorithm for Boltzmann-rational responses described in Algorithm~\ref{algorithm:value_iteration_boltzmann} acts as if a dominating commitment strategy does exist and could therefore converge to suboptimal solutions. However, it aims to account for the suboptimality of agent $\A_2$ and keeps track of two sets of value functions: one value function corresponding to what $\A_1$ believes to be the actual value given that $\A_2$ plays Boltzmann, and one value function that aims to approximate the belief of agent $\A_2$ about the value of the game.

\begin{algorithm}[H]
\caption{Approximate Value Iteration for Boltzmann-Rational Responses}
\label{algorithm:value_iteration_boltzmann}
\begin{algorithmic}[1]
\STATE \textbf{initialise} $V$ and $\hat V$
\STATE \textbf{repeat} until $V$ converges:
\FOR{$s \in \S$}
\FOR{$(a,b) \in A_1 \times A_2$}
\STATE $\hat Q(s,a, b) = \r(s) + \gamma \sum_{s'} \trans(s' | s, a, b) \hat V(s')$
\STATE $\pi^2(b\mid s, a) = \exp(\beta \hat Q(s, a, b) ) / Z$
\ENDFOR
\STATE $\pi^1(s) = \argmax_a \sum_{s'} \E_{b \sim \pi^2} [\trans(s' | s, a, b)] V(s')$ 
\STATE $V(s) = \r(s) + \nolinebreak \gamma \sum_{s'} \E_{b \sim \pi^2} [\trans(s' | s, \pi^1(s), b)] V(s')$ 
\STATE $\hat V (s) = \max_b \hat Q (s, \pi^1(s), b)$
\ENDFOR
\end{algorithmic}
\end{algorithm}

\subsection{{$\A_2$ responds with $\varepsilon$}-greedy policies}\label{appendix:greedy_policies}
The problem of planning with an agent that responds with $\varepsilon$-greedy policies is similar to the setting considered by \citet{dimitrakakis2017multi} in the sense that $\A_2$ plans with the original transition kernel $\trans$ (by computing an optimal response $\pi^2_*(\pi^1)$), whereas $\A_1$ plans (or should plan) with the ``correct'' transition kernel
\begin{align*}
    \trans_\varepsilon(\cdot \mid s, a, b) \equiv \varepsilon \trans (\cdot \mid s, a, \mathcal{U}(A_2)) + (1-\varepsilon) \trans (\cdot \mid s, a, b).
\end{align*} 
In particular, note that $\varepsilon \trans(s' \mid s, a, \mathcal{U}(A_2))$ is independent of the choice of $b$. 
Algorithm \ref{algorithm:value_iteration_greedy} approximately solves the planning problem. 
While Lemma~\ref{lemma:eps_greedy_dominating} states that a dominating commitment policy need not exist, Algorithm~\ref{algorithm:value_iteration_greedy} simply acts as if one exists. Similarly to Algorithm~\ref{algorithm:value_iteration_boltzmann}, the idea is to maintain two value functions, one representing the value from the perspective of $\A_1$ and the other the value from the perspective of $\A_2$.

\begin{algorithm}[H]
\caption{Approximate Value Iteration for $\varepsilon$-Greedy Responses}
\label{algorithm:value_iteration_greedy}
\begin{algorithmic}[1]
\STATE \textbf{initialise} $V$ and $\hat V$
\STATE \textbf{repeat} until $V$ converges:
\FOR{$s \in \S$}
\FOR{$a \in A_1$}
\STATE $\pi^2(s, a) = \argmax_b \sum_{s'} \trans(s' | s, a, b) \hat V(s')$
\ENDFOR
\STATE $\pi^1(s) = \argmax_a \sum_{s'} \E_{b \sim \pi^2} [ \trans_{\varepsilon} (s' | s, a, b)] V(s')$
\STATE $V(s) = \r(s) + \nolinebreak \gamma \sum_{s'} \E_{b \sim \pi^2} [ \trans_{\varepsilon} (s'| s, \pi^1(s), b)]V(s')$
\STATE $\hat V(s) = \r(s) + \gamma \sum_{s'} \E_{ b \sim \pi^2} [ \trans (s'| s, \pi^1(s), b)] \hat V(s') $
\ENDFOR
\end{algorithmic}
\end{algorithm}

\subsection{Evaluation of Algorithm~\ref{algorithm:value_iteration_boltzmann} and Algorithm~\ref{algorithm:value_iteration_greedy}}\label{appendix:approx_evaluation}
In this section, we empirically evaluate our approximate value iteration algorithms for Boltzmann-rational responses (Algorithm~\ref{algorithm:value_iteration_boltzmann}) and $\varepsilon$-greedy responses (Algorithm~\ref{algorithm:value_iteration_greedy}). 
We compare Algorithm~\ref{algorithm:value_iteration_boltzmann} and Algorithm~\ref{algorithm:value_iteration_greedy} in the Maze-Maker and Random MDP environment against committing $\A_1$'s part of the optimal joint policy. Note that by Lemma~\ref{lemma:optimal_joint_policy_optimal}, committing $\A_1$'s part of an optimal joint policy is optimal when $\A_2$ responds optimally.  

In both environments, we test the performance of our algorithms for different levels of rationality of $\A_2$. For the case of Boltzmann-rational responses (Figure~\ref{figure:appendix_boltz_approx}), we increase the inverse temperature of agent $\A_2$, which corresponds to the rationality (i.e.\ optimality) of $\A_2$. We see in Figure~\ref{figure:appendix_boltz_approx} that Algorithm~\ref{algorithm:value_iteration_boltzmann} consistently outperforms playing $\A_1's$ part of the optimal joint policy. In particular, the more suboptimal $\A_2$ is playing (lower values of $\beta$), the larger the advantage of Algorithm~\ref{algorithm:value_iteration_boltzmann} is compared to playing $\A_1$'s part of the optimal joint policy. If $\A_2$ responds almost optimally ($\beta = 20$), the performance of both approaches is almost identical as expected.

For the case of $\varepsilon$-greedy responses (Figure~\ref{figure:appendix_greedy_approx}), we increase the rationality of $\A_2$ by decreasing the probability $\varepsilon$ of random actions. Figure~\ref{figure:appendix_greedy_approx} shows that Algorithm~\ref{algorithm:value_iteration_greedy} outperforms playing the optimal joint policy for all values of $\varepsilon$ in both environments. In particular, for $\varepsilon = 0$ agent $\A_2$ responds optimally and both approaches play an optimal commitment strategy.

\begin{figure}[H]
    \subfigure[Maze-Maker]{%
      \includegraphics[width=0.45\linewidth]{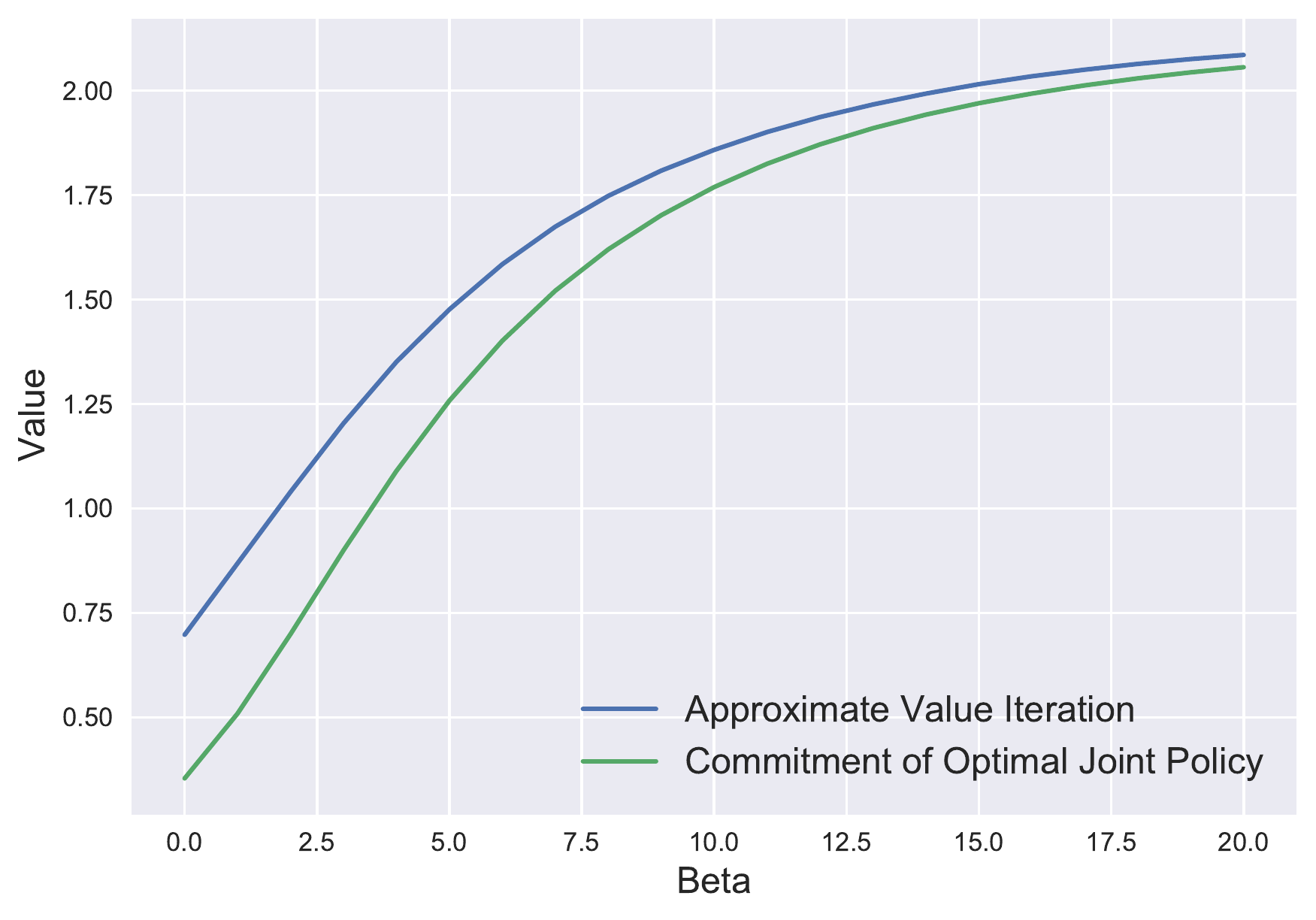}
    }
    \hspace{1cm}
    \subfigure[Random MDPs]{%
      \includegraphics[width=0.45\linewidth]{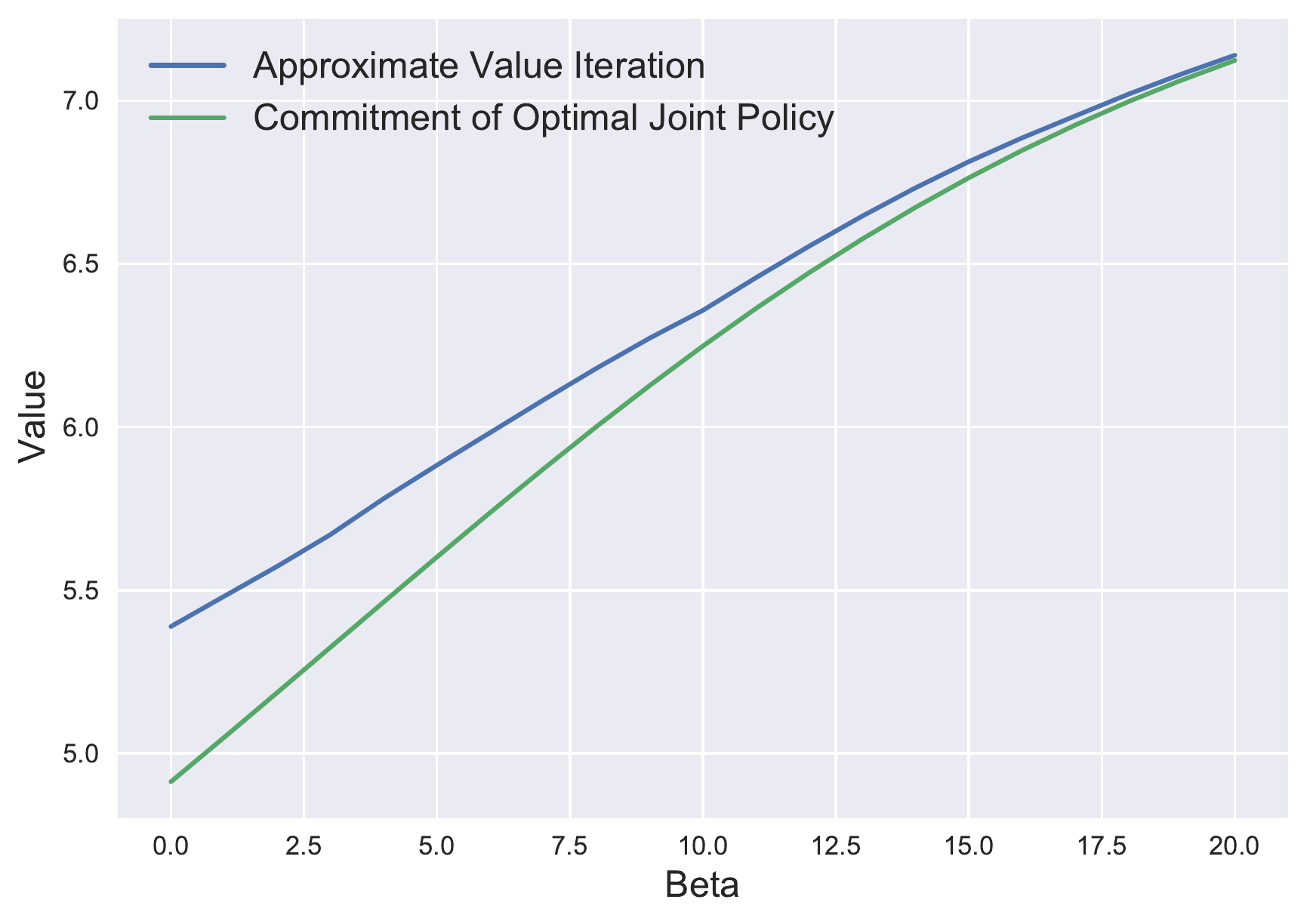}
    }
    \caption{Evaluation of Approximate Value Iteration for Boltzmann-Rational Responses (Algorithm~\ref{algorithm:value_iteration_boltzmann}) in the Maze-Maker and Random MDP environment for increasing values of $\beta$. The green line describes the return of playing $\A_1$'s part of an optimal joint policy.}
    \label{figure:appendix_boltz_approx}
\end{figure}

\begin{figure}[H]
    \subfigure[Maze-Maker]{%
      \includegraphics[width=0.45\linewidth]{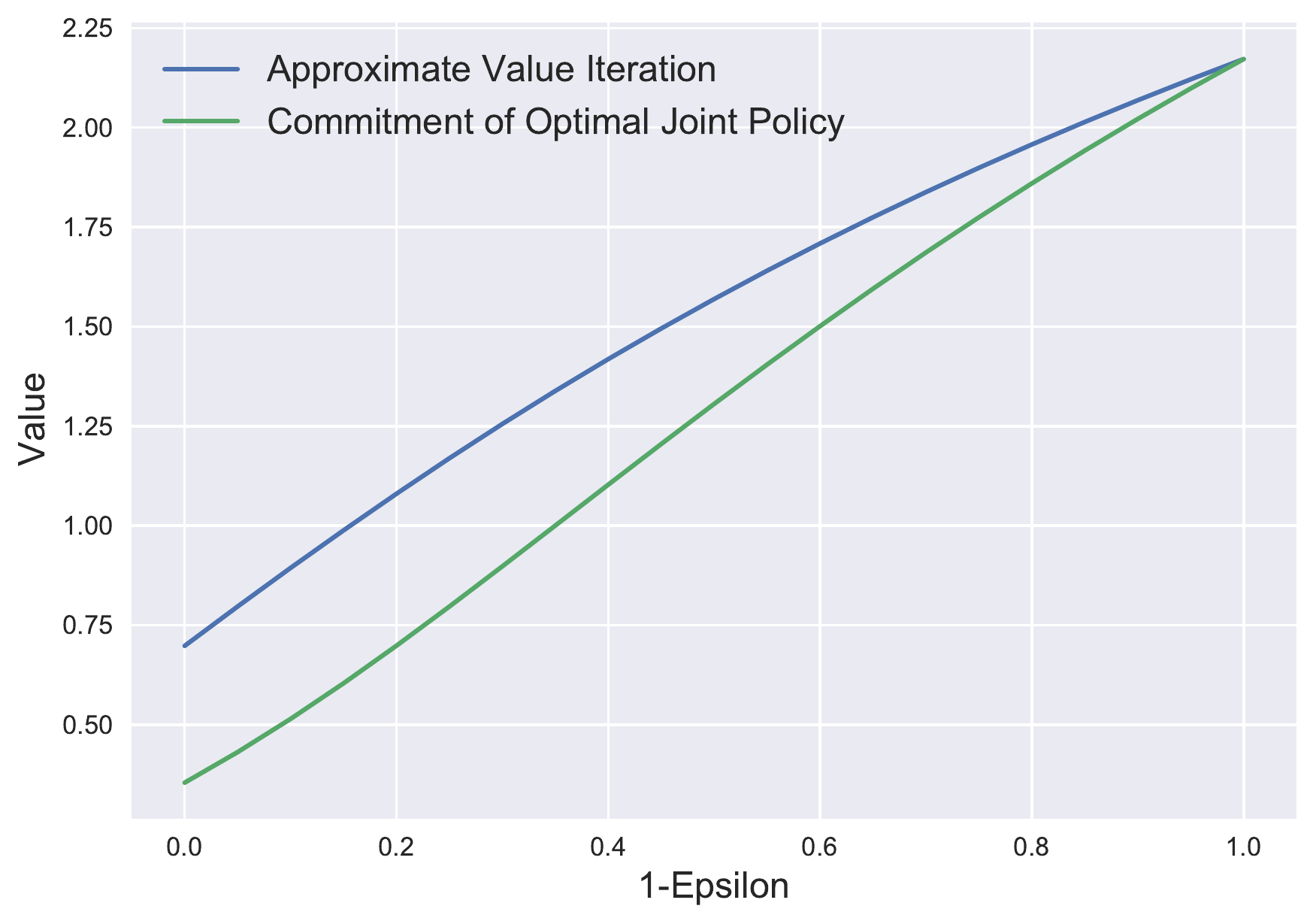}
    }
    \hspace{1cm}
    \subfigure[Random MDPs]{%
      \includegraphics[width=0.45\linewidth]{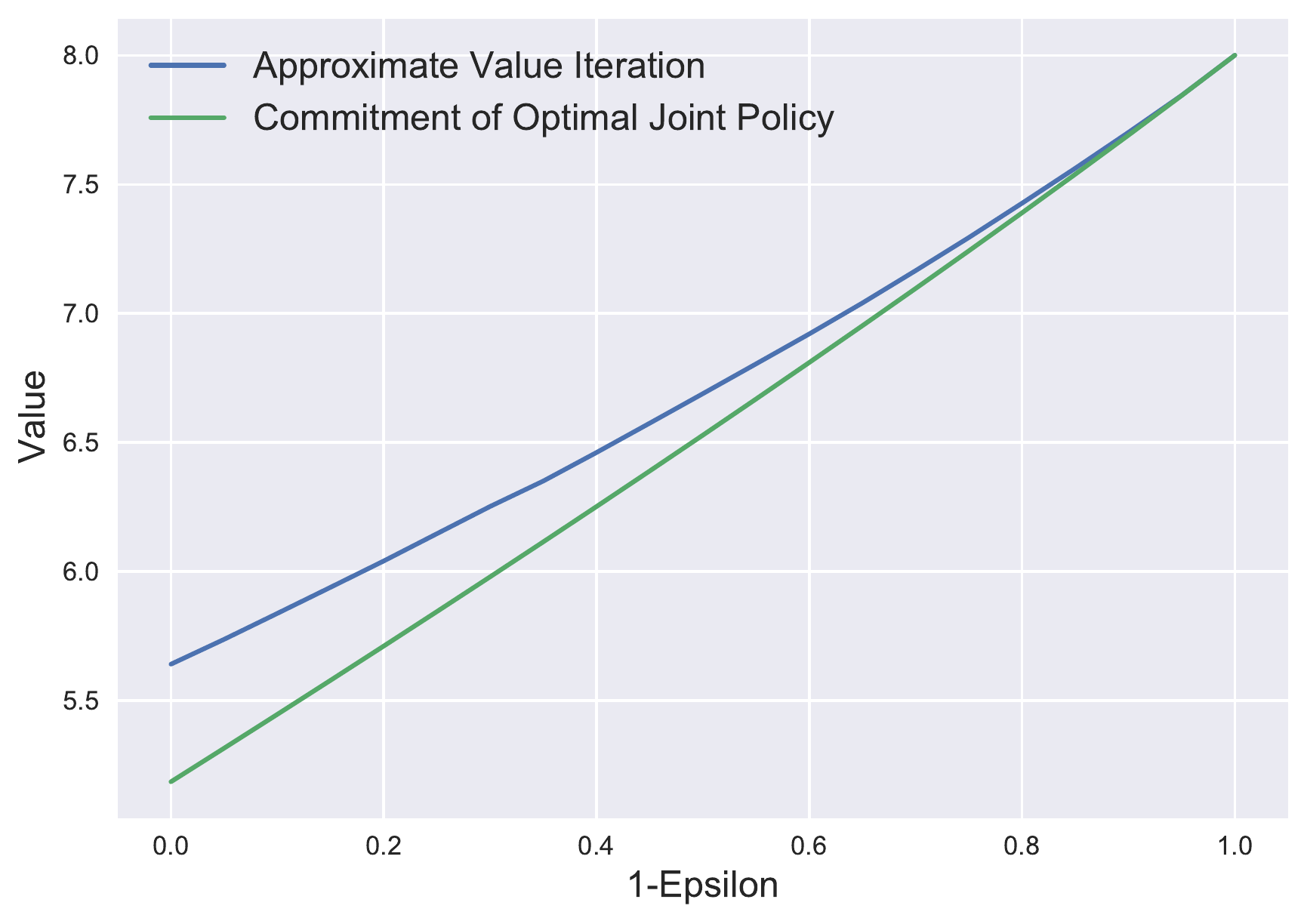}
    }
    \caption{Evaluation of Approximate Value Iteration for $\varepsilon$-Greedy Responses (Algorithm~\ref{algorithm:value_iteration_greedy}) in the Maze-Maker and Random MDP environment for decreasing values of $\varepsilon$. The green line describes the return of playing $\A_1$'s part of an optimal joint policy.}
    \label{figure:appendix_greedy_approx}
\end{figure}

\newpage 

\section{Experimental Details}\label{appendix:details_experiments}
The experiments were carried out on a virtual machine with $32$ CPUs, $60$GB RAM, and CentOS Linux 8 operating system. The experiments were implemented in Python 3.7 and the libraries matplotlib 3.2.1, numpy 1.20.1, and scipy 1.6.2 (for the linear program) were used. The code is available at \href{https://github.com/InteractiveIRL/src}{https://github.com/InteractiveIRL/src}. 

For the case of suboptimal responses and partial information, we assume that $\A_2$ responds with Boltzmann-rational policies with inverse temperature $\beta = 10$ in both environments. We assume that the inverse temperature, that is, the optimality of the second agent, is \emph{unknown} to the learner and must therefore be inferred. We simulate the partial information setting by generating trajectories according to policies $\pi^1_t$ and $\pi^2_t$ in episode $t$, where the length of the episode is random. More precisely, we let an episode end with probability $1-\gamma = 0.1$ each time step.\footnote{We impose a minimal trajectory length of $2$ time steps to prevent vacuous episodes.}

\subsection{Bayesian Interactive IRL}
We employ a Bayesian approach using the Metropolis-Hastings algorithm to sample from the posterior, with a uniform prior on the reward function and an exponential prior on the inverse temperature. Our approach is specified in Algorithm~\ref{algorithm:Bayesian_IRL_MCMC}.
As a proposal distribution for the reward function, we consider a discretisation of the $|\S|$-dimensional unit simplex $\Delta(\S)$ with step size $\delta$, similarly to \citep{ramachandran2007BIRL}. The Metropolis-Hastings algorithm then generates a Markov chain on the discretised simplex. To sample from the posterior given the last candidate $\r_{k-1}^t$ then means to choose a neighbour in the discretised simplex. This type of proposal distribution, which we refer to as Simplex Walk, proved to be a more efficient and robust sampling strategy as other proposal distributions (e.g.\ Dirichlet distributions). 
For the inverse temperature, we use a Gamma proposal distribution. 
Similarly to Algorithm~\ref{algorithm:full_info_opt_behaviour}, we play greedily with respect to our current estimate of the true reward function. After sampling $K$ times from the posterior, we take the empirical means $\bar \r_t$ and $\bar \beta_t$ and compute an approximately optimal commitment strategy under $\bar \r_t$ and $\bar \beta_t$ by means of Algorithm~\ref{algorithm:value_iteration_boltzmann}. As a natural burn-in we use the last sampled reward and inverse temperature from episode $t$ as the first candidate in episode $t+1$. 

\begin{algorithm}[H]
\caption{Bayesian Interactive IRL via Simplex Walk}
\label{algorithm:Bayesian_IRL_MCMC}
\begin{algorithmic}[1]
\STATE \textbf{input:} $(\S, A_1, A_2, \trans, \gamma)$, priors $\P(\r)$, $\P(\beta)$, proposal distributions $g_1$, $g_2$, sample size~$K$
\STATE \textbf{initialise:} choose $\pi^1_1$ uniformly at random, sample $\r_0^0 \sim \P(\r)$ and $\beta_0^0 \sim \P(\beta)$
\FOR{$t = 1, 2, \dots $}
\STATE commit to policy $\pi^1_t$
\STATE observe trajectory $\tau_{t}$
\STATE // \texttt{sample from posterior via Metropolis-Hastings}
\FOR{$k=1, \dots, K$}
\STATE sample $\r \sim g_1 (\cdot \mid \r_{k-1}^t)$
\STATE sample $\beta \sim g_2(\cdot \mid \beta^t_{k-1})$
\STATE compute $p = \frac{\P ((\pi^1_1, \tau_1), \dots, (\pi^1_t, \tau_t) \mid \r, \beta) \P(\r) \P(\beta)} {g_1(\r \mid \r_{k-1}^t) g_2(\beta \mid \beta^t_{k-1})}$
\STATE \textbf{w.p.\ $\min\{ 1, \frac{p}{p_{k-1}}\}$:} $\r^t_k = \r$, $\beta^t_k = \beta$, $p^t_k = p$ 
\STATE \textbf{else:} $\r^t_k = \r_{k-1}$, $\beta^t_k = \beta_{k-1}$, $p^t_k = p^t_{k-1}$
\ENDFOR
\STATE set $\r^{t+1}_{0} = \r^t_K$, $\beta^{t+1}_0 = \beta^t_K$, $p^{t+1}_0 = p^t_K$
\STATE calculate mean reward function $\bar \r_t$ and beta $\bar \beta_t$
\STATE compute $\pi^1_{t+1}$ under $\bar \r_t$ and $\bar \beta_t$ via Algorithm~\ref{algorithm:value_iteration_boltzmann}
\ENDFOR
\end{algorithmic}
\end{algorithm}

\subsection{Environments: Maze-Maker}
In the Maze-Maker environment, agents $\A_1$ and $\A_2$ jointly control a cart in a $7\times7$ grid world. In this grid world, the doors leading from one cell to the neighbouring ones are locked. However, $\A_1$ can unlock exactly two doors at any time step before they fall shut again. 
$\A_2$ can attempt to move the cart through a door to a neighbouring cell. However, when the door is locked, the cart stays where it was. 
We assume that any attempted move of the cart succeeds with probability $0.8$ and that with probability $0.2$ the cart moves to a random neighbouring cell. 
Agents $\A_1$ and $\A_2$ are tasked with collecting three rewards of different value (+$1$, +$2$, +$3$), which are scattered in the grid world and disappear once collected. While $\A_2$ knows where the rewards are placed, $\A_1$ does not know their location. An illustration of the environment is given by Figure~\ref{figure:experiments_maze_maker}.
We model this environment as a two-agent MDP with $392$ states ($49 \times 8$) and discount factor $\gamma = 0.9$, where $\A_1$ has six actions (unlocking two out of four doors) and $\A_2$ four actions (attempting to move the cart North, East, South, West). 
As we consider a Stackelberg game, $\A_2$ knows beforehand which doors $\A_1$ will unlock. Therefore, $\A_1$ essentially selects a maze layout, which is communicated to $\A_2$ and through which $\A_2$ can move the cart.

\subsection{Details on Figure~\ref{figure:introductory_example}}\label{appendix:figure_1_explaination}
In Figure~\ref{figure:introductory_example}b, we assumed that $\A_2$ plays a Boltzmann-rational policy with inverse temperature $\beta = 10$. For simplicity and proper comparison, we assume that we can observe the fully specified Boltzmann policy played by $\A_2$ in each of the mazes. We use an adaption if Bayesian IRL \citep{ramachandran2007BIRL} and display the mean reward function in Figure~\ref{figure:introductory_example}b, where the colour scale, i.e.\ colour transparency, is obtained from the mean reward function in a given cell. More precisely, we use the Metropolis-Hastings algorithm with uniform prior and a Dirichlet proposal to sample from the posterior distribution $\P (\r \mid (\pi^1, \pi^2))$, where $\pi^1$ describes the maze layout.


\section{Influence}
Prior work on two-agent cooperation has considered measurements of how much one agent can influence the transition probabilities. \citet{dimitrakakis2017multi} define the influence of agent $\A_1$ (analogously for $\A_2$) on the transition probabilities as 
\begin{align*}
    \influence (\A_1)& = \max_s \max_{\acta_1, \acta_2, \actb} \lVert \trans(\cdot \mid s, \acta_1, \actb) - \trans(\cdot \mid s, \acta_2, \actb)\rVert_1, 
\end{align*}
which has also been adopted by \citet{radanovic2019learning} and \citet{ghosh2019towards}. They use this definition of influence to bound the performance gap when the beliefs or the behaviour of the two agents are misaligned. In our setting, however, the influence of an agent also relates to the IRL problem and our capacity to solve it. In particular, if $\influence ({\A_1}) = 0$, agent $\A_1$ does not influence the transition probabilities and it is therefore irrelevant what actions $\A_1$ takes. In terms of the IRL problem, we are then in the typical single-agent setting as $\A_2$ can ignore the presence of agent $\A_1$. 
On the other hand, if $\influence (\A_2) = 0$, then $\A_2$ does not influence transitions at all and the IRL problem becomes intractable as $\A_2$'s actions yield no information about the underlying reward function.

\end{document}
